\def\eqref#1{equation~\ref{#1}}
\def\floor#1{\lfloor #1 \rfloor}
\def \lrv#1{\left\lVert #1 \right\rVert_2}
\def\1{\bm{1}}
\def\vzero{{\bm{0}}}
\def\vtheta{{\bm{\theta}}}
\def\vw{{\bm{w}}}
\def\vx{{\bm{x}}}
\def\vy{{\bm{y}}}
\def\vz{{\bm{z}}}
\DeclareMathAlphabet{\mathsfit}{\encodingdefault}{\sfdefault}{m}{sl}
\SetMathAlphabet{\mathsfit}{bold}{\encodingdefault}{\sfdefault}{bx}{n}
\newcommand{\E}{\mathbb{E}}
\newcommand{\R}{\mathbb{R}}
\newcommand{\sigmoid}{\sigma}
\setlist{leftmargin=4mm}
\definecolor{forestgreen}{RGB}{34,139,34}
\theoremstyle{plain}
\newtheorem{theorem}{Theorem}[section]
\newtheorem{lemma}[theorem]{Lemma}
\theoremstyle{definition}
\newtheorem{definition}[theorem]{Definition}
\theoremstyle{remark}
\icmltitlerunning{Deep Probability Estimation}
\begin{document}

\twocolumn[
\icmltitle{Deep Probability Estimation}

% It is OKAY to include author information, even for blind
% submissions: the style file will automatically remove it for you
% unless you've provided the [accepted] option to the icml2022
% package.

% List of affiliations: The first argument should be a (short)
% identifier you will use later to specify author affiliations
% Academic affiliations should list Department, University, City, Region, Country
% Industry affiliations should list Company, City, Region, Country

% You can specify symbols, otherwise they are numbered in order.
% Ideally, you should not use this facility. Affiliations will be numbered
% in order of appearance and this is the preferred way.
\icmlsetsymbol{equal}{*}
\begin{icmlauthorlist}
\icmlauthor{Sheng Liu}{equal,cds}
\icmlauthor{Aakash Kaku}{equal,cds}
\icmlauthor{Weicheng Zhu}{equal,cds}
\icmlauthor{Matan Leibovich}{equal,courant}
\icmlauthor{Sreyas Mohan}{equal,cds}
\icmlauthor{Boyang Yu}{cds}
\icmlauthor{Haoxiang Huang}{courant}
\icmlauthor{Laure Zanna}{cds,courant}
\icmlauthor{Narges Razavian}{langone}
\icmlauthor{Jonathan Niles-Weed}{cds,courant}
\icmlauthor{Carlos Fernandez-Granda}{cds,courant}

\end{icmlauthorlist}

\icmlaffiliation{cds}{Center for Data Science, New York University, New York, USA}
\icmlaffiliation{courant}{Courant Institute of Mathematical Sciences, New York University, New York, USA}
\icmlaffiliation{langone}{Department of Population Health \& Department of Radiology, NYU School of Medicine, New York, USA}

\icmlcorrespondingauthor{SL, AK, WZ, ML, SM}{\{shengliu, ark576, jackzhu, ml7557, sm7582\}@nyu.edu}
% \icmlcorrespondingauthor{Firstname2 Lastname2}{first2.last2@www.uk}

% You may provide any keywords that you
% find helpful for describing your paper; these are used to populate
% the "keywords" metadata in the PDF but will not be shown in the document
\icmlkeywords{Machine Learning, ICML}

\vskip 0.3in
]

% this must go after the closing bracket ] following \twocolumn[ ...

% This command actually creates the footnote in the first column
% listing the affiliations and the copyright notice.
% The command takes one argument, which is text to display at the start of the footnote.
% The \icmlEqualContribution command is standard text for equal contribution.
% Remove it (just {}) if you do not need this facility.

%\printAffiliationsAndNotice{}  % leave blank if no need to mention equal contribution
\printAffiliationsAndNotice{\icmlEqualContribution} % otherwise use the standard text.

\begin{abstract}
Reliable probability estimation is of crucial importance in many real-world applications where there is inherent (aleatoric) uncertainty. Probability-estimation models are trained on observed outcomes (e.g. whether it has rained or not, or whether a patient has died or not), because the ground-truth probabilities of the events of interest are typically unknown. The problem is therefore analogous to binary classification, with the difference that the objective is to estimate probabilities rather than predicting the specific outcome. This work investigates probability estimation from high-dimensional data using deep neural networks. There exist several methods to improve the probabilities generated by these models but they mostly focus on model (epistemic) uncertainty. For problems with inherent uncertainty, it is challenging to evaluate performance without access to ground-truth probabilities. To address this, we build a synthetic dataset to study and compare different computable metrics. We evaluate existing methods on the synthetic data as well as on three real-world probability estimation tasks, all of which involve inherent uncertainty: precipitation forecasting from radar images, predicting cancer patient survival from histopathology images, and predicting car crashes from dashcam videos. \textcolor{black}{We also give a theoretical analysis of a model for high-dimensional probability estimation which reproduces several of the phenomena evinced in our experiments.} Finally, we propose a new method for probability estimation using neural networks, which modifies the training process to promote output probabilities that are consistent with empirical probabilities computed from the data. The method outperforms existing approaches on most metrics on the simulated as well as real-world data.\footnote{Code available at \url{https://jackzhu727.github.io/deep-probability-estimation/}.} 
\end{abstract}

\section{Introduction}

% Introduce prob estimation

% Prob estimation challenging, since we only have labels. Similar to classification, but different. 

% Introduce calibration. It's a related problem, but different from probability estimation.

% Calibration methods could potentially be leveraged for probability estimation. In this paper, we answer two questions:
% 1. Do traditional calibration methods work well for probability prediction tasks? 
% 2. How do we evaluate models trained for probability estimation? (since you only have outcomes)

% \textbf{First Contribution}. To faciliate answering these questions, we introduce a simulated dataset where we know ground truth probabilities. We benchmark traditional calibration methods on this datasets, and identify metrics. 

% \textbf{Second Contribution}. Real datasets, and benchmarking existing methods for probability estimation problem. 

% \textbf{Third contribution}. New methodology which outperforms existing methods on synthetic and real word data. 

We consider the problem of building models that answer questions such as: \emph{Will it rain?} \emph{Will a patient survive?} \emph{Will a car collide with another vehicle?} Due to the inherently-uncertain nature of these real-world phenomena, this requires performing \emph{probability estimation}, i.e. evaluating the likelihood of each possible outcome for the phenomenon of interest. 
Models for probability prediction 
% In all the above scenarios, probability-estimation models 
must be trained on observed outcomes (e.g. whether it rained, a patient died, or a collision occurred), because the ground-truth probabilities are unknown. The problem is therefore analogous to binary classification, with the important difference that the objective is to estimate probabilities rather than predicting specific outcomes. In probability estimation, two identical inputs (e.g. histopathology images from cancer patients) can potentially result in two different outcomes (death vs. survival). \textcolor{black}{In contrast, in classification the class label is usually completely determined by the data (a picture either shows a cat or it does not).}
% In other words, our model should be able to infer the probability of the future event which is not observed from the observed outcomes. 

The goal of this work is to investigate probability estimation from high-dimensional data using deep neural networks. Probability estimation is a fundamental problem in machine learning \citep{murphy2013machine}. % while the probabilistic nature is less explored with those models.
%Often the input data in the above-mentioned applications are high dimensional, making neural networks a natural candidate for modeling. 
Deep networks trained for classification often generate probabilities, which quantify the uncertainty of the estimate (i.e. how likely the network is to classify correctly). This quantification has been observed to be inaccurate, and several methods have been developed to improve it~\citep{Platt1999,guo2017calibration,Szegedy2016RethinkingTI,Zhang2020MixnMatchEA,thulasidasan2020mixup,Mukhoti2020CalibratingDN,thagaard2020can}, 
%it has been observed that networks tend to 
%However, neural network, especially for the classification task, tend to output a probability for a class label which does not reflect the actual likelihood of its correctness~\citep{Szegedy2016RethinkingTI,Zhang2020MixnMatchEA,thulasidasan2020mixup,Mukhoti2020CalibratingDN,guo2017calibration}. 
%Several methods have been developed to address this issue ~\citep{guo2017calibration,Platt1999,Mukhoti2020CalibratingDN,Szegedy2016RethinkingTI,Zhang2020MixnMatchEA}, but %all 
\textcolor{black}{including Bayesian neural networks~\citep{gal2016dropout, DBLP:journals/corr/WangSY16, shekhovtsov2018feedforward, DBLP:journals/corr/abs-1908-00598}.}  %\textcolor{black}{Another recently investigated issue is the performance of uncertainty quantification methods on out-of-distribution data \citep{thagaard2020can}}. 
However, these works restrict their attention almost exclusively to classification in datasets \textcolor{black}{(e.g.  CIFAR-10/100~\cite{Krizhevsky09learningmultiple}, or ImageNet~\citep{imagenet}) where the label itself is \emph{not uncertain}:
%, and therefore the uncertainty is completely tied to the model:
it quantifies the confidence of the model in its own prediction, \emph{not the probability of an event of interest}.} In the literature, this is known as epistemic (model) uncertainty \citep{hullermeier2021aleatoric,tagasovska2019single}. We focus on aleatoric uncertainty, stemming from inherent uncertainty in the problem under study. 
\textcolor{black}{To formalize this distinction, we propose and rigorously analyze a simple high-dimensional model with \emph{uncertain} labels, and show that even in this simple model, classification-based methods fail to yield good probability estimates.}

\textcolor{black}{Probability estimation from high-dimensional data is a problem of critical importance in medical prognostics~\citep{Wulczyn2020DeepLS}, weather prediction~\citep{agrawal2019machine}, and autonomous driving~\citep{kim2019crash}.} In order to advance deep-learning methodology for probability estimation it is crucial to build appropriate benchmark datasets. Here we build a synthetic dataset and gather three real-world datasets, which we use to systematically evaluate existing methodology. In addition, we propose a novel approach for probability estimation, which outperforms current state-of-the-art methods. Our contributions are the following:  

\begin{figure}[t]
    \centering
    \includegraphics[width=\columnwidth]{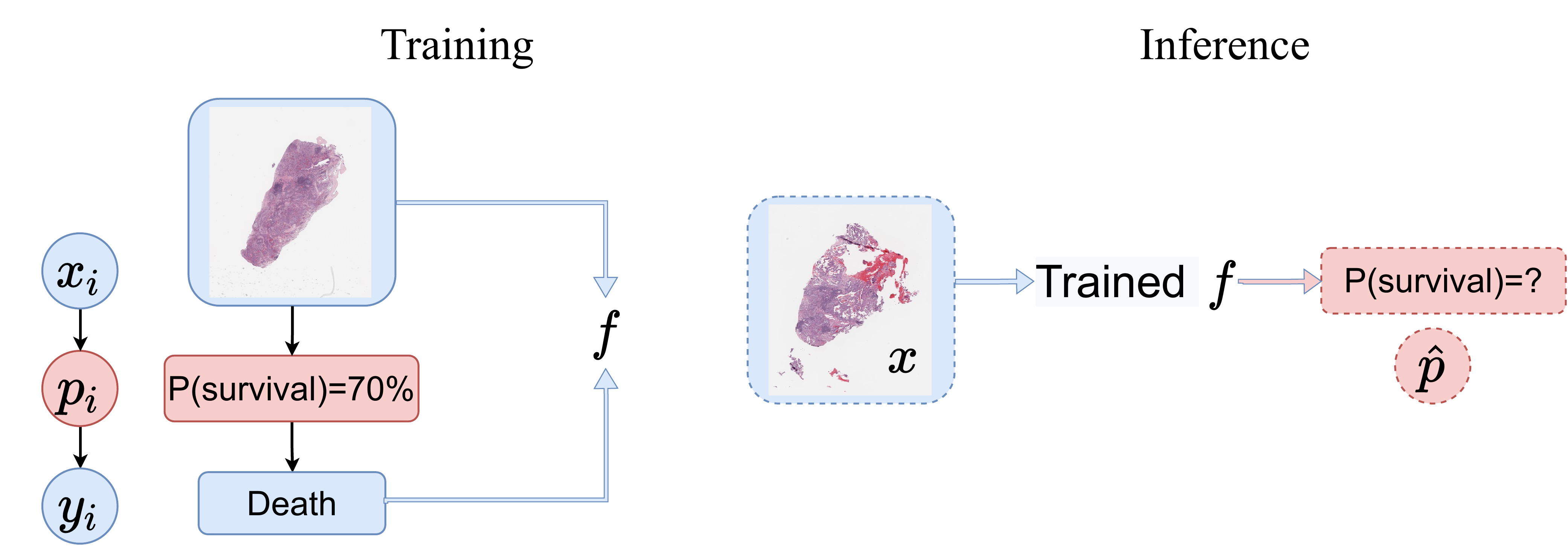}
    \caption{\textbf{The probability-estimation problem.} In probability estimation, we assume that each observed outcome $y_i$ (e.g. death or survival in cancer patients) in the training set is randomly generated from a latent unobserved probability $p_i$ associated to the corresponding data $\vx_i$ (e.g. histopathology images). % and $y_i$ pairs are not deterministic as there exists inherent uncertainty in the data generating process.
    \textbf{Training} (left): Only $\vx_i$ and $y_i$ can be used for training, because $p_i$ is not observed. 
    \textbf{Inference} (right): Given new data $\vx$, the trained network $f$ produces a probability estimate $\hat{p}\in [0,1]$.}
    \label{fig:illustration}
    \vspace{-1mm}
\end{figure}
%
%\textbf{First contribution}: 
\setlist{nolistsep}

\begin{itemize}[leftmargin=*]
\item \textcolor{black}{We give a theoretical analysis of the probability estimation problem for a high-dimensional logistic model, and establish that predictors trained by minimizing the cross entropy loss overfit the observed outcomes and fail to yield calibrated outcomes. However, we show that the predictions \emph{are} well calibrated during the initial stages of training.}
\item We introduce a new synthetic dataset for probability estimation where a population of people may have a certain disease associated with age. The task is to estimate the probability that they contract the disease from an image of their face. The data are generated using the UTKFaces dataset~\citep{zhifei2017cvpr}, which contains age information. The dataset contains multiple versions of the synthetic labels, which are generated according to different distributions designed to mimic real-world probability-prediction datasets. 
%build a dataset using the UTKFaces dataset~\citep{Zhang2017UnderstandingDL}, and use it to systemically benchmark different calibration methods and evaluation metrics.  %This dataset has multiple simulated ground truth probabilities which mimics various probability distributions seen in the real-world setting.
The dataset serves two objectives. First, %since we have access to ground truth probabilities, the dataset 
it allows us to evaluate existing methodology. %calibration methods for the task of probability estimation under different simulated scenarios.
% Second, it help us to perform a systematic comparison of different computable metrics like expected calibration error, maximum calibration error, brier score, accuracy, and AUC for evaluating models for probability estimation task. 
Second, it enables us to evaluate different metrics in a controlled scenario where we have access to ground-truth probabilities. %like expected calibration error, maximum calibration error, brier score, accuracy, and AUC (which do not require ground truth probability) correlate with the metrics computed using the ground truth probability under different realistic simulations. We observe that the brier score computed using 0-1 labels captures the true performance of the model for the probability estimation task. 

% We build a dataset using the UTKFaces dataset~\citep{Zhang2017UnderstandingDL} and simulated ground truth probabilities that help us to perform a systematic comparison of different computable metrics like expected calibration error, maximum calibration error, brier score, accuracy, and AUC for evaluating models for probability estimation task. Specifically, we compare how these metrics (which do not require ground truth probability) correlate with the metrics computed using the ground truth probability under different realistic simulations. We conclude that, in absence of ground truth probabilities, the brier score computed using 0-1 labels captures the true performance of the model for the probability estimation task. 

%\textbf{Second Contribution}: 
\item We have used publicly available data to build probability-estimation benchmark datasets for three real-world applications: 
%a real-world
%collect open-source, real-world datasets from various domains with probability estimation task - 
(1) precipitation forecasting from radar images, (2) prediction of cancer-patient survival from histopathology images, and (3) prediction of vehicle collisions from dashcam videos. We use these datasets to systematically evaluate existing approaches, which have been previously tested mainly on classification datasets.
% We benchmark various calibration algorithms on this dataset. Further, we hope that these real datasets along with our simulated dataset will serve as a standard benchmark and help in further development of research in probability estimation. 
% The future event for the climate prediction dataset is whether the mean precipitation after an hour will increase or decrease. For the histopathology dataset, the future event is whether the patient survives in the next five years. And, for the car crashes dataset, the future event is whether the car has an accident in the next few seconds.

%\textbf{Third Contribution}: 
\item We propose Calibrated Probability Estimation (CaPE), a novel technique which modifies the training process so that output probabilities are consistent with empirical probabilities computed from the data. CaPE outperforms existing approaches on most metrics on  synthetic and real-world data.
\end{itemize}

% \begin{figure*}
%     \centering
%     \includegraphics[width=0.65\textwidth]{plots/calibration_illustration.pdf}
%     \caption{\textbf{The probability-estimation problem.} In probability estimation, we assume that each observed outcome $y_i$ (e.g. death or survival in cancer patients) in the training set is randomly generated from a latent unobserved probability $p_i$ associated to the corresponding data $\vx_i$ (e.g. histopathology images). % and $y_i$ pairs are not deterministic as there exists inherent uncertainty in the data generating process.
%     \textbf{Training} (left): Only $\vx_i$ and $y_i$ can be used for training, because $p_i$ is not observed. 
%     \textbf{Inference} (right): Given new data $\vx$, the trained network $f$ produces a probability estimate $\hat{p}\in [0,1]$.}
%     \label{fig:illustration}
%     \vspace{-1mm}
% \end{figure*}
\begin{figure}[t]
    \centering
    \begin{subfigure}[b]{\columnwidth}
         \centering
         \includegraphics[width=0.99\textwidth]{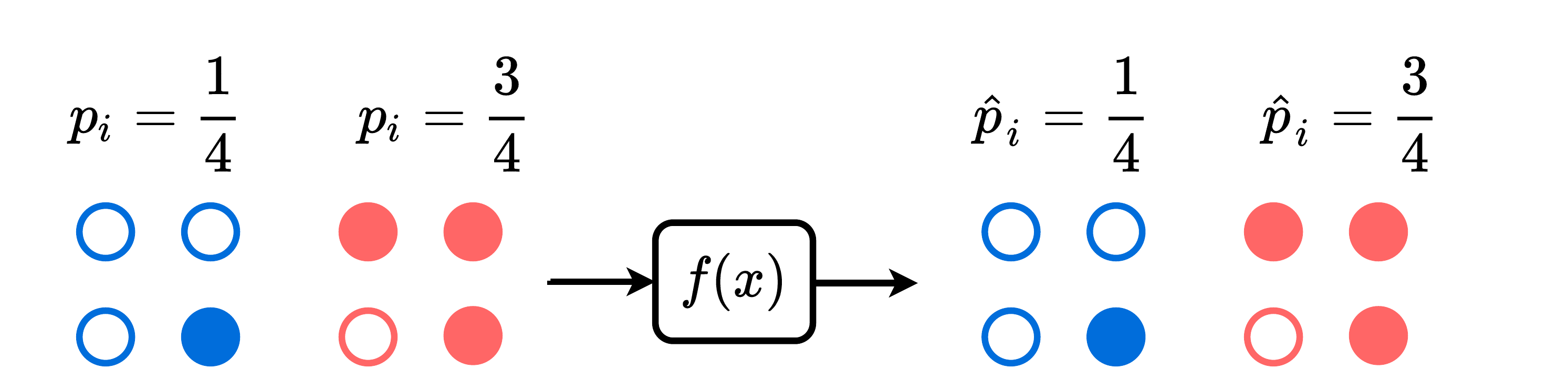}
         \caption{\textbf{\color{forestgreen}\checkmark} Accurate probability estimation\\ \hspace{-7em}\textcolor{forestgreen}{\checkmark} Calibration}
         \label{fig:prob_pred}
     \end{subfigure}\\\hspace{-3em}
     \begin{subfigure}[b]{0.9\columnwidth}
         \centering
         \includegraphics[width=0.94\textwidth]{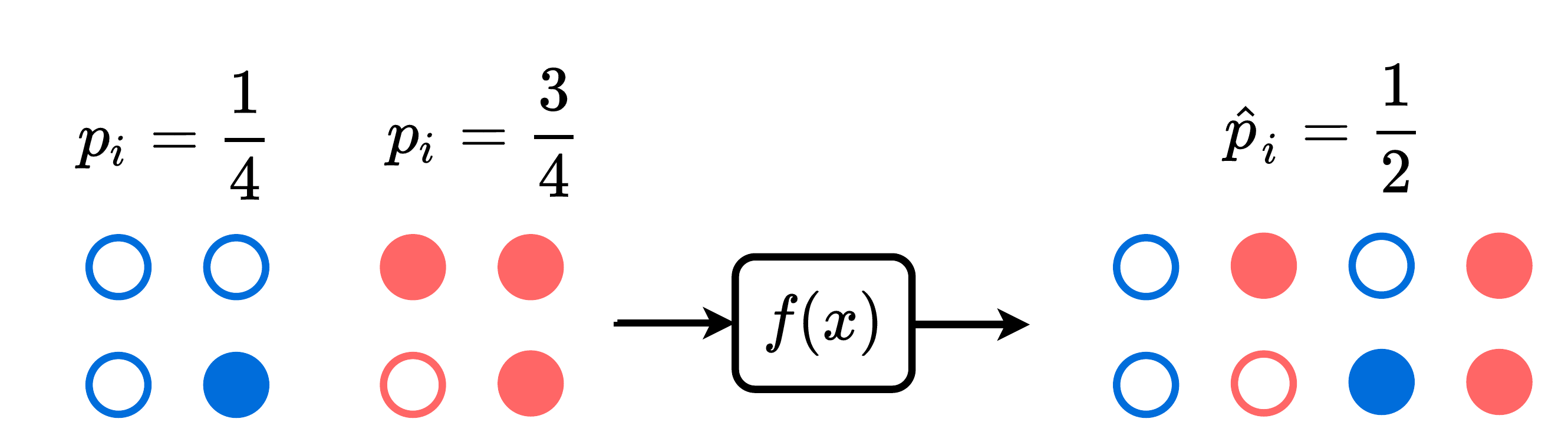}
    \caption*{\hspace{3em} (b)\hspace{.2em}\textcolor{black}{\ding{55}}  Inaccurate probability estimation \\\hspace{-4.25em} \textcolor{forestgreen}{\checkmark} Calibration}
    \label{fig:pred_diag_2}
     \end{subfigure}
     \caption{\textbf{Calibration is not enough}. Uncolored/colored markers denote $y=0/1$ outcomes, respectively. Blue/red stand for two classes with different associated ground-truth probabilities (1/4 and 3/4 respectively).  $(a)$ The model $f$ retrieves the true probabilities, which requires discriminating between inputs with low and high probability. $(b)$ The model $f$ has no discriminative power, it just assigns the same probability to all outputs. However, the model is \emph{perfectly calibrated} because out of all outcomes assigned 0.5 by the model, the fraction that are equal to 1 is 50\%.}
     \label{fig:pred_diag}
\end{figure}
\section{Problem Formulation}
%\vspace{-2mm}
The goal of probability estimation is to evaluate the likelihood of a certain event of interest, based on observed data. The available training data consist of $n$ examples $\vx_i$, $1 \leq i \leq n$, each associated with a corresponding outcome  $y_i$. In our applications of interest, the input data are high dimensional: each $\vx_i$ corresponds to an image or a video. The corresponding label $y_i$ is either 0 or 1 depending on whether or not the event in question occurred. For example, in the cancer-survival application $\vx_i$ is a histopathology image of a patient, and $y_i$ equals 1 if the patient survived for 5 years after $\vx_i$ was collected. 
%$$\vx_i = \text{Histopathology image of patient }i, \quad y_i =\text{Patient }i\text{ survived for 5 years}$$
The data have inherent uncertainty: $y_i$, the patient's survival, does not depend deterministically on the histopathology image \textcolor{black}{(due e.g. to comorbidities and other health factors)}. Instead, we assume that $y_i$ equals 1 with a certain probability $p_i$ associated with $\vx_i$, as illustrated in Figure~\ref{fig:illustration},  \textcolor{black}{because the input data provides key information about the patient's survival chances}. 

At inference, a probability-estimation model aims to generate an estimate $\hat{p}$ of the underlying probability $p$, associated with a new input data point $\vx$ (e.g. the probability of survival for over 5 years for new patients based on their histopathology data). \textcolor{black}{To summarize, this is not just a classification problem, because it involves aleatoric uncertainty. Instead, the goal is to predict the probability of the outcome, which is critical in choosing a course of treatment for the patient.} 

\section{Evaluation Metrics}
\label{sec:metrics}
% \vspace{-3mm}
Probability estimation shares similar target labels and network outputs with binary classification. 
However, classification accuracy is \emph{not} an appropriate metric for evaluating probability-estimation models due to the inherent uncertainty of the outcomes. This is illustrated by the example in Figure~\ref{fig:prob_pred} where a perfect probability estimate would result in a classification accuracy of just 75\%.\footnote{\textcolor{black}{A perfect model (in terms of probability estimation), assigns 0.25 to the blue class and 0.75 to the red class. To maximize classification accuracy, we predict 1 when the model outputs 0.75 (red examples) and 0 when it outputs 0.25 (blue examples). However, 25\% of red examples have an outcome of 0, and 25\% of blue examples have an outcome of 1. As a result, the model would only have 75\% accuracy.}} 

\textbf{Metrics when ground-truth probabilities are available.}
%for data with known probabilities}
 For synthetic datasets, we have access to the ground truth probability labels and can use them to evaluate performance. 
 %he network is still only trained using 0-1 labels, but we can use the ground truth probability to evaluate the trained network's performance. This is only relevant to simulated datasets. 
 %
%A predictor estimates the probability accurately if
%\begin{equation}
%\mathbb{P}(f(\vx)=p|\hat{P}=p)=1,\forall\hspace{0.2em} 0\le p\le 1,
%\label{eq:prob_def}
%\end{equation}
%where $\hat{P}$ is the true probability label. 
%
%The definition \eqref{eq:prob_def} cannot be computed for any finite sample. 
%
%As a surrogate, we consider the empirical difference between the underlying probability distribution and estimated probability. 
Two reasonable metrics are the mean squared error or $\ell_2$ distance ${\text{MSE}_p}$, and the Kullback–Leibler divergence ${\text{KL}_p}$ between the estimated and ground-truth probabilities:
\begin{align}
    % $$\text{Brier}_\text{true}[f]=\frac{1}{N}\sum\limits_i (f(x_i)-p_i)^2,$$
    \text{MSE}_{p} & =\frac{1}{N}\sum\limits_{i=1}^N (\hat{p}_i - p_i)^2, \notag\\
%\end{equation}
%\begin{equation}
     \text{KL}_{p} & = \frac{1}{N}\sum\limits_{i=1}^N \left(\hat{p}_i \log\left(\frac{\hat{p}_i}{p_i}\right) + (1-\hat{p}_i) \log\left(\frac{1-\hat{p}_i}{1-p_i}\right)\right).\notag
\end{align}
$N$ is the number of data points, and $p_i,\hat p_i$ are the ground-truth and predicted probabilities, respectively.
%Ground truth probabilities are only available for synthetic datasets

% \begin{figure*}[t]
%     \centering
%     \begin{subfigure}[b]{0.49\textwidth}
%          \centering
%          \includegraphics[width=0.99\textwidth]{plots/calibrator_vs_predictor-1.pdf}
%          \caption{\textbf{\color{forestgreen}\checkmark} Accurate probability estimation\\ \hspace{-7em}\textcolor{forestgreen}{\checkmark} Calibration}
%          \label{fig:prob_pred}
%      \end{subfigure}
%      \begin{subfigure}[b]{0.48\textwidth}
%          \centering
%          \includegraphics[width=0.94\textwidth]{plots/calibrator_vs_predictor-2.pdf}
%     \caption{\textcolor{black}{\ding{55}} Inaccurate probability estimation \\\hspace{-7.85em} \textcolor{forestgreen}{\checkmark} Calibration}
%     \label{fig:pred_diag_2}
%      \end{subfigure}
%      \caption{\textbf{Calibration is not enough}. Uncolored/colored markers denote $y=0/1$ outcomes, respectively. Blue/red stand for two classes with different associated ground-truth probabilities (1/4 and 3/4 respectively).  $(a)$ The model $f$ retrieves the true probabilities, which requires discriminating between inputs with low and high probability. $(b)$ The model $f$ has no discriminative power, it just assigns the same probability to all outputs. However, the model is \emph{perfectly calibrated} because out of all outcomes assigned 0.5 by the model, the fraction that are equal to 1 is 50\%.}
%      \label{fig:pred_diag}
% \end{figure*}
\begin{figure*}
    \centering
    \includegraphics[width=0.9\textwidth]{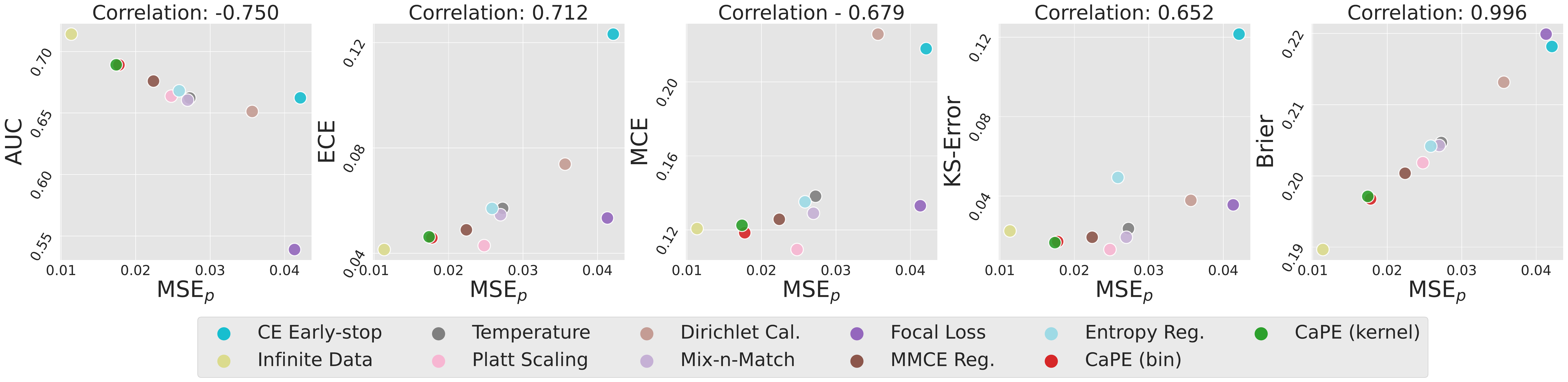}
    \caption{\textbf{Evaluating evaluation metrics.} We use synthetic data to compare different metrics to the \emph{gold-standard} $\text{MSE}_p$ that uses ground-truth probabilities. Brier score is highly correlated with $\text{MSE}_p$, in contrast to the classification metric AUC and the calibration metrics ECE, MCE and KS-Error. The graphs show the results of the proposed method CaPE, as well as the baselines described in Section~\ref{sec:baselines} on the \textit{Linear} scenario (see Section~\ref{sec:synthetic_data}). Results on other scenarios and a similar comparison with $\text{KL}_p$ are reported in Appendix~\ref{app:metric_comparsion_appendix}.}
    \label{fig:metric_comparison}
\end{figure*}
%\vspace{1.4mm}
\textbf{Calibration metrics.} 
Ground-truth probabilities are not available for real-world data. In order to evaluate the probabilities estimated by a model, we need to compare them to the observed probabilities. To this end, we aggregate the examples for which the model output equals a certain value (e.g. 0.5), and verify what fraction of them have outcomes equal to 1. If the fraction is close to the model output, then the model is said to be well calibrated.  
%Hence, we relax our metrics and evaluate how does the network's predicted probability match the observed probabilities. 
%Note that this is different from calibration for classification systems (defined in Definition~\ref{def:calibration}), where the only source of uncertainty is the network's discriminative power. A calibrated classifier outputs a confidence score that matches its empirical accuracy, but the there is no actual uncertainty about the target label.
%
\begin{definition}
\label{def:calibration}
A model $f$ is well \emph{calibrated} if
\begin{equation}
\mathbb{P}\left(y=1\mid  f(\vx) \in I(q)\right)=q, \quad \forall\hspace{0.2em} 0\le q\le 1,
\label{eq:cal_def}
\end{equation}
where $y$ is the observed outcome, $f(\vx)$ is the probability predicted by model $f$ for input $\vx$, and $I(q)$ is a small interval around $q$. 
%
% For some loss function $\ell$, the \emph{calibration error on $I(q)$} is defined as 
% \begin{equation}
%      \varepsilon_C = \ell\left(\mathbb{P}\left(\hspace{0.2em}y = 1 \mid f(\vx) \in I(q) \right), q\right).
% \end{equation}
\end{definition}
Model calibration can be evaluated using the expected calibration error ({ECE}) \citep{guo2017calibration} \textcolor{black}{(note however that the definition in \cite{guo2017calibration} is specific to classification)}. % is the empirical expectation of \eqref{eq:ece_def} under $\ell_1$ loss, computed on dataset $\mathcal{D}$. 
Given a probability-estimation model $f$ and a dataset of input data $\vx_i$ and associated outcomes $y_i$, $1\leq i \leq N$, we partition the examples into $B$ bins, $I_1,
I_2, \cdots, I_B$, according to the probabilities assigned to the examples by the model. Let $Q_1$,\ldots, $Q_{B-1}$ the $B$-quantiles of the set $\{f(\vx_1),\ldots,f(\vx_N)\}$, we have $I_b:=[Q_{b-1},Q_b] \cap \{f(\vx_i)\}_{i=1}^N$ (setting $Q_0=0$). For each bin, we compute the mean empirical and predicted probabilities,
\begin{align}
    p_\text{emp}^{(b)} & =\mathbb{E}\left(y \mid f(\vx) \in I_b\right) = \frac{1}{|I_b|}\sum\limits_{i\in \text{Index}(I_b)}y_i,
    \label{eq:p_emp}\\
    q^{(b)}& =\frac{1}{|I_b|}\sum\limits_{i\in \text{Index}(I_b)}f(\vx_i),
    \label{eq:q}
\end{align}
where $\text{Index}(I_b) = \{i \mid f(\vx_i) \in I_b \}$.

The pairs $(q^{(b)},p_\text{emp}^{(b)})$ can be plotted as a reliability diagram, shown in the second row of Figure~\ref{fig:overfit} and in Figure~\ref{fig:real_world_reliablility_curve}. ECE is then defined as
\begin{equation}
    \text{ECE}=\frac{1}{B}\sum\limits_{b=1}^B\left| p_\text{emp}^{(b)}-q^{(b)}\right|.
\end{equation}
%The partitions are usually either equidistant or have the same size. In this paper, we compute ECE with $B=15$ bins of equal size. 
Other metrics for calibration include the maximum calibration error ({MCE}) defined as $$\text{MCE}=\max\limits_{b=1,\dots,B}\left| p_\text{emp}^{(b)}-q^{(b)}\right|,$$ and the Kolmogorov-Smirnov error ({KS-error})~\citep{spline}, a metric based on the cumulative distribution function, which is described in more detail in Appendix~\ref{app:KS_error}.%that measure the difference between the empirical joint cumulative distribution functions (CDF) of the outcomes $y_i$ and probability estimations $f(\vx_i)$ and the CDF under the assumption of calibration. A derivation of the metric can be found in Appendix~\ref{app:KS_error}.

\textbf{Brier score. }
Crucially, a model without any discriminative power can be perfectly calibrated (see Figure~\ref{fig:pred_diag}).
The Brier score is a metric designed to evaluate both calibration and discriminative power. It is the mean squared error between the predicted probability and the observed outcomes:
    %$$\text{Brier}[f]=\frac{1}{N}\sum\limits_i (f(x_i)-y_i)^2,$$
    \begin{equation}
        \text{Brier}=\frac{1}{N}\sum\limits_{i=1}^N (\hat{p}_i-y_i)^2.
    \end{equation}
This score can be decomposed into two terms associated with calibration and discrimination, as shown in Appendix~\ref{app:brier_decomp}. 
% For a large number of samples the Brier score is minimized when the probabilities are predicted correctly \citep{murphy1973new}.
% \begin{figure*}
%     \centering
%     \includegraphics[width=\textwidth]{plots/metric_comparsion_unif.pdf}
%     \caption{\textbf{Evaluating evaluation metrics.} We use synthetic data to compare different metrics to the \emph{gold-standard} $\text{MSE}_p$ that uses ground-truth probabilities. Brier score is highly correlated with $\text{MSE}_p$, in contrast to the classification metric AUC and the calibration metrics ECE, MCE and KS-Error. The graphs show the results of the proposed method CaPE, as well as the baselines described in Section~\ref{sec:baselines} on the \textit{Linear} scenario (see Section~\ref{sec:synthetic_data}). Results on other scenarios and a similar comparison with $\text{KL}_p$ are reported in Appendix~\ref{app:metric_comparsion_appendix}.}
%     \label{fig:metric_comparison}
% \end{figure*}
%Given the constraint of unobserved probabilities, there is a gap between evaluating probability estimation and calibration. We carry synthetic data experiments in 
Using the synthetic data in Section~\ref{sec:synthetic_data}, where the ground-truth probabilities are known,
%we simulate the labels with known underlying probabilities, where the probabilities are concentrated in different regions of the $[0,1]$ interval. W
we show that Brier score is indeed a reliable proxy for the  \emph{gold-standard} MSE metric based on ground-truth probabilities $\text{MSE}_p$, in contrast to calibration metrics such as ECE, MCE or KS-error, and to classification metrics such as AUC (see Figure \ref{fig:metric_comparison} and Appendix~\ref{app:metric_comparsion_appendix}). %As shown in Figure \ref{fig:metric_comparison}, the Brier score is highly correlated with $\text{MSE}_p$, whereas calibration metric such as ECE and KS-error are not.
%show that there is a relation between the common calibration metrics and the metric that require ground truth probabilities, $\text{MSE}_p$. We empirically analyze which calibration metric is the most appropriate proxy for $\text{MSE}_p$. Figure \ref{fig:metric_comparison} illustrates the relation between various metrics without known probabilities and $\text{MSE}_p$, when probability labels span the $[0,1]$ interval (\emph{Uniform}). It shows that the Brier score has stronger correlation with $\text{MSE}_p$ than ECE and KS-error, and is the best proxy. Correlation under other scenarios, and a similar comparison with $\text{KL}_p$ are presented in Appendix~\ref{app:metric_comparsion_appendix}.
    % Furthermore, as we show in Section~\ref{sec:eval_real_world}, Brier$_y$ is correlated with Brier$_p$ for data with ground truth labels. Hence, it serves as a good metric for probability prediction performance on real-world data.
    
%As was discussed in \citep{guo2017calibration}, deep neural networks are not calibrated in general and tend to be overconfident in their prediction. This inhibits the use vanilla binary classification models as reliable predictors.
%A lot of the problems people have focused on when talking about the calibration of deep neural networks do not have intrinsic uncertainty. 
%Difference between problems where there is inherent uncertainty, and problems where the uncertainty is due to model limitations. Figure to illustrate this???
%
\section{Early Learning and Memorization}
\label{sec:early_learning_model}
Prediction models based on deep learning are typically trained by minimizing the cross entropy between the model output and the training labels~\citep{goodfellow2016deep}. This cost function is a \emph{proper scoring rule}, meaning that it evaluates probability estimates in a consistent manner and is therefore guaranteed to be well calibrated in an infinite-data regime~\citep{Buja05lossfunctions}, as illustrated by Figure~\ref{fig:overfit} (first column). 
%More precisely, if we minimize the expected value of the cross entropy between $y$ and the output of our prediction model $\hat{p}$, then the minimum of the cost function is achieved when $\hat{p} = \mathbbm{P}(y=1 | \vx)$ (see the last column of Figure~\ref{fig:overfit}). 

%\vspace{-3mm}
Unfortunately, in practice, prediction models are trained on finite data. This is crucial in the case of deep neural networks, which are highly overparametrized and therefore prone to overfitting~\citep{goodfellow2016deep}. For classification, deep neural networks have been shown to be capable of fitting arbitrary random labels~\citep{Zhang2017UnderstandingDL}.  
%this has been shown to result in overly confident predictions~\citep{Mukhoti2020CalibratingDN}. 
In probability estimation, we observe that neural networks indeed eventually overfit and \emph{memorize} the observed outcomes completely. Moreover, the estimated probabilities collapse to 0 or 1 (Figure~\ref{fig:overfit}, second column), a phenomenon that has also been reported in classification~\citep{Mukhoti2020CalibratingDN}. 
%We observe a similar phenomena for probability estimation as well - the probabilities estimated by the neural network eventually collapse towards 0 or 1 (see the first column of Figure~\ref{fig:overfit}), leading to dramatic reduction in training loss and increase in validation loss (Figure~\ref{fig:training_curve}). 
However, calibration is preserved during the first stages of training (Figure~\ref{fig:overfit}, third column). This is reminiscent of the \emph{early-learning} phenomenon observed for classification from partially corrupted labels~\citep{yao2020searching,xia2020robust}, where neural networks learn from the correct labels before eventually overfitting the false ones~\citep{liu2020early}. 

\textcolor{black}{Though early learning and memorization are typically observed when training prediction models based on deep neural networks, we argue that these observations represent a much more general phenomenon, intrinsic to the problem of probability estimation with finite data when the dimension is large. To substantiate this claim, we propose a simple analytical model, where data samples $\vx_i\in\mathbb{R}^d$ are drawn from a high dimensional normal distribution $\vx_i\sim\mathcal{N}(0,I_d)$. The probability of each data point is determined by a generalized linear model $p_i({\vtheta})=\left(1+e^{-\langle\vtheta,\vx_i\rangle}\right)^{-1}$, with true parameter $\vtheta^*$.
For $k \geq 1$
we denote by $\hat p^k$ the predictor obtained by running $k$ iterations of gradient descent on the cross-entropy loss with step size $\eta$.
We prove the following.
\begin{theorem}[Informal]\label{thm}
There exists $\kappa^* \in (0, + \infty)$ such that the following holds:
if $p$ and $n$ are sufficiently large, the mean squared error of $\hat p^k$ decreases monotonically during the first $k = O(1/\eta)$ iterations of gradient descent, but if $\frac pn > \kappa^*$, then as $k \to \infty$, $\hat p^k$ collapses to a predictor that only predicts probabilities $0$ and $1$.
\end{theorem}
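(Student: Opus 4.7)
The plan is to treat the two halves of the theorem separately, combining implicit-bias results for gradient descent on separable data with concentration of the empirical cross-entropy gradient.

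\textbf{Early learning.} I would first analyse the continuous-time gradient flow of the population loss $\mathcal{L}_\infty(\vtheta)=\mathbb{E}[-y\log\sigma(\langle\vtheta,\vx\rangle)-(1-y)\log\sigma(-\langle\vtheta,\vx\rangle)]$, whose gradient
\[
\nabla\mathcal{L}_\infty(\vtheta)=\mathbb{E}\bigl[(\sigma(\langle\vtheta,\vx\rangle)-\sigma(\langle\vtheta^*,\vx\rangle))\vx\bigr]
\]
lies in $\mathrm{span}(\vtheta,\vtheta^*)$ by Gaussian rotation invariance (equivalently, Stein's lemma applied to the odd function $\sigma(\cdot)-\tfrac12$). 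The flow initialised at $\vtheta_0=0$ therefore remains on the ray $\{c\vtheta^*:c\geq0\}$, and reduces to a scalar ODE for $c(t)$ for which a direct computation shows that $c(t)$ increases monotonically from $0$ toward a limit $c_\infty\leq1$; along this trajectory the MSE in probability space decreases monotonically. To transfer the claim to the empirical iterate $\hat\vtheta^k$, I would use a uniform sub-Gaussian bound $\|\nabla\mathcal{L}_n(\vtheta)-\nabla\mathcal{L}_\infty(\vtheta)\|\lesssim\sqrt{d/n}\,\mathrm{poly}(\|\vtheta\|)$ on any bounded region, combine it with Lipschitz smoothness of the logistic loss, and close the loop with a Grönwall-style argument over the horizon $k\eta=O(1)$, which keeps the empirical trajectory close to the population flow and preserves monotonic decrease of $\mathrm{MSE}_p$ for $n$ sufficiently large.

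\textbf{Memorisation.} For the large-$k$ half I would invoke two known results in sequence. First, the separability phase transition of Cover, sharpened by Candès and Sur for signal-bearing Gaussian features, yields an explicit threshold $\kappa^*\in(0,\infty)$ such that for $d/n>\kappa^*$ the training set $\{(\vx_i,2y_i-1)\}_{i=1}^n$ is linearly separable with probability tending to one. Second, once the data is separable, the implicit-bias theorem of Soudry, Hoffer, Nacson, Gunasekar and Srebro states that gradient descent on the logistic loss diverges in norm, $\|\hat\vtheta^k\|\to\infty$, with direction $\hat\vtheta^k/\|\hat\vtheta^k\|\to\vw_{\mathrm{MM}}$, the $\ell_2$ max-margin separator on the training data. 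Plugging this into the sigmoid: for any $\vx$ with $\langle\vw_{\mathrm{MM}},\vx\rangle\neq0$, a full-measure Gaussian event, $\hat p^k(\vx)=\sigma(\langle\hat\vtheta^k,\vx\rangle)\to\mathbf{1}\{\langle\vw_{\mathrm{MM}},\vx\rangle>0\}$, so the long-run predictor is $\{0,1\}$-valued, as claimed.

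\textbf{Main obstacle.} The technical heart of the argument is the early-learning half: one needs to control the population-to-empirical deviation \emph{uniformly along} the GD trajectory so that stochastic errors do not compound beyond the tolerance required for monotonic MSE decrease. Propagating pointwise concentration across $O(1/\eta)$ discrete steps while simultaneously bounding the drift of $\hat\vtheta^k$ off the one-dimensional subspace $\mathrm{span}(\vtheta^*)$, and doing so with dimension $d$ allowed to be large, is the delicate step and will drive the quantitative requirements on $n$. By contrast, the memorisation half is essentially a packaging of the Cover/Candès--Sur separability threshold with the Soudry et al.\ implicit-bias result, and should go through with no new ingredients once those are in hand.
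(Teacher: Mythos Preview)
Your memorisation half is essentially the paper's argument: Cand\`es--Sur for the separability threshold $\kappa^*$, then Soudry et al.\ for the implicit bias of gradient descent on separable logistic data, yielding $\|\hat\vtheta^k\|\to\infty$ with direction converging to the max-margin separator, hence $\sigma(\langle\hat\vtheta^k,\vz\rangle)\to\{0,1\}$ almost everywhere. The paper adds only a minor technical point you omit: one must check that the step size $\eta$ is small enough relative to $\|X\|_2^2/n$ for the Soudry et al.\ theorem to apply, which requires a random-matrix bound on the operator norm of the design.

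Your early-learning half, however, has a genuine gap in the regime the theorem is stated for. You propose to compare the empirical gradient to the population gradient via a bound $\|\nabla L_n(\vtheta)-\nabla L_\infty(\vtheta)\|\lesssim\sqrt{d/n}$ and then run Gr\"onwall. But the theorem lives in the proportional regime $p/n\to\kappa>0$ (indeed $\kappa>\kappa^*$ for the second half), so $\sqrt{d/n}\to\sqrt{\kappa}$ is a \emph{constant}, not a vanishing error. Gr\"onwall over a horizon $T=O(1)$ then gives a constant-order deviation of the empirical trajectory from the population ray $\{c\vtheta^*\}$, and you cannot conclude monotonic decrease of the MSE from closeness to the population flow. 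The orthogonal drift of $\hat\vtheta^t$ off $\mathrm{span}(\vtheta^*)$ is genuinely of order one in this regime; it does not average out pathwise.

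The paper sidesteps this entirely by never comparing trajectories. It defines $\mathrm{MSE}_n(t)$ as an \emph{expectation over all randomness} (data, labels, random spherical initialisation, and the fresh test point), then computes $\mathrm{MSE}_n'(0)$ directly: conditioning on $X$ replaces $y$ by $\sigma(X^\top\vtheta^*)$, and Gaussian integration by parts reduces the resulting expectation to a two-dimensional Gaussian calculation in $(\langle\hat\vtheta^0,\vz\rangle,\langle\vtheta^*,\vz\rangle)$, which become asymptotically independent because a random point on a high-dimensional sphere is nearly orthogonal to $\vtheta^*$. This limit is shown to be strictly negative. A uniform bound on $\mathrm{MSE}_n''(t)$ (via $\E\|X\|_2^k$ bounds) then gives a time window $[0,T_1]$ on which $\mathrm{MSE}_n'<-C_1$, and a separate Gr\"onwall comparison---between gradient \emph{descent} and gradient \emph{flow} on the same empirical loss, not between empirical and population---transfers this to the discrete iterates. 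Note also that the paper initialises uniformly on a sphere of radius $\gamma_0$, not at zero; your one-dimensional population reduction relies on the zero initialisation and would need modification even if the concentration issue were resolved.
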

A precise statement and proof can be found in Appendix~\ref{app:analytical_model}.
Theorem \ref{thm} identifies a sharp threshold ($\kappa^*$) at which the memorization phenomenon occurs and separates early learning stage and memorization. It indicates that even simple generalized linear models exhibit the early learning and memorization phenomena: in high dimensions, predictors obtained by cross-entropy minimization eventually memorize the data. This is likely to plague any overparamterized model, including neural networks. However, this phenomenon does \emph{not} occur if gradient descent is stopped early. This is illustrated in Figures~\ref{fig:leraning_curve_analytic} and \ref{fig:scatter_plot_analytic} in Appendix~\ref{app:linear_model}, which demonstrate that
empirically the linear model has this qualitative behavior. 
These observations motivate our proposed methodology.}

\section{Calibrated Probability Estimation ($\text{CaPE}$)}
\label{sec:our_method}

%\vspace{-3mm}
%
\begin{figure*}[t]
    \centering
    \includegraphics[width=0.9\textwidth]{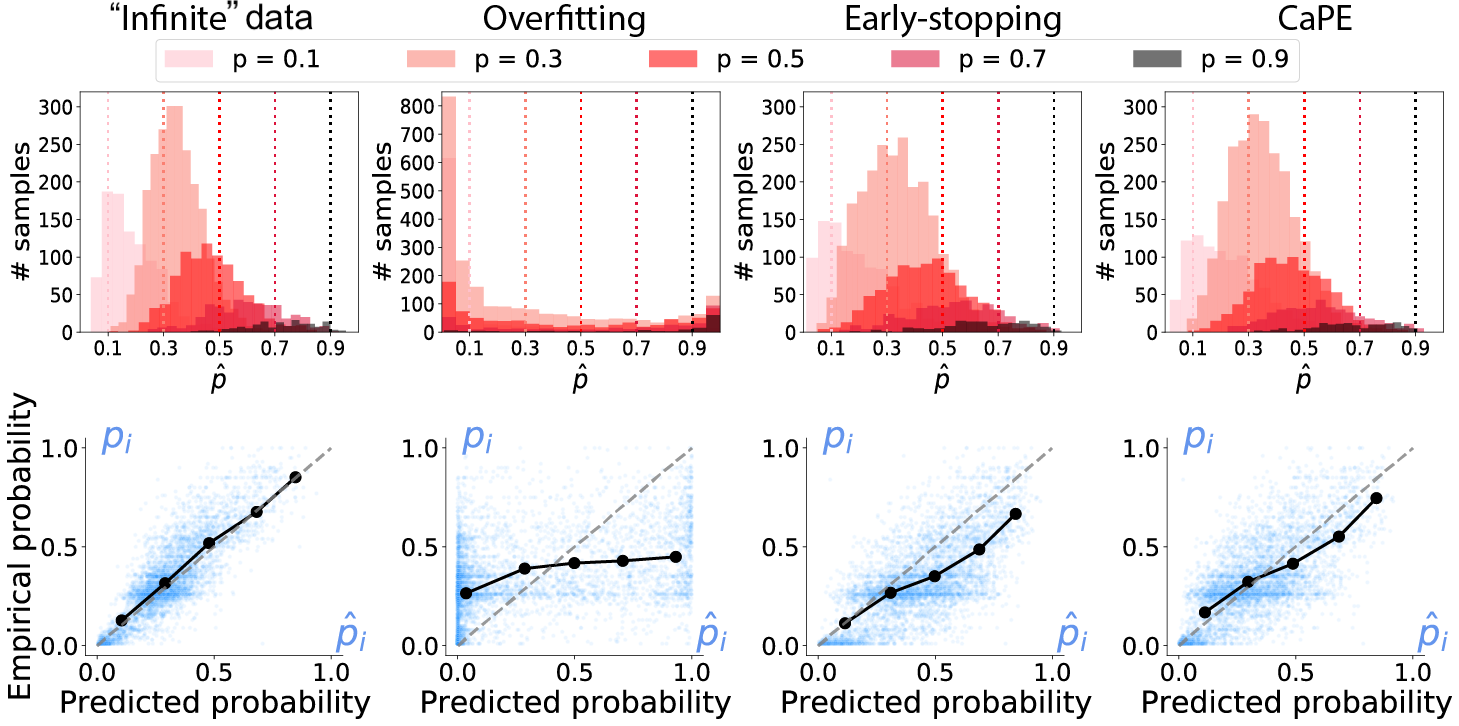}
    \caption{\textbf{Miscalibration due to overfitting and how to avoid it.} 
    The top row shows the histogram of predicted probabilities for the synthetic \emph{Discrete} scenario (see Section~\ref{sec:synthetic_data}). Ideally each histogram should be concentrated around the corresponding value of $p$. The bottom row shows results for the \emph{Linear} scenario. The horizontal and vertical coordinate of each blue point represent the predicted ($\hat{p}_i$) and true ($p_i$) probabilities of a test example respectively. We also show a reliability diagram of binned mean predicted and empirical probabilities in black (see Section~\ref{sec:metrics}). The dashed line indicates perfect calibration. When trained on \emph{infinite} data  \textcolor{black}{obtained by resampling outcome labels at each epoch according to ground-truth probabilities},  models minimizing cross-entropy are well calibrated (first column).  However, when trained on fixed observed outcomes, the model eventually overfits and the probabilities collapse to either 0 or 1 (second column). This is mitigated via early stopping \textcolor{black}{(i.e. selecting the model based on validation cross-entropy loss)}, which yields relatively good calibration (third column). The proposed Calibration Probability Estimation (CaPE) method exploits this to further improve the model while ensuring that the output remains well calibrated. Appendix~\ref{app:app_reliability} shows plots for all synthetic data scenarios.}
    \label{fig:overfit}
\end{figure*}

We propose to exploit the training dynamics of cross-entropy minimization through a method that we name \emph{Calibrated Probability Estimation} (CaPE). Our starting point is a model obtained via early stopping using validation data on the cross-entropy loss. CaPE is designed to further improve the discrimination ability of the model, while ensuring that it remains well calibrated. 
%CaPE first identifies a set of model parameters that are reasonably well calibrated (during the \textit{early-learning} phase) by picking the model corresponding to the lowest cross entropy loss on the validation set. This model is further trained to improve discrimination while maintaining calibration to prevent overfitting. 
This is achieved by alternatively minimizing the following two loss functions: 
%\setlist{nolistsep}
%\begin{itemize}[leftmargin=*]
%\item 

\noindent \textbf{Discrimination loss}: Cross entropy between the model output and the observed binary outcomes,
\begin{equation*}
\mathcal{L}_{\text{D}} = -\sum_{i=1}^N \left[ y_i\log(f(\vx_i)) + (1-y_i) \log(1-f(\vx_i))\right]
\end{equation*}
%
%\item 
\noindent
\textbf{Calibration loss}: Cross entropy between the output probability of the model and the empirical probability of the outcomes conditioned on the model output:
\begin{equation*}
\mathcal{L}_{\text{C}} = -\sum_{i=1}^N\left[ p^i_{\text{emp}}\log(f(\vx_i)) + (1-p^i_{\text{emp}})\log(1-f(\vx_i))\right],
\end{equation*}
where $p_{\text{emp}}^i$ is an estimate of the conditional probability $\mathbbm{P}[y = 1| f(\vx) \in I( f(\vx_i) )]$ and $I( f(\vx_i) )$ is a small interval centered at $f(\vx_i)$. 
As explained in Section~\ref{sec:metrics} if $f(\vx_i)$ is close to this value, then the model is well calibrated. We consider two approaches for estimating $p_{\text{emp}}^i$. 
(1) CaPE (bin) where we divide the training set into bins, select the bin $b_i$ containing $f(x_i)$ and set $p_{\text{emp}}^i=p_{\text{emp}}^{(b_i)}$ in~\eqref{eq:p_emp}. 
(2) CaPE (kernel) where $p_{\text{emp}}^i$ is estimated through a moving average with a kernel function (see Appendix \ref{app:emp_estimation} for more details). Both methods are efficiently computed by sorting the predictions $\hat p_i$. \textcolor{black}{The calibration loss requires a reasonable estimation of the empirical probabilities  $p_{\text{emp}}^{(i)}$, which can be obtained from the model after early learning. Therefore using the calibration loss from the beginning is counterproductive, as demonstrated in Section~\ref{sec:beginning}.} 
%where $p_{\text{emp}}^i$ is an empirical estimation of $\mathbbm{P}[y = 1| f(\vx_i) \in I(q)]$ with $f(\vx_i) \in I(q)$, and $I(q)$ is a small interval centered at $q\in [0,1]$.
%\end{itemize}
We note that a variant of CaPE can be implemented using a weighted sum of the calibration and discrimination loss. 

CaPE is summarized in Algorithm~\ref{alg:algo_cali}.
Figures~\ref{fig:overfit} \textcolor{black}{and~\ref{fig:training_curve}} show that incorporating the calibration-loss minimization step indeed preserves calibration as training proceeds (this is not necessarily expected because CaPE minimizes a calibration loss \emph{on the training data}), and prevents the model from overfitting the observed outputs. This is beneficial also for the discriminative ability of the model, because it enables it to further reduce the cross-entropy loss without overfitting, \textcolor{black}{as shown in Figure~\ref{fig:training_curve}}. The experiments with synthetic and real-world data reported in Section~\ref{sec:experiments} suggest that this approach results in accurate probability estimates across a variety of realistic scenarios. 

\begin{figure*}[t]
    \centering
    \includegraphics[width=1.00\textwidth]{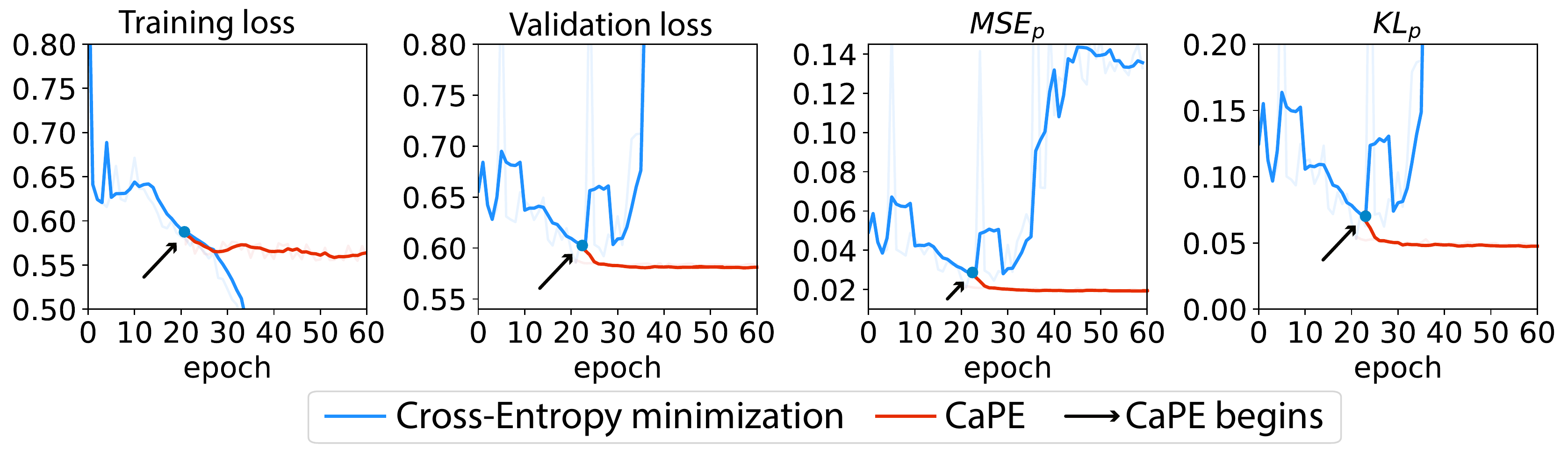}
    \caption{\textbf{Calibrated Probability Estimation prevents overfitting.}
    Comparison between the learning curves of cross-entropy (CE) minimization and the proposed calibrated probability estimation (CaPE), smoothed with a 5-epoch moving average. After an \emph{early-learning} stage where both training and validation losses decrease, CE minimization overfits (first and second graph), with disastrous consequences in terms of probability estimation (third and fourth graph).
    In contrast, CaPE prevents overfitting, continuing to improve the model while maintaining calibration (see Figure~\ref{fig:overfit}).}
    \label{fig:training_curve}
\end{figure*}

\begin{algorithm*}[t]
\small
\caption{Pseudocode for CaPE}\label{alg:algo_cali}
\vspace{-3mm}
\begin{multicols}{2}
\begin{algorithmic}
\REQUIRE{$f$} \COMMENT{early stopped model}
\REQUIRE $m$\COMMENT{freq. of training with $\mathcal{L}_{C}$}
\REQUIRE $\{\vx_i, y_i\}_{i=1}^N$ \COMMENT{training set}
\REQUIRE $K(p,q)\coloneqq \exp\left[-\left(p - q\right)^2/\sigma^2\right]$\COMMENT{Gaussian kernel}
\FOR{$t = 1$ to \texttt{num\_epochs}}
    \IF{$t\mod m = 0$}
        \STATE $\hat{p}_i \gets f(\vx_i), \forall i$ 
        \STATE Update $p^{i}_{\text{emp}}, \forall i$, with \textsc{Bin} or \textsc{Kernel}
        \STATE $\mathcal{L} \gets \mathcal{L}_{C}$ \COMMENT{compute discrimination loss}
    \ELSE
        \STATE $\mathcal{L} \gets \mathcal{L}_{D}$ \COMMENT{compute calibration loss}
    \ENDIF
%     \STATE $f \gets $ backprop with $\mathcal{L}$ \COMMENT{train with loss}
\ENDFOR
\end{algorithmic}
\columnbreak
\begin{algorithmic}
\FUNCTION{\textsc{Bin}$(B)$ \hfill\(\triangleright\) $B$-number of bins}
% \COMMENT{$B$-number of bins}
    \STATE $I_1, \cdots I_B \gets$ partitions by quantile of ${\{\hat{p}_j\}_{j=1}^N}$
    \STATE Find $b$ such that $\hat{p}_i \in I_b$
    \STATE $\text{Index}(I_b) \gets \{j | \hat p_j\in I_b\}$ \COMMENT{get indices in bin $b$}
    \STATE $p^i_\text{emp} \gets \frac{1}{|I_b|}\sum\limits_{i \in \text{Index}(I_b)} y_i$ \COMMENT{empirical mean of bin $b$}
\ENDFUNCTION
\vspace{3mm}

\FUNCTION{\textsc{Kernel}($r$, $K$) \hfill\(\triangleright\) $r$-window size; $K$-kernel}
\STATE $N_r(i) \gets r$-nearest neighbor of $\hat{p}_{i}$ \textcolor{black}{(output probability space)}
% \STATE $ k(m,n) \coloneqq \exp\left[-\left(\hat{p}_{(m)} - \hat{p}_{(n)}\right)^2/\sigma^2\right]$
% \COMMENT{kernel}
\STATE $Z \gets \sum\limits_{\hat{p}_j \in N_r(i)}K\left(\hat{p}_{i},\hat{p}_j\right)$ \COMMENT{normalization factor}
\STATE $p_{\text{emp}}^{i}  \gets \sum\limits_{\hat{p}_j \in N_r(i)} K\left(\hat{p}_{i},\hat{p}_j\right) y_{j}/Z$ \COMMENT{kernel smooth}
\ENDFUNCTION
\end{algorithmic}
\end{multicols}
\vspace{-3mm}
\end{algorithm*}

\section{Related Work}
\label{sec:related_work}
 Neural networks trained for classification often generate a probability associated with their prediction which quantifies its uncertainty. These estimates have been found to be inaccurate in certain situations~\citep{Mukhoti2020CalibratingDN,guo2017calibration,zhao2021right} (although a recent study suggests that  transformer-based models tend to be well calibrated in vision-based classification tasks~\citep{minderer2021revisiting}). % For the models that are not already well-calibrated, some calibration methods are used for mitigating the issue.
 %Recent works seek to quantify the accuracy of the network's confidence score in terms of calibration.
 %Rigorous efforts have been made to better represent model uncertainty in terms of calibration. 
%  Although probability estimation involves uncertainties that are inherent to the data, not just model confidence, methods for calibrating classifiers are relevant. Existing calibration methods 
Calibration methods to mitigate this issue broadly fall into three categories depending on whether they: (1) postprocess the outputs of a trained model, (2) combine multiple model outputs, or (3) modify the training process. 
% \\

\textbf{Post-processing methods} transform the output probabilities in order to improve calibration on held-out data~\citep{Zadrozny2001ObtainingCP, spline,betacalibration,dirchlet}. For example, Platt scaling~\citep{Platt1999} fits a logistic function that minimizes the negative log-likelihood loss. Temperature scaling~\citep{guo2017calibration} does the same with a temperature parameter augmenting the softmax function. Another approach trains a recalibration model on the outputs of an uncalibrated model~\citep{kuleshov2018accurate}.
%The learned post-processing transformation is restricted to a specific family of functions, limiting calibration improvements. 
In contrast to these methods, CaPE enforces calibration \emph{during training}, which has the advantage of enabling further improvements in the discriminative abilities of the model.
% \\

\textbf{Ensembling methods} combine multiple models to improve generalization. Mix-n-Match~\citep{Zhang2020MixnMatchEA} uses a single model, and ensembles predictions using multiple temperature scaling transformations. Other methods \citep{deepensemble, swag} ensemble multiple models obtained using different initializations. These approaches are compatible with the proposed method CaPE; how to combine them effectively is an interesting future research direction.

\textbf{Modified training methods} can be divided into two groups. The first group smooths the target 0/1 labels in order to prevent output estimates from collapsing to 0/1~\citep{Mukhoti2020CalibratingDN, Szegedy2016RethinkingTI, Zhang2018mixupBE, thulasidasan2020mixup}. The second group, attaches additional calibration penalties to a cross entropy loss~\citep{MMCE, entropy, liang2020imporved}. CaPE is most similar in spirit to the latter methods, although its data-driven calibration loss is different to the penalties used in these techniques. 

\textbf{Datasets for evaluation} The methods discussed in this section were developed for calibration in classification, and tested on datasets such as CIFAR-10/100~\citep{Krizhevsky09learningmultiple}, SVHN~\citep{netzer2011reading}, and ImageNet~\citep{imagenet} where the relationship between labels and input data is \emph{completely deterministic}. Here, we evaluate these methods on synthetic and real-world probability-estimation problems with inherent uncertainty. %However, for probability estimation evaluation, it is the datasets with inherent uncertainties that are critical. They have not been systematically explored. We fill in this gap by introducing a synthetic dataset and three real-world datasets which we use to systematically evaluate existing methodology and metrics for probability estimation. 

\begin{table*}[t]
    \centering
    \scriptsize{
   \begin{tabular}{lc@{\hspace{0.2cm}}c|c@{\hspace{0.2cm}}c|c@{\hspace{0.2cm}}c|c@{\hspace{0.2cm}}c|c@{\hspace{0.2cm}}c}
\toprule
   \small{Methods}  &      \multicolumn{2}{c|}{\small{\textit{Linear}}} &    \multicolumn{2}{c|}{\small{\textit{Sigmoid}}}         &       \multicolumn{2}{c|}{\small{\textit{Centered}}}        &         \multicolumn{2}{c|}{\small{\textit{Skewed}}}  &\multicolumn{2}{c}{\small{\textit{Discrete}}}           \\
 \multicolumn{1}{r}{$\small{(\times 10^{-2})}$} &  $\small{\text{MSE}_p}$ &  $\small{\text{KL}_p}$ &  $\small{\text{MSE}_p}$ &  $\small{\text{KL}_p}$  &  $\small{\text{MSE}_p}$ &  $\small{\text{KL}_p}$ &  $\small{\text{MSE}_p}$ &  $\small{\text{KL}_p}$ &  $\small{\text{MSE}_p}$ &  $\small{\text{KL}_p}$ \\
\midrule
CE + resampled labels* & $1.14_{\pm.04}$ & $2.81_{\pm.11}$ & $5.34_{\pm.20}$ & $14.82_{\pm.51}$ & $4.21_{\pm.15}$ & $4.21_{\pm.15}$ & $4.21_{\pm.15}$ & $4.21_{\pm.15}$ & $4.21_{\pm.15}$ & $4.21_{\pm.15}$\\
\midrule
CE early-stop   &   $4.21_{\pm.15}$ &  $10.94_{\pm.36}$ & $6.16_{\pm.17}$ &  $17.16_{\pm.48}$ &        $0.48_{\pm.01}$ &        $0.98_{\pm.03}$ &        $0.40_{\pm.01}$ &        $1.79_{\pm.06}$ &        $2.24_{\pm.08}$ &        $5.27_{\pm.17}$ \\
% CE (overfit) & 15.32 & 96.12 & 14.88 & 106.43 & 15.47 & 109.34 & 3.92 & 37.41 & 12.68 & 77.34\\
Temperature      &  $2.73_{\pm.11}$ &  $6.75_{\pm.25}$ &  $6.16_{\pm.17}$ &   $17.09_{\pm.43}$ & $0.48_{\pm.01}$ &        $0.98_{\pm.03}$ &        $0.40_{\pm.02}$ &        $1.76_{\pm.06}$ &        $2.21_{\pm.08}$ &        $5.15_{\pm.18}$ \\
Platt Scaling & $2.48_{\pm.09}$ &  $6.07_{\pm.22}$ & $5.78_{\pm.19}$ & $16.15_{\pm.47}$ &        $0.41_{\pm.01}$ &        $0.83_{\pm.03}$ &        $\mathbf{0.39}_{\pm.01}$ &        $1.72_{\pm.06}$ &        $2.06_{\pm.08}$ &        $4.83_{\pm.17}$ \\
Dirichlet Cal.      &  $3.56_{\pm.13}$ & $9.08_{\pm.29}$ &  $8.64_{\pm.26}$ & $25.18_{\pm.58}$ &        $0.46_{\pm.01}$ &        $0.94_{\pm.03}$ &        $0.47_{\pm.02}$ &        $2.31_{\pm.07}$ &        $2.74_{\pm.10}$ &        $6.53_{\pm.22}$ \\
Focal Loss        & $4.13_{\pm.11}$ & $10.52_{\pm.28}$ &  $6.86_{\pm.21}$ &       $19.46_{\pm.50}$ &        $0.48_{\pm.01}$ &        $0.97_{\pm.03}$ &        $1.28_{\pm.03}$ &       $1.63_{\pm.66}$ &        $2.92_{\pm.08}$ &        $6.77_{\pm.21}$ \\
Mix-n-match      &  $2.70_{\pm.11}$ &   $6.72_{\pm.24}$ &  $6.12_{\pm.17}$ &       $17.08_{\pm.46}$ &        $0.48_{\pm.01}$ &        $0.98_{\pm.03}$ &        ${0.40}_{\pm.01}$ &        $1.75_{\pm.05}$ &        $2.21_{\pm.08}$ &        $5.14_{\pm.18}$ \\
Entropy Reg.         &  $2.58_{\pm.09}$ &  $6.65_{\pm.21}$ & $7.02_{\pm.17}$ &       $21.16_{\pm.42}$ &        $0.45_{\pm.01}$ &        $0.92_{\pm.03}$ &        $1.18_{\pm.03}$ &       $10.74_{\pm.65}$ &        $2.84_{\pm.08}$ &        $6.62_{\pm.19}$ \\
MMCE  Reg.   &  $2.24_{\pm.08}$ &  $5.68_{\pm.20}$ &  $5.35_{\pm.18}$ & $15.06_{\pm.49}$ &        $0.44_{\pm.01}$ &        $0.90_{\pm.03}$ &        $0.54_{\pm.02}$ &        $2.44_{\pm.08}$ &        $2.09_{\pm.08}$ &        $4.92_{\pm.18}$ \\
Deep Ensemble    & $1.90_{\pm.07}$ &  $4.55_{\pm.18}$ & $5.86_{\pm.22}$ &   $16.46_{\pm.60}$ &        $0.44_{\pm.01}$ &        $0.89_{\pm.03}$ &        $0.55_{\pm.02}$ &        $2.58_{\pm.07}$ &        $1.97_{\pm.08}$ &        $4.61_{\pm.17}$ \\
\midrule
CaPE (bin)       &  $1.78_{\pm.07}$ &  $4.35_{\pm.16}$ &   $5.17_{\pm.20}$ &       $\mathbf{14.27}_{\pm.49}$ &        $\mathbf{0.38}_{\pm.01}$ &        $\mathbf{0.78}_{\pm.03}$ &        $0.40_{\pm.02}$ &        $1.73_{\pm.06}$ &        $\mathbf{1.81}_{\pm.08}$ &        $\mathbf{4.28}_{\pm.18}$ \\
CaPE (kernel)        &  {$\mathbf{1.74}_{\pm.07}$} &  {${4.30_{\pm.17}}$} &        $\mathbf{5.16}_{\pm.20}$ & $14.34_{\pm.49}$  &        $0.40_{\pm.01}$ &        $0.81_{\pm.03}$  &        $\mathbf{0.39}_{\pm.01}$ &        $\mathbf{1.69}_{\pm.06}$ &        $1.84_{\pm.08}$ &        $4.35_{\pm.17}$ \\

\bottomrule

\end{tabular}
}
    \caption{Results on synthetic data. All numbers are downscaled by $10^{-2}$. Appendix~\ref{app:face_results} shows a table with confidence intervals obtained via bootstrapping. * is a model trained via cross-entropy minimization from data obtained by continuous label resampling. No baseline outperforms the proposed method CaPE in any of the scenarios, and CaPE outperforms all the individual baseline models in most scenarios, in both metrics, except for the skewed case, where the difference is statistically insignificant.
    }
    \label{tab:syntheic_data_metrics}
\end{table*}

\section{Experiments}
\label{sec:experiments}
% \vspace{-3mm}
\subsection{Synthetic Dataset: Face-based Risk Prediction}
\label{sec:synthetic_data}
To benchmark probability-estimation methods, we build a synthetic dataset based on UTKFace~\citep{zhifei2017cvpr}, containing face images and associated ages. %Each sample in the dataset consists of a face image $x_i$ and a corresponding age $z_i$ of the person. 
We use the age of the $i$th person $z_i$ to assign them a risk of contracting a disease $p_i= \psi(z_i)$ for a fixed function $\psi: \mathbb{Z}_{\ge 0} \rightarrow [0,1]$. Then we simulate whether the person actually contracts the illness (label $y_i=1$) or not ($y_i=0$) with probability $p_i$. The probability-estimation task is to estimate the ground-truth probability $p_i$ from the face image $x_i$ using a model that only has access to the images and the binary observations during training. This requires learning to discriminate age and map it to the corresponding risk. We design $\psi$ to create five scenarios, inspired by real-world data \textcolor{black}{(see  Appendix~\ref{app:syntheic})}:

%Suppose there are some age-related diseases or events, we can model the risk of these events with a probability that associates to age. Hence, for each sample, we generate a simulated probability $p_i$ from the age $z_i$ by some function $\psi: \mathbb{N} \rightarrow [0,1]$, where $p_i = \psi(\min\{z, 100\})$. Then the simulated 0-1 label is sampled from Bernoulli distribution, $y_i \sim \text{Bern}(p_i)$. 

%We split the dataset in to training, validation, and test set. The training and validation set only contains images $x_i$ and 0-1 labels $y_i$ for training and tuning the model. The held-out test set contains the ground truth probability $p_i$'s, alongside $x_i$ and $y_i$'s, to evaluate the performance of the model for probability estimation task. Note that we keep $p_i$'s unobserved before we finalize the model. Since we have access to the ground truth probability, we can benchmark different calibration methods based on how well we estimate the true ground truth probability. We measure the closeness between the estimate $\hat{p}_i$ and the ground truth probability $p_i$ using $\text{MSE}_p$ and $\text{KL}_p$ as defined in Section~\ref{sec:metrics}.
%The previous widely-used datasets for classification (e.g. Cifar-10, ImageNet) usually have well-balanced classes, which are often too idealized to model the real-world data. Within the synthetic data, we are privileged to simulated the data with various distributions of ground truth probabilities by manipulating $\psi$. 

\begin{itemize}[leftmargin=*]
    \item \textbf{Linear}: Equally-spaced, inspired by weather forecasting: \\
    $\psi (z) = z / 100$
    \item \textbf{Sigmoid}: Concentrated near two extremes:\\
    $\psi (z) =  \sigmoid(25(z/100 - 0.29))$
    \item \textbf{Skewed}: Clustered close to zero, inspired by vehicle-collision detection: 
    $\psi (z) = z / 250$
    \item \textbf{Centered}: Clustered in the center, inspired by cancer-survival prediction:
    $\psi (z) = z / 300 + 0.35$
    \item \textbf{Discrete}: Discretized:
    $\psi (z) = 0.2 [ \mathbbm{1}_{\{z > 20\}} + \mathbbm{1}_{\{z > 40\}} + \mathbbm{1}_{\{z > 60\}} + \mathbbm{1}_{\{z > 80\}} ]+0.1$
\end{itemize}

In addition, we report an experiment with simulated probabilistic labels on CIFAR-10 in Appendix~\ref{app:cifar10}. % where the ground truth (GT) probabilities for CIFAR-10 are simulated (similar to UTKface dataset). But we felt that the problem was not realistic, which is why we designed our simulated benchmark dataset (UTKface) to mimic a disease-prediction application. The CIFAR-10 results are consistent with those of the other datasets {(see  Appendix~\ref{app:cifar10})}.

\begin{figure*}[t]
    \centering
    \includegraphics[width=0.25\textwidth]{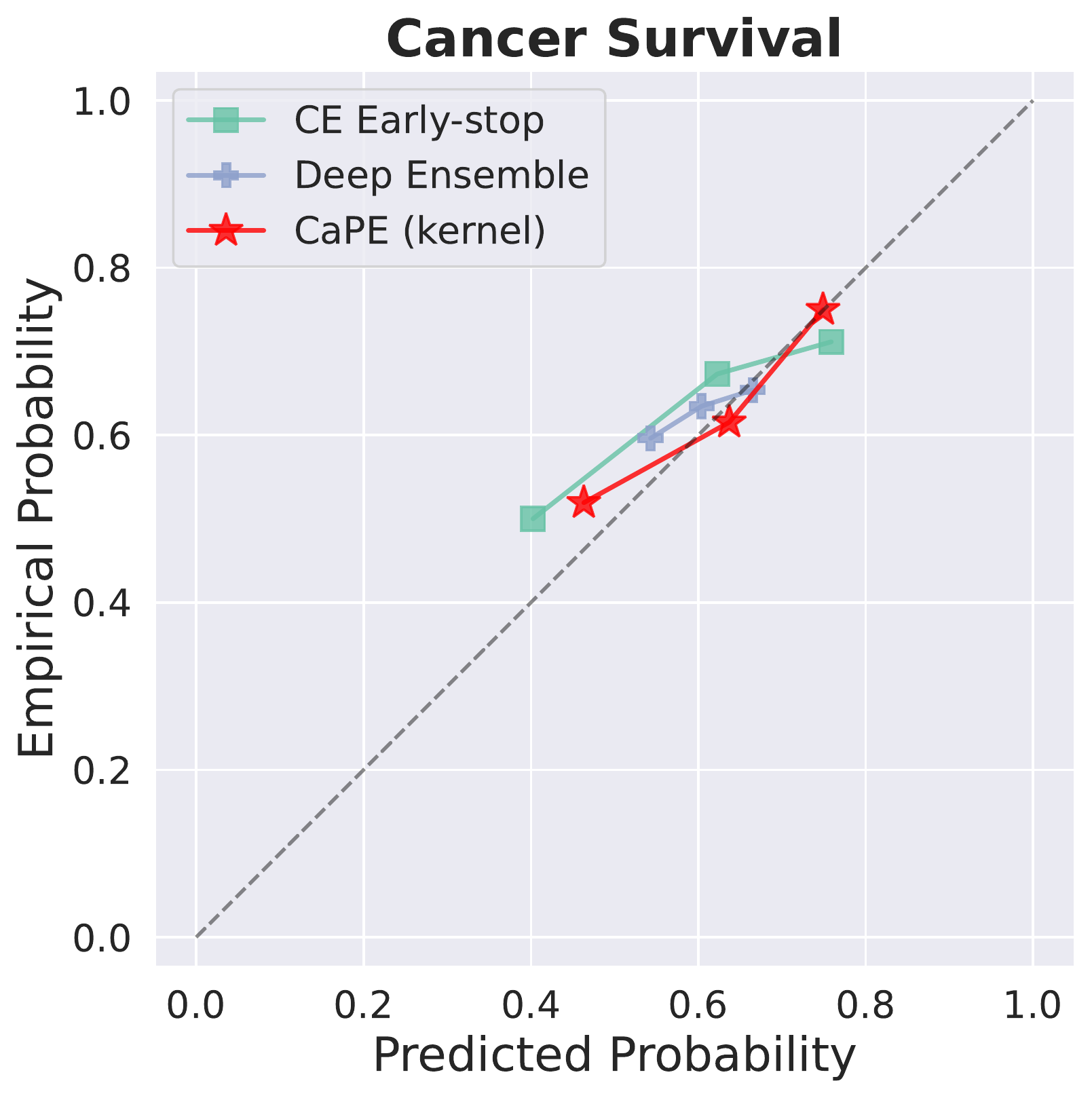}\hspace{2em}
    \includegraphics[width=0.25\textwidth]{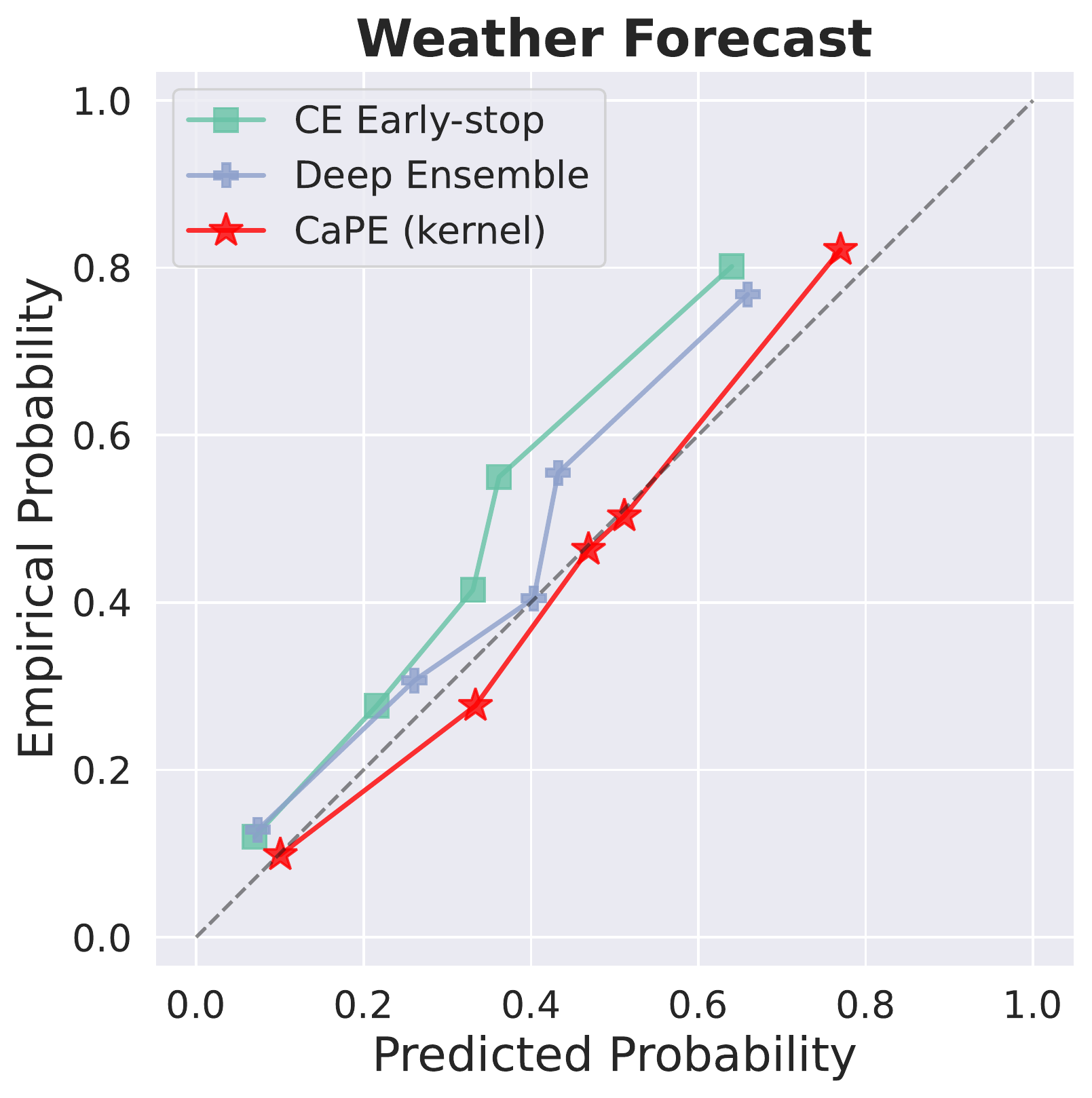}\hspace{2em}
    \includegraphics[width=0.25\textwidth]{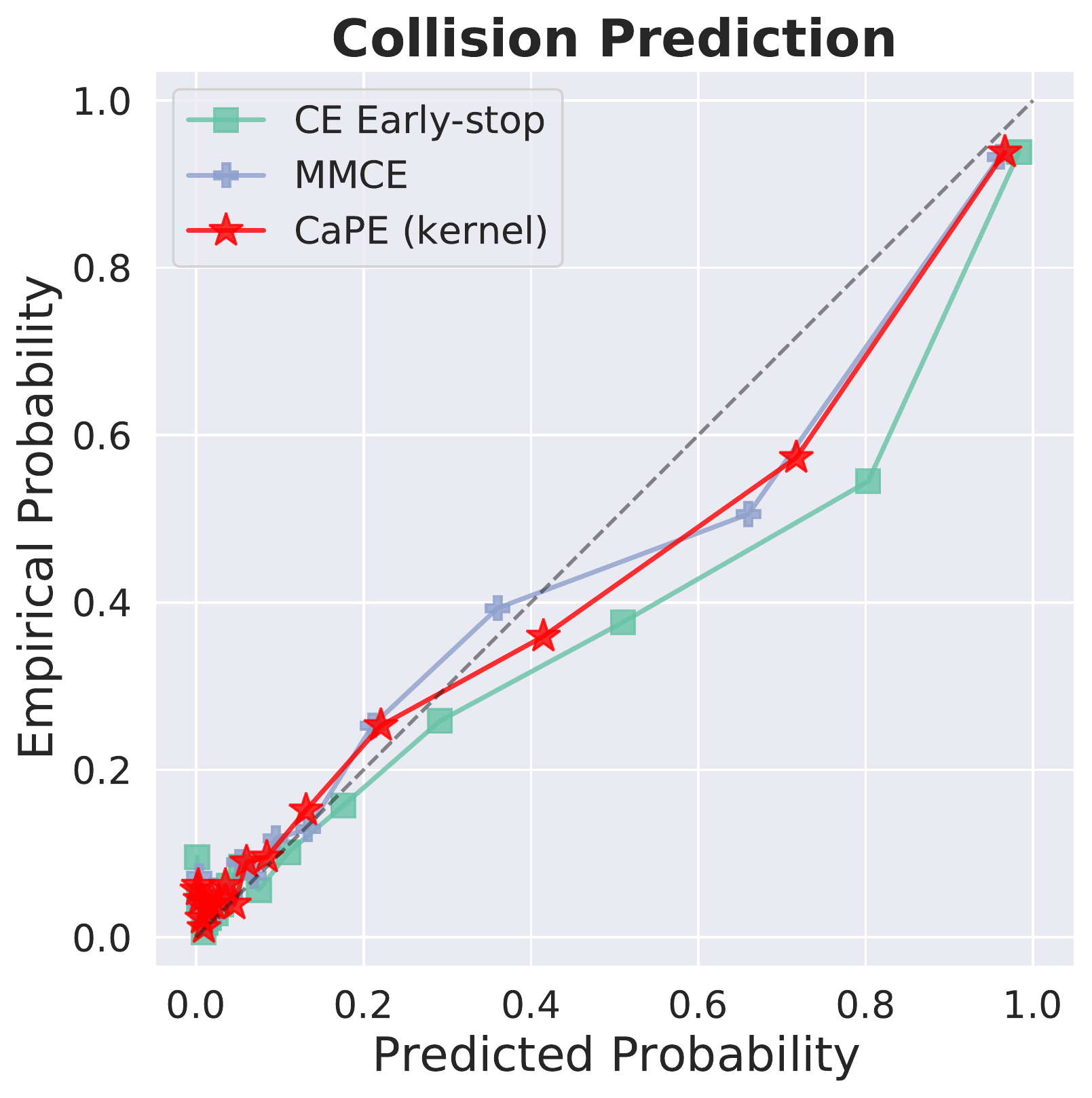}
    \caption{\textbf{Reliability diagrams for real-world data.} Reliability diagrams of binned mean predicted and empirical probabilities  (see Section~\ref{sec:metrics}). The dashed line indicates perfect calibration. The results are computed on test data for cross-entropy minimization with early stopping, the proposed method (CaPE) and the best baseline for each dataset. % The numbers of bins varies by the number of samples and skewness for visualization. CE, CaPE, and the best of the baselines are plotted. They show 
    CaPE produces better calibrated outputs. Appendix~\ref{app:app_reliability} shows additional reliability diagrams.}
    \label{fig:real_world_reliablility_curve}
\end{figure*}
\subsection{Real-World Datasets}
% Motivation + brief dataset description. what is data? what is outcome? connection to simulated data
We use three open-source, real-world datasets to benchmark probability-estimation approaches (see Appendix~\ref{app:real-world} for further details on the datasets and experiments).

% To this end, we found three datasets: climate prediction dataset, histopathology dataset, and traffic accident datasets. These datasets are image-based datasets and therefore, require us to use a neural network for estimating the probabilities. 
\textbf{Survival of Cancer Patients.} 
Histopathology aims to identify tumor cells, cancer subtypes, and the stage and level of differentiation of cancer. Hematoxylin and Eosin (H\&E)-stained slides are the most common type of histopathology data used for clinical decision making. In particular, they can be used for survival prediction~\citep{Wulczyn2020DeepLS}, %(the probability of survival within 5 years) 
which is critical in evaluating the prognosis of patients. 
% Also, genomic mutation and metastasis happen stochastically, so the survival status of a patient is not deterministic. 
Treatments assigned to patients after diagnosis are not personalized and their impact on cancer trajectory is complex, so the survival status of a patient is not deterministic. In this work, we use the H\&E slides of non-small cell lung cancers from The Cancer Genome Atlas Program (TCGA)\footnote{\url{https://www.cancer.gov/tcga}} to estimate the the 5-year survival probability of cancer patients. %Lung cancer has a typically low mortality rate, so the probability of survival cannot go to extremes, which is similar to \textit{Centered} scenario in synthetic data.
\textcolor{black}{The outcome distribution is similar to the \textit{Centered} scenario in Section~\ref{sec:synthetic_data}.}

\textbf{Weather Forecasting.}
%Weather forecasting models predict the evolution of thermodynamical properties (temperature, pressure, etc.) and weather conditions (precipitation, wind velocities) based on data measurements over dense spatio-temporal grids. 
The atmosphere is governed by nonlinear dynamics, hence weather forecast models possess inherent uncertainties% and weather predictions are not purely deterministic
~\citep{richardson2007weather}. Nowcasting, weather prediction in the near future, is of great operational significance, especially with increasing number of extreme inclement weather conditions~\citep{agrawal2019machine, ravuri2021skillful}. We use the German Weather service dataset\footnote{\url{https://opendata.dwd.de/weather/radar/}}, which contains quality-controlled rainfall-depth composites from 17 operational Doppler radars. We use 30 minutes of precipitation data to predict if the mean precipitation over the area covered will increase or decrease one hour after the most recent measurement. Three precipitation maps from the past 30 minutes serve as an input. %The training labels are the 0/1 events of the mean precipitation increasing. 
\textcolor{black}{The outcome distribution is similar to the \textit{Linear} scenario in Section~\ref{sec:synthetic_data}.}

\begin{table*}[t]
\centering
{
\scriptsize
    \begin{tabular}{lrrr|rrr|rrr}
\toprule
Method & \multicolumn{3}{c}{Cancer Survival} & \multicolumn{3}{c}{Weather forecasting} & \multicolumn{3}{c}{Collision Prediction}\\
\multicolumn{1}{r}{$(\times 10^{-2})$} &    AUC &    ECE   &  Brier  &  AUC &   ECE   &  Brier &  AUC &    ECE   &  Brier\\
\midrule
CE early-stop               &  58.88 &  12.25   &  23.96  & 77.64& 10.91 & 20.57 & 85.68 & 4.36 & 8.59\\
Temperature      &  58.88 &  12.07  &  23.73 &77.64 & 8.66 & 20.21 & 85.68 & 4.56 & 8.51\\
Platt Scaling &  58.91 &  10.28   &  23.33 & 77.65& 6.97 & 19.53 & 85.76 & 3.04 & 8.23\\
Dirichlet Cal. &  49.89 &  13.83  &  24.08 & 77.51& 14.29 & 21.89 & 83.36 & 5.78 & 8.78\\
Mix-n-match &  58.88 &  12.16  &  23.67 &77.64 & 8.65 & 20.21  & 85.68 & 4.40 & 8.52\\
Focal Loss         &  55.02 &  12.15 &  23.31 &76.18 & 8.32 & 20.27 & 82.21 & 9.07 & 9.82\\
Entropy Reg.       &  56.29 &  11.73  &  23.62  &79.01 & 10.53 & 19.77 & 83.15 & 14.54 & 11.10\\
MMCE Reg.            & 48.45 & 11.84 & 23.73 &76.69 & 8.46 & 20.12 & 85.18 & \textbf{2.94} & 8.48\\
Deep Ensemble    &  52.46 &	9.99 &	23.47 &\textbf{79.86} & 7.41 & 18.82 & 85.27 & 3.15 & 8.55\\
\midrule
CaPE (bin)       & \textbf{ 61.44 }&  12.31 &  23.20 & 78.99& 5.16 & \textbf{18.37}& 85.70 & 3.16 & 8.18\\
CaPE (kernel)        &  61.22 &  \textbf{9.48} &  \textbf{23.18} & 79.00& \textbf{5.08} & 18.39 & \textbf{85.95} & 3.22 & \textbf{8.13}\\
\bottomrule
\end{tabular}
}
\caption{Results on cancer-survival prediction, weather forecasting, and collision prediction. All numbers are downscaled by $10^{-2}$. Tables with all the metrics described in Section~\ref{sec:metrics} are provided in Appendix~\ref{app:real_world_results}. The proposed method CaPE outperforms existing techniques in terms of Brier score, the metric that best captures probability-estimation accuracy.}
    \label{tab:real_world_metrics}
\end{table*}

\textbf{Collision Prediction.}
Vehicle collision is one of the leading causes of death in the world. Reliable collision prediction systems are therefore instrumental in saving human lives. These systems predict potential collisions from dashcam cameras. Collisions are influenced by many unknown factors, and hence are not deterministic. Following ~\cite{kim2019crash},  we use $0.3$ seconds of real dashcam videos from \texttt{YouTubeCrash} dataset as input, and predict the probability of a collision in the next $2$ seconds. \textcolor{black}{The data are very imbalanced as the number of collisions is very low, so the outcome distribution is similar to the \textit{Skewed} scenario in Section~\ref{sec:synthetic_data}.}

\subsection{Baselines}
\label{sec:baselines}

We apply existing calibration methods developed for classification to probability estimation (as well as cross-entropy minimization with early-stopping): (1) \textbf{Three post-processing methods}: Temperature Scaling~\citep{guo2017calibration}, Platt Scaling~\citep{Platt1999}, and Dirichlet Calibration~\citep{dirchlet} applied to the best CE model, (2) \textbf{Two Ensemble Methods}:  Mix-n-Match~\citep{Zhang2020MixnMatchEA} applied to best CE model, and Deep Ensemble~\citep{deepensemble} \textcolor{black}{with 5 networks}, and (3) \textbf{Three Modified Training methods}: Focal loss~\citep{Mukhoti2020CalibratingDN}, entropy-maximizing loss~\citep{entropy}, and MMCE regularization~\citep{MMCE}. Appendix \ref{app:baselines} provides a detailed description. For our experiments on synthetic data, we also compare against a model trained on a \textbf{large amount of data} by repeatedly sampling new outcomes from the ground-truth probabilities at each epoch. This provides a best-case reference for each scenario. We refer to Appendix~\ref{app:baselines}, for a detailed discussion of the baselines and hyperparameter optimization.
 
\section{Results and Discussion}
\label{sec:results}
%\paragraph{Synthetic data}
%
%\vspace{-3mm}

%\paragraph{Real-world data}

Table \ref{tab:syntheic_data_metrics} shows that calibration methods developed for classification can be effective for probability estimation. 
However, the performance of some methods is not consistent across all scenarios. For instance, regularization with negative entropy, which penalizes very high/low confidence, performs worse than CE when the ground-truth probability is close to 0 or 1. In contrast, methods that do not make strong assumptions tend to generalize better to multiple scenarios (e.g. Platt scaling consistently beats CE).
The proposed method
CaPE outperforms other techniques in most scenarios, and even matches the performance of the best-case baseline with resampled labels for the \textit{Sigmoid} scenario.
%, where the underlying probabilities have large variance, CaPE's performance is comparable with that of continuous training with infinite data. 
 % CaPE also does not make assumptions about the ground-truth probabilities, and hence is able to perform effectively across different scenarios.
 Finally, we observe that the \textit{Skewed} scenario is very challenging: most methods barely improve the CE baseline.%the differences among many of the methods are very small.
 %keeps the model calibrated by minimizing the data-driven empirical calibration errors. Hence, CaPE also proves to be adaptive to the different scenarios considered.

% \begin{figure*}[t]
%     \centering
%     \includegraphics[width=0.25\textwidth]{plots/reliablility_curve_cancer_zoomout.pdf}\hspace{2em}
%     \includegraphics[width=0.25\textwidth]{plots/reliablility_curve_weather.pdf}\hspace{2em}
%     \includegraphics[width=0.25\textwidth]{plots/reliablility_curve_crash.pdf}
%     \caption{\textbf{Reliability diagrams for real-world data.} Reliability diagrams computed on test data for cross-entropy minimization with early stopping, the proposed method (CaPE) and the best baseline for each dataset. % The numbers of bins varies by the number of samples and skewness for visualization. CE, CaPE, and the best of the baselines are plotted. They show 
%     CaPE produces better calibrated outputs. Appendix~\ref{app:app_reliability} shows additional diagrams.}
%     \label{fig:real_world_reliablility_curve}
% \end{figure*}

Table~\ref{tab:real_world_metrics} compares the baseline methods and CaPE on the three real-world datasets. We present AUC, ECE for 15 equally-sized bins, and Brier score, as complementary metrics since the underlying ground-truth probabilities are unobserved. As discussed in Section~\ref{sec:metrics}, %empirical observations show that the 
Brier score is the metric that best captures the quality of  %highly-correlated with $\text{MSE}_p$ on sythetic data. Hence, it is a viable proxy metric for 
probability estimates. CaPE has the lowest Brier score in all three datasets, while also achieving lower ECE values and higher AUC values than most other methods.  
%Beyond calibration, Figure~\ref{fig:training_curve} also shows that CaPE is able to obtain a lower validation loss than early-stopping CE model. It implies CaPE does not harm the discriminative power of the model, while improving its calibration. 
This demonstrates that enforcing better probability estimation during training also yields a more discriminative model. The reliability diagrams in Figure~\ref{fig:real_world_reliablility_curve} depict the probability estimates produced by CE, CaPE and the best baseline method on the three datasets, demonstrating that CaPE produces outputs that are better calibrated on real data.

Figure~\ref{fig:real_world_reliablility_curve} also shows that each real-world dataset closely aligns with a particular synthetic scenario: cancer survival with
\textit{Centered}; weather forecasting with \textit{Linear}; collision prediction with \textit{Skewed}. This supports the significance of our synthetic benchmark dataset, and provides insights in the differences among baseline models. For example, model averaging with deep ensemble performs well on weather forecasting but has higher Brier scores than Platt scaling on the other two datasets (see Appendix~\ref{app:case_study} for further analysis based on pathological stages). Accordingly, deep ensemble also underperforms in the synthetic scenarios where ground-truth probabilities are clustered closely (\emph{Sigmoid}, \emph{Centered}), but is  effective for \emph{Linear}. 
Finally, as in the synthetic \textit{Skewed} scenario, all methods had similar performance on the collision prediction task. This highlights the importance of considering different scenarios when evaluating methodology for probability estimation.

\section{Conclusion}

In this work we evaluate existing approaches to improve the output probabilities of neural networks on probability-estimation problems. To this end, we introduce a new synthetic benchmark dataset designed to reproduce several realistic scenarios, and also gather three real-world datasets relevant to medicine, climatology, and self-driving cars. In addition, we provide theoretical analysis showing that early learning and memorization are fundamental phenomena in high-dimensional probability estimation. Motivated by this, we
propose a novel approach that outperforms existing approaches on our simulated and real-world benchmarks. \textcolor{black}{An important application for probability estimation is in the context of survival analysis, which can be recast as estimation of conditional probabilities~ \citep{deephit, DBLP:journals/corr/abs-2008-01774, goldstein2021xcal}. 
%This paper focuses on aleatoric uncertainty, a possible extension is to consider CaPE's performance in epistemic uncertainty scenarios. 
Another interesting research direction is to consider problems with several possible uncertain outcomes (analogous to multiclass classification).}

\section*{Acknowledgements}
The authors gratefully acknowledge support from Alzheimer’s Association grant AARG-NTF-21-
848627 (S.L.), NIH grant R01 LM01331 (A.K., B.Y.), NSF grants HDR-1940097 (M.L.), DMS2009752 (C.F.G., S.L.) and NRT-1922658 (A.K., S.L., S.M., B.Y., W.Z.), and the NYU Langone
Health Predictive Analytics Unit (N.R., W.Z.). Funding for the Multiscale Machine Learning In coupled Earth System Modeling (M2LInES) project was provided to Laure Zanna by the generosity of Eric and Wendy Schmidt by recommendation of the Schmidt Futures program. We would also like to thank Juan Argote for useful
discussions, and the reviewers for their comments. 
%\newpage

\bibliography{example_paper}
\bibliographystyle{icml2022}

%%%%%%%%%%%%%%%%%%%%%%%%%%%%%%%%%%%%%%%%%%%%%%%%%%%%%%%%%%%%%%%%%%%%%%%%%%%%%%%
%%%%%%%%%%%%%%%%%%%%%%%%%%%%%%%%%%%%%%%%%%%%%%%%%%%%%%%%%%%%%%%%%%%%%%%%%%%%%%%
% APPENDIX
%%%%%%%%%%%%%%%%%%%%%%%%%%%%%%%%%%%%%%%%%%%%%%%%%%%%%%%%%%%%%%%%%%%%%%%%%%%%%%%
%%%%%%%%%%%%%%%%%%%%%%%%%%%%%%%%%%%%%%%%%%%%%%%%%%%%%%%%%%%%%%%%%%%%%%%%%%%%%%%
\newpage
\appendix
\onecolumn
\textcolor{black}{
\section{Analytical Model}
\label{app:analytical_model}}
In this appendix, we propose a simple analytical model that helps to demonstrate the early learning and memorization phenomena in probability estimation. Consider the following standard logistic regression problem: we are given i.i.d. data $\{(y_i,\bm{x_i})\}_{i\leq n}$ where $y_i\in\{+1,-1\}$ are labels\footnote{For mathematical convenience, in this appendix we work with labels in $\{ \pm 1\}$ rather than $\{0, 1\}$.} and $\bm{x_i}\sim \mathcal{N}(0,I_p)$ are standard Gaussian covariate vectors. Given $\bm{x_i}=\bm{x}$, the label $y_i$ is distributed according to
\begin{align}
    \mathbb{P}(y_i=1|\bm{x_i}=\bm{x})=\sigma(\langle\vtheta^*_p,\vx\rangle),\label{true_prob}
\end{align}
where $\sigma(x)=(1+e^{-x})^{-1}$ is the sigmoid function and $\vtheta^*_p\in\R^p$ is the ground-truth parameter with Euclidean norm $\lVert\vtheta^*_p\rVert=\gamma>0$. Throughout, we assume that we are working in the high dimensional regime: the ratio $\frac{p}{n}\to\kappa\in(0,+\infty)$ as $n\to\infty$. For convenience, we denote $N_0$ an integer such that $\frac{p}{n}<2\kappa$ for $\forall n>N_0$. \textcolor{black}{In Section~\ref{app:linear_model} of the appendix, we provide a numerical example to illustrate this theoretical model.}

Given a fresh sample $\vz\in\R^p$, we predict the probability of the associated label being one via $\sigma(\langle \hat{\vtheta},\vz\rangle)$ for some $\hat{\vtheta}\in\R^p$, and we train the estimator $\hat{\vtheta}$ through gradient descent to minimize the cross-entropy:
\begin{align}
    L_n(\vtheta) & =- \frac{1}{n}\sum_{i=1}^n\left\{\left(\frac{1+ y_i}{2}\right) \log \sigma(\langle \vtheta, \bm{x_i} \rangle) + \left(\frac{1- y_i}{2}\right) \log (1-\sigma(\langle \vtheta, \bm{x_i} \rangle)) \right\}\\
    & =\frac{1}{n}\sum_{i=1}^n\left\{-\frac{1}{2}y_i\langle\vtheta,\bm{x_i}\rangle+\log\left(e^{\frac{1}{2}\langle\vtheta,\bm{x_i}\rangle}+e^{-\frac{1}{2}\langle\vtheta,\bm{x_i}\rangle}\right) \right\}\\
    \hat{\vtheta}^{k+1}_{\eta,n} & =\hat{\vtheta}^{k}_{\eta,n}-\eta \nabla L_n(\hat{\vtheta}^{k}_{\eta,n}),\quad k=0,1,... \label{gradient_descent}
\end{align}
where we choose constant step size $\eta<\delta$, for some prefixed $\delta>0$, to be determined later. For the initialization, we draw $\hat{\vtheta}^{0}_{\eta,n}$ uniformly from the sphere of radius $\gamma_0$.

Define the mean squared error 
\begin{equation}
\text{MSE}_{\eta,n}(k):=\E[(\sigma(\langle \hat{\vtheta}^{k}_{\eta,n},\vz\rangle)-\sigma(\langle\vtheta^*_p,\vz\rangle))^2],\end{equation}
where $z\sim\mathcal{N}(0,I_p)$ is a fresh Gaussian vector, independent of the data. Note that the expectation is taken with respect to all sources of randomness (including the randomness of data and initialization). 
 
 We present two main theorems that summarize our results. The first theorem states that for sufficiently large $n$ and $p$, the mean squared error decreases during the initial iterates of gradient descent.
 This phenomenon justifies the name ``early learning": during early stages of training, the predictions obtained by the iterates of gradient descent improve.
 
 \begin{theorem}\label{thmA1} For any $n>N^*$, the function $k\mapsto\text{MSE}_{\eta,n}(k)$ decreases for $k\in [0,\frac{T^*}{\eta})$, where the constants $N^*=N^*(\eta,\gamma,\gamma_0)$ depends on $\eta,\gamma,\gamma_0$ and $T^*=T^*(\kappa, \gamma,\gamma_0,\delta)>0$  depends on $\kappa, \gamma,\gamma_0$, and $\delta$.
 
 \end{theorem}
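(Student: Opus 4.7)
The plan is to reduce the analysis to an effectively two-dimensional population gradient flow using the rotational invariance of the Gaussian covariates, and then to show that the empirical iterates track this flow over a bounded time horizon where strict monotone decrease of the MSE is preserved.

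First I would compute the population gradient. Since $\E[(1+y_i)/2 \mid \vx_i] = \sigma(\langle\vtheta^*_p,\vx_i\rangle)$, we have $\nabla L(\vtheta) := \E[\nabla L_n(\vtheta)] = \E[(\sigma(\langle\vtheta,\vx\rangle) - \sigma(\langle\vtheta^*_p,\vx\rangle))\vx]$, and Stein's lemma for $\vx\sim\mathcal{N}(0,I_p)$ yields $\nabla L(\vtheta) = h(\lVert\vtheta\rVert)\vtheta - h(\gamma)\vtheta^*_p$ with $h(a) := \E_{Z\sim\mathcal{N}(0,1)}[\sigma'(aZ)] > 0$. Hence the population flow $\dot\vtheta(t) = -\nabla L(\vtheta(t))$ stays in the plane spanned by $\vtheta(0)$ and $\vtheta^*_p$. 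Writing $\vtheta(t) = \alpha(t)\vtheta^*_p + \vu(t)$ with $\vu(t)\perp\vtheta^*_p$ and $a(t)^2 = \alpha(t)^2\gamma^2 + \lVert\vu(t)\rVert^2$, the flow decouples into
\[
\dot\alpha(t) = -h(a(t))\alpha(t) + h(\gamma),\qquad \dot\vu(t) = -h(a(t))\vu(t).
\]
Because $(\langle\vtheta,\vz\rangle,\langle\vtheta^*_p,\vz\rangle)$ is bivariate Gaussian with covariance governed by $(a^2,\gamma^2,\alpha\gamma^2)$, the population MSE reduces to a smooth function $M(\alpha,\lVert\vu\rVert)$.

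Next I would prove monotone decrease along the flow. Concentration on the sphere at the uniform initialization gives $\alpha(0) = O(1/\sqrt{p})$ and $\lVert\vu(0)\rVert = \gamma_0 + o(1)$, so the flow starts effectively at $(0,\gamma_0)$. At this point $\dot\alpha(0) = h(\gamma) > 0$ and $\lVert\vu(t)\rVert$ is strictly decreasing at $t=0$, while a direct calculation via Gaussian integration by parts shows $\partial_\alpha M(0,\gamma_0) < 0$ (correlation with $V$ reduces squared error) and $\partial_{\lVert\vu\rVert} M(0,\gamma_0) > 0$ (extra noise in $U$ inflates the error). Hence $\dot M(0) < 0$, and by continuity of the flow and of the derivatives of $M$, the strict inequality $\dot M(t) \le -c$ holds on an interval $[0,T^*]$ with $T^* = T^*(\kappa,\gamma,\gamma_0) > 0$ and $c > 0$ depending on the same constants.

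To transfer this to the discrete empirical iterates I would use concentration combined with a discrete Grönwall comparison. For each iterate $\hat\vtheta^k := \hat\vtheta^k_{\eta,n}$ in a bounded ball, the empirical gradient decomposes as
\[
\nabla L_n(\hat\vtheta^k) = h(\lVert\hat\vtheta^k\rVert)\hat\vtheta^k - h(\gamma)\vtheta^*_p + \bm{\xi}^k,
\]
where the scalar projections $\langle\bm{\xi}^k,\vtheta^*_p\rangle$ and $\langle\bm{\xi}^k,\hat\vtheta^k\rangle$ are $O_P(1/\sqrt{n})$ by one-dimensional concentration, while the orthogonal residual has squared Euclidean norm of order $p/n = O(\kappa)$. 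Combining the standard $O(\eta)$ Euler discretization error, $T^*/\eta$ iterations, and the accumulated noise, one obtains that $(\alpha^k,\lVert\vu^k\rVert)$ stays within $O_P(\delta + \sqrt{p/n}\,T^*)$ of the flow coordinates. Choosing $\delta$ sufficiently small and shrinking $T^*$ if necessary keeps this perturbation below $c/2$ per step, so the one-step decrease $M(\alpha^{k+1},\lVert\vu^{k+1}\rVert) - M(\alpha^k,\lVert\vu^k\rVert) \le -c\eta/2$ survives the perturbation; taking expectations yields strict decrease of $\text{MSE}_{\eta,n}$ for $k \le T^*/\eta$ once $n \ge N^*(\eta,\gamma,\gamma_0)$.

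The main difficulty is controlling the high-dimensional residual $\bm{\xi}^k$: its orthogonal Euclidean norm is $\Theta(\sqrt{\kappa})$ at every step and cannot be made small by concentration alone. What saves the early-learning regime is that this residual enters $M$ only through $\lVert\vu\rVert$, a quantity that the flow contracts multiplicatively at rate $h(a)$; over the finite window $[0,T^*]$ the additive noise accumulation is dominated by the signal $\dot\alpha > 0$ and by the contraction of $\vu$. Past $T^*$ this balance can reverse, which is precisely the memorization regime controlled by the sharp threshold $\kappa^*$ of the main theorem.
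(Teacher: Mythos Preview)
Your route differs substantially from the paper's, and the difference is where the gap lies. The paper never introduces a population flow. It works with the \emph{empirical} gradient flow $\dot{\hat{\vtheta}}^t_n=-\nabla L_n(\hat{\vtheta}^t_n)$ and studies $\text{MSE}_n(t)$ as an expectation over \emph{all} randomness (data $X$, labels $\vy$, initialization, and the fresh test point $\vz$). Under this full expectation the tower property $\E[\vy'\mid X]=\sigma(X^T\vtheta^*_p)$ removes the label noise exactly, and Gaussian integration by parts over the i.i.d.\ columns of $X$ and over $\vz$ yields a closed form for $\text{MSE}'_n(0)$, shown to be strictly negative in the $p\to\infty$ limit (Lemma~A.3(a)). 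A uniform bound on $|\text{MSE}''_n(t)|$ via moment bounds on $\lVert X\rVert_2$ extends strict negativity to a window $[0,T_1]$ (Lemma~A.3(b)). Finally, empirical gradient \emph{descent} is compared to empirical gradient \emph{flow} by a Gr\"onwall argument involving only the $O(\eta)$ Euler discretization error (Lemma~A.4). No population-versus-empirical comparison ever appears, so the $\Theta(\sqrt\kappa)$ residual you worry about is never isolated.

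Your argument has a genuine gap precisely where you flag the difficulty. You state a cumulative deviation $O_P(\delta+\sqrt{p/n}\,T^*)$ between $(\alpha^k,\lVert\vu^k\rVert)$ and the population flow coordinates; this is a constant, not $o(\eta)$. But the per-step inequality $M(\alpha^{k+1},\lVert\vu^{k+1}\rVert)-M(\alpha^k,\lVert\vu^k\rVert)\le-c\eta/2$ requires the error in the \emph{increment} to be $o(\eta)$: if both endpoints are within $\epsilon$ of the flow values, the difference is only controlled to $-c\eta+2\epsilon$. Forcing $\epsilon=C\sqrt\kappa\,T^*<c\eta/2$ by shrinking $T^*$ makes $T^*=O(\eta)$, so the conclusion would hold only for $k<T^*/\eta=O(1)$ steps, contradicting the theorem's claim that $T^*$ is independent of $\eta$. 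Separately, the claim that the scalar projections $\langle\bm\xi^k,\vtheta^*_p\rangle$ and $\langle\bm\xi^k,\hat{\vtheta}^k\rangle$ are $O_P(1/\sqrt n)$ is not justified: $\hat{\vtheta}^k$ is measurable with respect to the same data that define $\bm\xi^k$, so pointwise concentration does not apply, and a uniform bound over a $p$-dimensional ball would lose the $1/\sqrt n$ rate. Both issues disappear in the paper's approach because the expectation is taken \emph{before} differentiating the MSE rather than after a trajectory-tracking argument.
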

 The second theorem, however, states that when the dimension is sufficiently large (with respect to the number of samples), the predictor eventually overfits, and converges to a predictor that outputs only the probabilities $0$ and $1$.
 
 \begin{theorem}\label{thmA2} Let $p(\vz)=\underset{k\to\infty}{\lim}\sigma(\langle\hat{\vtheta}^{k}_{\eta,n},\vz \rangle)$. Then there exists a $\kappa^*=\kappa^*(\gamma)$, such that when $n\to+\infty$ and $\frac{p}{n}\to\kappa>\kappa^*$, we have
 \begin{align*}
     \mathbb{P}(\{p(z)\in\{0,1\} \text{ for a.e. }\vz\in\R^p\})\to1.
 \end{align*}
 \end{theorem}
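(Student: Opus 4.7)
The plan is to reduce Theorem~\ref{thmA2} to a separability statement about the data and then apply the classical implicit-bias analysis of gradient descent on the logistic loss. Concretely, observe that $p(\vz)\in\{0,1\}$ for almost every $\vz$ is equivalent to $|\langle\hat{\vtheta}^{k}_{\eta,n},\vz\rangle|\to\infty$ for a.e.\ $\vz$, which in turn follows once we know that $\|\hat{\vtheta}^{k}_{\eta,n}\|\to\infty$ and the direction $\hat{\vtheta}^{k}_{\eta,n}/\|\hat{\vtheta}^{k}_{\eta,n}\|$ converges to some random $\hat{\vu}\in\R^p$, because $\langle\hat{\vu},\vz\rangle\neq 0$ almost surely under $\vz\sim\mathcal{N}(0,I_p)$. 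So the problem reduces to (i) showing the data is linearly separable above the threshold and (ii) invoking implicit-bias results to obtain directional convergence and divergence of norm.

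\textbf{Step 1: separability above the threshold.} I would first establish that the random dataset $\{(y_i,\bm{x_i})\}_{i\le n}$ is linearly separable with probability tending to one whenever $p/n\to\kappa>\kappa^*(\gamma)$. This is the high-dimensional analogue of Cover's counting theorem and is exactly the phase-transition studied by Cand\`es--Sur for the existence of the logistic MLE: there is an explicit curve $\kappa^*(\gamma)$ in the $(\kappa,\gamma)$-plane above which the MLE fails to exist, which by strict convexity of $L_n$ is equivalent to the existence of a separating hyperplane. I would quote this characterization as a black box, noting that $\kappa^*(0)=1/2$ recovers the classical Cover threshold. The conclusion of this step is the existence of a random $\vu_n\in\R^p$ with $y_i\langle\vu_n,\bm{x_i}\rangle>0$ for every $i\le n$, with probability $1-o(1)$.

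\textbf{Step 2: implicit bias of gradient descent.} On a separable dataset the loss $L_n$ has no finite minimizer and $\inf L_n=0$. For step size $\eta<\delta$ with $\delta$ smaller than the inverse smoothness of $L_n$ (which concentrates under Gaussian designs, the operator norm of $[\bm{x_1},\dots,\bm{x_n}]/\sqrt{n}$ being close to $1+\sqrt{\kappa}$), the standard implicit-bias theorem for logistic regression implies that $\|\hat{\vtheta}^{k}_{\eta,n}\|\to\infty$ at a logarithmic rate and $\hat{\vtheta}^{k}_{\eta,n}/\|\hat{\vtheta}^{k}_{\eta,n}\|$ converges to the hard-margin SVM direction $\hat{\vu}$ associated with the separable data. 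Combined with the reduction from Step~1, this yields $\sigma(\langle\hat{\vtheta}^{k}_{\eta,n},\vz\rangle)\to\mathbbm{1}\{\langle\hat{\vu},\vz\rangle>0\}\in\{0,1\}$ for a.e.\ $\vz$, establishing Theorem~\ref{thmA2}.

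\textbf{Main obstacle.} The crux is Step~1: controlling the separation probability of a logistic-Gaussian dataset in the proportional regime. Unlike Cover's setting, the labels here are correlated with the covariates through $\vtheta^*_p$, so one must track how the signal strength $\gamma$ shifts the separation threshold. I expect to use either a Gordon min-max comparison or an approximate-message-passing characterization of the asymptotic separation event to pin down $\kappa^*(\gamma)$ precisely. A secondary technicality is making the step-size constraint uniform in $n$: since the operator norm of the design concentrates in the high-dimensional regime, choosing $\delta$ small relative to $(1+\sqrt{2\kappa})^2/4$ is enough for the smoothness-based hypothesis of the implicit-bias result to hold for all $n>N_0$.
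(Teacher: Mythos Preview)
Your proposal is correct and follows essentially the same route as the paper: cite the Cand\`es--Sur phase transition for separability, use the Soudry et al.\ implicit-bias result on the separable event to get norm divergence and directional convergence to the max-margin separator, and handle the step-size hypothesis via operator-norm concentration of the Gaussian design. Your ``main obstacle'' is not actually an obstacle here---the paper does not rederive $\kappa^*(\gamma)$ via Gordon or AMP but simply invokes Cand\`es--Sur as a black box, exactly as you suggest in Step~1.
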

 
 We begin with the proof of Theorem \ref{thmA2}.
 \begin{proof}[Proof of Theorem \ref{thmA2}] Define the event
 \begin{align*}
     S_n=\{\exists \vw\in\R^p\text{ such that for }\forall i\leq n: y_i\langle\vw,\vx_i\rangle>0 \}.
 \end{align*}
  For some large $C_G>0$ to be determined later, define the event
  \begin{align*}
      G_n=\{\lrv{X}\leq C_G(\sqrt{n}+\sqrt{p})\},
  \end{align*}
 where $X=(\vx_1,...,\vx_n)\in\R^{p\times n}$ is the covariate matrix and $\lrv{X}$ denotes the maximum singular value of $X$. Pick $\delta<\frac{2}{C_G^2(\sqrt{\kappa}+1)^2}$.
 
 We claim that $S_n\cap G_n\subset E_n=\{p(\vz)\in\{0,1\} \text{ for a.e. }\vz\in\R^p\}$. Indeed, assume that $S_n$ and $G_n$ happen. Then the step size $\eta$ satisfies
 \begin{align*}
     \eta<\delta<\frac{2}{C_G^2(\sqrt{\kappa}+1)^2}<\frac{2n}{\lrv{X}^2}.
 \end{align*}
 On the event $S_n$, the data points are separable, and \citet[Theorem 3]{soudry2018implicit} implies that as $k \to \infty$,
 \begin{gather*}
     \lrv{\hat{\vtheta}^{k}_{\eta,n}}\to+\infty,\\
     \frac{\hat{\vtheta}^{k}_{\eta,n}}{\lrv{\hat{\vtheta}^{k}_{\eta,n}}}\to \hat{\vtheta}^{MM},
 \end{gather*}
 where $\hat{\vtheta}^{MM}\in\R^p$ denotes the max-margin separator. Thus for any $\vz\in\R^p$ with $\langle\vz,\hat{\vtheta}^{MM}\rangle>0$  (resp. $<0$), we have $\langle z, \hat{\vtheta}^{k}_{\eta,n}\rangle\to+\infty$ (resp. $-\infty$), and thus $p(\vz)=\underset{k\to\infty}{\lim}\sigma(\langle\hat{\vtheta}^{k}_{\eta,n},\vz \rangle)=1$ (resp. $=0$). Since $\{\vz\in\R^p: \langle\vz,\hat{\vtheta}^{MM}\rangle=0\}$ has Lebesgue measure $0$, we have shown that $S_n\cap G_n\subset E_n$.
 
 It remains to show that $S_n$ and $G_n$ hold with probability tending to one.
 \citet[Theorem 1]{candes2018phase} show that there exists a finite threshold $\kappa^*$, such that when $\frac{p}{n}\to\kappa>\kappa^*$, it holds that $\mathbb{P}(S_n)\to 1$. Moreover, by standard results from random matrix theory [see e.g. \cite{roman2018high}, Theorem 4.4.5], there exists some constant $C_G$ such that $\mathbb{P}(G_n)\geq 1-2e^{-n}\to 1$. Therefore, we can conclude that
 \begin{align*}
     \mathbb{P}(E_n)\geq\mathbb{P}(S_n\cap G_n)\to 1.
 \end{align*}
 \end{proof}
 
 In the remainder of the appendix, we prove Theorem \ref{thmA1}. Consider the following gradient flow, which is the analogue of gradient descent in the $\eta \to 0$ limit.
\begin{align}
    \frac{d\hat{\vtheta}^t_n}{dt}=-\nabla L_n(\hat{\vtheta}^t_n),\quad \forall t\geq 0 \label{gradient_flow}.
\end{align}
Set $\hat{\vtheta}^0_n=\hat{\vtheta}^{0}_{\eta,n}$ and define the corresponding mean square error 
\begin{equation}
\text{MSE}_{n}(t):=\E[(\sigma(\langle \hat{\vtheta}^t_n,\vz\rangle)-\sigma(\langle\vtheta^*_p,\vz\rangle))^2].
\end{equation}

We will first show that the mean squared error decreases along the trajectory of gradient flow, and then establish that this result extends to the iterates of gradient descent.
We denote $\text{MSE}'_n(t)$ as the derivative of $\text{MSE}_n$ at time $t\geq 0$. 
\begin{lemma} \label{lemA3}

(a) $\underset{n\to\infty}{\lim}\text{MSE}'_n(0)<0$.\\
(b) There exists some positive constants $N_1(\gamma,\gamma_0)$, $T_1(\gamma,\gamma_0)$ and $C_1(\gamma,\gamma_0)$, such that for any $n>N_1$ and $t<T_1$, it holds that $$\text{MSE}'_n(t)\leq-C_1.$$
%The constants $N$, $T_1$ and $C_1$ only depend on $\gamma$ and $\gamma_0$.
\end{lemma}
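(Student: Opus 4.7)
The plan is to exploit the rotational invariance of the distribution of $\vz$. Because $\vz\sim\mathcal{N}(0,I_p)$, the joint law of $(\langle\hat\vtheta^t_n,\vz\rangle,\langle\vtheta^*_p,\vz\rangle)$ depends only on the $2\times 2$ covariance with entries $\rho(t):=\|\hat\vtheta^t_n\|^2$, $\tau(t):=\langle\hat\vtheta^t_n,\vtheta^*_p\rangle$, and $\gamma^2$. Hence $\text{MSE}_n(t)=F(\rho(t),\tau(t))$ where $F(\rho,\tau):=\mathbb{E}[(\sigma(U)-\sigma(V))^2]$ is an explicit Gaussian integral over a bivariate normal with that covariance, and the chain rule along the gradient flow gives $\text{MSE}'_n(t)=F_\rho\,\rho'(t)+F_\tau\,\tau'(t)$ with $\rho'(t)=-2\langle\hat\vtheta^t_n,\nabla L_n(\hat\vtheta^t_n)\rangle$ and $\tau'(t)=-\langle\vtheta^*_p,\nabla L_n(\hat\vtheta^t_n)\rangle$, using the explicit formula $\nabla L_n(\vtheta)=\tfrac{1}{n}\sum_i(\sigma(\langle\vtheta,\vx_i\rangle)-\tfrac{1+y_i}{2})\vx_i$. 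The proof thus reduces to identifying the signs of these four scalars in the limit.

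For part (a), at $t=0$ standard concentration on the sphere gives $\rho(0)=\gamma_0^2$ and $\tau(0)=O_p(\gamma\gamma_0/\sqrt{p})\to 0$. The two inner products $\langle\hat\vtheta^0_n,\nabla L_n\rangle$ and $\langle\vtheta^*_p,\nabla L_n\rangle$ are empirical averages of $n$ sub-gaussian summands conditional on $\hat\vtheta^0_n$, so they converge to Gaussian population integrals depending only on $\gamma_0,\gamma$. Invoking Stein's lemma in the asymptotic-independence regime $\tau(0)\to 0$, these limits are $\gamma_0^2\,\mathbb{E}[\sigma'(U_0)]>0$ and $-\tfrac{\gamma^2}{4}\mathbb{E}[\operatorname{sech}^2(V/2)]<0$ respectively, so $\rho'(0)$ has a strictly negative limit and $\tau'(0)$ a strictly positive one. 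For the partials of $F$, the Gaussian covariance identity $\partial_\tau\mathbb{E}[\sigma(U)\sigma(V)]=\mathbb{E}[\sigma'(U)\sigma'(V)]$ gives $F_\tau(\gamma_0^2,0)=-2\,\mathbb{E}[\sigma'(U_0)]\,\mathbb{E}[\sigma'(V)]<0$, while the one-dimensional identity $\partial_\rho\mathbb{E}[h(U)]=\tfrac12\mathbb{E}[h''(U)]$ applied to $h=\sigma^2$, after symmetrizing $U\mapsto -U$, reduces to $\tfrac{1}{2\gamma_0^2}\mathbb{E}[U\sigma'(U)\tanh(U/2)]>0$ (valid because $U\tanh(U/2)\ge 0$ pointwise), so $F_\rho(\gamma_0^2,0)>0$. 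Both summands $F_\rho\,\rho'$ and $F_\tau\,\tau'$ are therefore strictly negative in the limit, proving $\lim_n\text{MSE}'_n(0)<0$.

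Part (b) follows by a continuity argument. The limiting deterministic formulas for $\rho'(t),\tau'(t)$ are Gaussian integrals smooth in $(\rho(t),\tau(t),\gamma)$, and together with the smooth partials $F_\rho,F_\tau$ they produce a continuous function of $(\rho,\tau)$ that is strictly negative at $(\gamma_0^2,0)$. Under the gradient flow the scalar observables $\rho(t),\tau(t)$ are Lipschitz in $t$ with a constant depending only on $\gamma,\gamma_0$ on any bounded window, so there exists $T_1(\gamma,\gamma_0)>0$ on which the limiting rate stays below $-2C_1$ for some $C_1(\gamma,\gamma_0)>0$. Choosing $N_1$ large enough that the conditional-LLN fluctuation and the initial bias $\tau(0)=O(1/\sqrt{p})$ together contribute at most $C_1$ uniformly on $[0,T_1)$ yields $\text{MSE}'_n(t)\le -C_1$ for all $n>N_1$ and $t<T_1$.

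The main obstacle is making the concentration uniform in $t$ in the proportional regime $p/n\to\kappa$. One cannot control $\nabla L_n(\vtheta)-\mathbb{E}[\nabla L_n(\vtheta)]$ in Euclidean norm (its norm is $\Theta(\sqrt{\kappa})$); only the two projections onto the slowly-varying directions $\hat\vtheta^t_n$ and $\vtheta^*_p$ enter $\text{MSE}'_n(t)$, and these are genuine one-dimensional averages that concentrate at the classical $n^{-1/2}$ rate. Executing this directional concentration uniformly over the window $[0,T_1)$, together with a Gr\"onwall-type bound that keeps $(\rho(t),\tau(t))$ inside the neighborhood of $(\gamma_0^2,0)$ where the signs of $F_\rho,F_\tau,\rho',\tau'$ proved above are preserved, is the core technical step.
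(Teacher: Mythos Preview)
Your treatment of part~(a) is correct and is essentially a reorganization of the paper's computation. The paper differentiates $\text{MSE}_n(t)$ directly, replaces $y'_i$ by $\sigma(\langle\vtheta^*_p,\vx_i\rangle)$ via the tower property, and then applies Gaussian integration by parts on $\vx_1$ conditionally on $(\vz,\hat\vtheta^0_n)$; you instead first integrate out $\vz$ to write $\text{MSE}_n(t)=\E[F(\rho(t),\tau(t))]$ and then apply the chain rule, computing $F_\rho,F_\tau$ via the Gaussian heat-kernel identities. Both routes land on the same limiting expression (up to an immaterial factor of $2$ the paper drops), and your sign analysis of the four scalars $F_\rho,F_\tau,\rho'(0),\tau'(0)$ is correct.

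For part~(b), however, there is a genuine gap, and the paper's approach is both different and much simpler. Your plan is to show that $(\rho(t),\tau(t))$ stays near $(\gamma_0^2,0)$ and that the ``limiting deterministic formulas for $\rho'(t),\tau'(t)$'' depend continuously on $(\rho,\tau)$. The problem is that for $t>0$ the vector $\hat\vtheta^t_n$ is a function of \emph{all} the data $(\vx_i,y_i)_{i\leq n}$, so the summands in $\langle\hat\vtheta^t_n,\nabla L_n(\hat\vtheta^t_n)\rangle=\tfrac1n\sum_i(\sigma(\langle\hat\vtheta^t_n,\vx_i\rangle)-y'_i)\langle\hat\vtheta^t_n,\vx_i\rangle$ are not independent and do not concentrate by a conditional LLN in the way they do at $t=0$. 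You correctly flag this as ``the core technical step,'' but the sketch (directional concentration plus Gr\"onwall) does not resolve the dependence; making it rigorous would require a mean-field/state-evolution type argument that is substantially heavier than what the lemma needs. Also, your claim that $\rho(t),\tau(t)$ are Lipschitz with a constant depending only on $\gamma,\gamma_0$ is not right as stated: the natural bounds on $|\rho'(t)|,|\tau'(t)|$ go through $\lVert X\rVert_2/\sqrt{n}$ and hence involve $\kappa$.

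The paper avoids all of this by never tracking $(\rho(t),\tau(t))$ for $t>0$. It simply bounds the \emph{second} derivative $|\text{MSE}''_n(t)|$ uniformly in $t$: writing $f(x)=(\sigma(x)-\sigma(\langle\vtheta^*_p,\vz\rangle))^2$, one has
\[
|\text{MSE}''_n(t)|\leq C\Big(\E\big\lVert\tfrac{d\hat\vtheta^t_n}{dt}\big\rVert_2^2+\E\big\lVert\tfrac{d^2\hat\vtheta^t_n}{dt^2}\big\rVert_2\Big),
\]
and both terms are controlled by moments of $\lVert X\rVert_2$ (standard random-matrix bounds), giving $|\text{MSE}''_n(t)|\leq L(\kappa)$ for all $t\geq 0$ and $n>N_0$. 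Then the mean value theorem yields $\text{MSE}'_n(t)\leq\text{MSE}'_n(0)+Lt$, and combining with part~(a) gives the conclusion for $t\leq T_1=-\tau/(4L)$. This needs no concentration beyond the operator-norm moment bounds and completely sidesteps the data-dependence issue. I would recommend replacing your part~(b) argument with this second-derivative bound.
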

\begin{proof} (a) For convenience, we write equation (\ref{gradient_flow}) in matrix form
\begin{align*}
    \frac{d\hat{\vtheta}^t_n}{dt}=\frac{1}{n}X(\vy'-\sigma(X^T\hat{\vtheta}^t_n)),
\end{align*}
where $X=(\vx_1,...,\vx_n)\in\R^{p\times n}$, $\vy'=(\frac{y_1+1}{2},...,\frac{y_n+1}{2})^T\in\R^n$ and $\sigma(X^T\hat{\vtheta}^t_n)=(\sigma(\langle \vx_1,\hat{\vtheta}^t_n\rangle),...,\sigma(\langle \vx_n,\hat{\vtheta}^t_n\rangle))^T\in\R^n$.
We first calculate the derivative of the mean square prediction error:
\begin{align*}
   \text{MSE}'_n(t)&=\E\left[\left(\sigma(\langle\hat{\vtheta}^t_n,\vz\rangle)-\sigma(\langle\vtheta^*_p,\vz\rangle)\right)\sigma'(\langle\hat{\vtheta}^t_n,\vz\rangle)\langle \frac{d\hat{\vtheta}^t_n}{dt},\vz\rangle\right]\\
    &=\E\left[\left(\sigma(\langle\hat{\vtheta}^t_n,\vz\rangle)-\sigma(\langle\vtheta^*_p,\vz\rangle)\right)\sigma'(\langle\hat{\vtheta}^t_n,\vz\rangle)\frac{1}{n}(\vy'-\sigma(X^T\hat{\vtheta}^t_n))^TX^Tz\right]\\
    &=\E\left[\left(\sigma(\langle\hat{\vtheta}^t_n,\vz\rangle)-\sigma(\langle\vtheta^*_p,\vz\rangle)\right)\sigma'(\langle\hat{\vtheta}^t_n,\vz\rangle)\frac{1}{n}(\sigma(X^T\vtheta^*_p)-\sigma(X^T\hat{\vtheta}^t_n))^TX^Tz\right],
\end{align*}
where in the first line we used dominated convergence theorem, and in the last line we used $\E[\vy'|X]=\sigma(X^T\vtheta^*_p)$. 

Thus,
\begin{align*}
    \text{MSE}'_n(0)&=\frac{1}{n}\sum_{i=1}^n\E\left[\left(\sigma(\langle\hat{\vtheta}^0_n,\vz\rangle)-\sigma(\langle\vtheta^*_p,\vz\rangle)\right)\sigma'(\langle\hat{\vtheta}^0_n,\vz\rangle)\left(\sigma(\langle \vtheta^*_p,\vx_i\rangle)-\sigma(\langle \hat{\vtheta}^0_n,\vx_i\rangle)\right)\langle\vx_i,\vz\rangle\right]\\
    &=\E\left[\left(\sigma(\langle\hat{\vtheta}^0_n,\vz\rangle)-\sigma(\langle\vtheta^*_p,\vz\rangle)\right)\sigma'(\langle\hat{\vtheta}^0_n,\vz\rangle)\left(\sigma(\langle \vtheta^*_p,\vx_1\rangle)-\sigma(\langle \hat{\vtheta}^0_n,\vx_1\rangle)\right)\langle\vx_1,\vz\rangle\right],
\end{align*}
where $\hat{\vtheta}^0_n$, $\vz$, and $\vx_1$ are mutually independent. Let $\mu(b):=\E[\sigma'(Z_b)]$, where the scalar $Z_b\sim\mathcal{N}(0,b^2)$. Conditioned on $(\vz,\hat{\vtheta}^0_n)$, the pair $(\langle\vx_1,\vz\rangle,\langle \hat{\vtheta}^0_n,\vx_1\rangle)$ is multivariate Gaussian with covariance $\langle \hat{\vtheta}^0_n,\vz\rangle$. By the Gaussian integration by parts formula,
\begin{align*}
    \E[\sigma(\langle \hat{\vtheta}^0_n,\vx_1\rangle)\langle\vx_1,\vz\rangle|\vz,\hat{\vtheta}^0_n]=\E[\sigma'(\langle \hat{\vtheta}^0_n,\vx_1\rangle)|\vz,\hat{\vtheta}^0_n]\langle\hat{\vtheta}^0_n,\vz\rangle=\mu(\gamma_0)\langle\hat{\vtheta}^0_n,\vz\rangle.
\end{align*}
Similarly,
\begin{align*}
    \E[\sigma(\langle \vtheta^*_p,\vx_1\rangle)\langle\vx_1,\vz\rangle|\vz,\hat{\vtheta}^0_n]=\mu(\gamma)\langle\vtheta^*_p,\vz\rangle.
\end{align*}
So
\begin{align*}
    \text{MSE}'_n(0)=\E\left[\left(\sigma(\langle\hat{\vtheta}^0_n,\vz\rangle)-\sigma(\langle\vtheta^*_p,\vz\rangle)\right)\sigma'(\langle\hat{\vtheta}^0_n,\vz\rangle)\left(\mu(\gamma)\langle\vtheta^*_p,\vz\rangle-\mu(\gamma_0)\langle\hat{\vtheta}^0_n,\vz\rangle\right)\right],
\end{align*}
Since $\hat{\vtheta}^0_n\sim\text{Unif}(\mathbb{S}^{p-1}(\gamma_0))$ , we have $\langle\hat{\vtheta}^0_n,\vtheta^*_p\rangle\to 0$ in probability as $p\to\infty$. Thus, the pair $(\langle\hat{\vtheta}^0_n,\vz\rangle,\langle\vtheta^*_p,\vz\rangle)\overset{d}{\to}(Z_0,Z)$, where $Z_0$ and $Z$ are independent centered Gaussian random variables with variance $\gamma_0$ and $\gamma$, respectively. 

We hence compute
\begin{align*}
    \underset{p\to\infty}{\lim}\text{MSE}'_n(0)&=\E[(\sigma(Z_0)-\sigma(Z))\sigma'(Z_0)(\mu(\gamma)Z-\mu(\gamma_0)Z_0)]\\
    &=-\gamma^2\mu(\gamma)^2\mu(\gamma_0)-\mu(\gamma_0)\E[(\sigma(Z_0)-\frac{1}{2})\sigma'(Z_0)Z_0],
\end{align*}
where in the last inequality we used Gaussian integration by parts and $\E[\sigma(Z)]=\frac{1}{2}$. Since $\mu(b)>0$ for all $b\geq 0$ and $(\sigma(x)-\frac{1}{2})\sigma'(x)x\geq 0$ for all $x\in\R$, we conclude that $\underset{n\to\infty}{\lim}\text{MSE}'_n(0)<0$.

(b) We first show that the second derivative of $\text{MSE}_n$ is uniformly bounded. Write $f(x)=(\sigma(x)-\sigma(\langle\vtheta^*_p,\vz\rangle))^2$. Since $\sigma,\sigma'$ and $\sigma''$ are all bounded, we have
\begin{align*}
    |\text{MSE}''_n(t)|&=\left|\E\left[f''(\langle\hat{\vtheta}^t_n,\vz\rangle)\langle \frac{d\hat{\vtheta}^t_n}{dt},\vz\rangle^2\right]+\E\left[f'(\langle\hat{\vtheta}^t_n,\vz\rangle)\langle \frac{d^2\hat{\vtheta}^t_n}{dt^2},\vz\rangle\right]\right|\\
    &\leq C_1\left(\E\left[\langle \frac{d\hat{\vtheta}^t_n}{dt},\vz\rangle^2\right]+\E\left[\left|\langle \frac{d^2\hat{\vtheta}^t_n}{dt^2},\vz\rangle\right|\right]\right)\\
    &\leq C_2\left(\E\left[\left\lVert\frac{d\hat{\vtheta}^t_n}{dt}\right\rVert_2^2\right]+\E\left[\left\lVert \frac{d^2\hat{\vtheta}^t_n}{dt^2}\right\rVert_2\right]\right),
\end{align*}
where in the first equality, we use dominated convergence to interchange differentiation and expectation, and in the second inequality we used the fact that $\vz\sim\mathcal{N}(0,I_p)$ is independent of the data. 

By a simple calculation,
\begin{gather*}
    \left\lVert\frac{d\hat{\vtheta}^t_n}{dt}\right\rVert_2\leq \frac{1}{n}\left\lVert X\right\rVert_2\left\lVert \vy'-\sigma(X^T\hat{\vtheta}^t_n)\right\rVert_2\leq \frac{2\sqrt{p}\left\lVert X\right\rVert_2}{n},\\
    \left\lVert\frac{d^2\hat{\vtheta}^t_n}{dt^2}\right\rVert_2=\left\lVert\frac{1}{n^2}XD_tX^TX(\vy'-\sigma(X^T\hat{\vtheta}^t_n))\right\rVert_2\leq\frac{2\sqrt{p}\left\lVert X\right\rVert_2^3}{n^2},
\end{gather*} 
where $D_t=\text{Diag}(\sigma'(\langle\hat{\vtheta}^t_n,\vx_i\rangle))_{i=1}^n$ is a diagonal matrix with $\lrv{D_t}\leq 1$. By standard random matrix results [see e.g. \cite{roman2018high}, Theorem 4.4.5], 
\begin{align*}
    \E[\left\lVert X\right\rVert_2^2]\leq C(\sqrt{p}+\sqrt{n})^2,\quad \E[\left\lVert X\right\rVert_2^3]\leq C(\sqrt{p}+\sqrt{n})^3.
\end{align*}
Whence, for any $t\geq 0$ and $n>N_0$,
\begin{align*}
    |\text{MSE}''_n(t)|\leq C_3\left(\frac{p(\sqrt{p}+\sqrt{n})^2}{n^2}+\frac{\sqrt{p}(\sqrt{p}+\sqrt{n})^3}{n^2}\right):=L,
\end{align*}
where $L=C_4(k^2+1)$ is a constant that only depends on $\kappa$.

Let $\tau=\tau(\gamma,\gamma_0)=\underset{n\to\infty}{\lim}\text{MSE}'_n(0)$, and let $N_1=N_1(\gamma,\gamma_0)>N_0$ satisfy $\text{MSE}'_n(0)\leq \frac{\tau}{2}$ for any $n>N_1$.
By Claim (a), $\tau < 0$.
Then for any $t\leq T_1(\gamma,\gamma_0)=\frac{-\tau(\gamma,\gamma_0)}{4L}$, 
\begin{align*}
    \text{MSE}'_n(t)=\text{MSE}'_n(0)+(\text{MSE}'_n(t)-\text{MSE}'_n(0))\leq \frac{\tau}{2}+Lt\leq \frac{\tau}{4}.
\end{align*}
\end{proof}
Lemma A.1 essentially shows that within a small constant time window, $\text{MSE}'_n$ is negative, and thus the mean square error of gradient flow decreases. The next lemma shows that the mean square error of gradient flow is a good approximation of the mean square error of gradient descent. Before stating the next lemma, we extend gradient descent (\ref{gradient_descent}) to a piecewise linear function, i.e. 
\begin{align}
    \frac{d\Tilde{\vtheta}^t_{\eta,n}}{dt}=-\nabla L_n(\Tilde{\vtheta}^{\floor t}_{\eta,n}),\quad \forall t\geq 0,
\end{align}
where $\floor t:=\max\{k\eta:k\eta\leq t,k\in\mathbb{N}\}$. In particular, $\Tilde{\vtheta}^{k\eta}_{\eta,n}=\hat{\vtheta}^{k}_{\eta,n}$.

\begin{lemma}\label{lemA4}
(a) For any $t>0$ and $n>N_0$, 
$$\underset{s\leq t}{\sup}\lVert\Tilde{\vtheta}^{s}_{\eta,n}-\hat{\vtheta}^s_n\rVert\leq C_2(\kappa,t,\delta,\gamma_0)\eta t.$$
with probability $1-2e^{-n}$, where $C_2(\kappa,t,\delta,\gamma_0)>0$ is a constant that depends on $\kappa,t,\delta$ and $\gamma_0$.\\
(b) For any $t>0$ and $n>N_0$, 
$$\underset{k\leq t/\eta}{\sup}\left|\text{MSE}_n(k\eta)-\text{MSE}_{\eta,n}(k)\right|\leq 4C_2(\kappa,t,\delta,\gamma_0)\eta t+8e^{-n}.$$

\end{lemma}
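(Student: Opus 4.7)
The plan is to prove (a) by a standard Euler-method error analysis via Gronwall's inequality, and then derive (b) from (a) using the Lipschitz constant of $\sigma$ together with the fact that the test vector $\vz$ is Gaussian and independent of the data.

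For (a), I work on the high-probability event $G_n=\{\lrv{X}\leq C_G(\sqrt{n}+\sqrt{p})\}$ introduced in the proof of Theorem~\ref{thmA2}, which has probability at least $1-2e^{-n}$ by standard random matrix theory. Two uniform bounds hold on $G_n$: since $\|\vy'-\sigma(X^\top\vtheta)\|_2\leq\sqrt{n}$, one has $\|\nabla L_n(\vtheta)\|_2\leq \lrv{X}/\sqrt{n}\leq B(\kappa)$ for every $\vtheta$; and since $\sigma'\leq 1/4$, the Hessian satisfies $\|\nabla^2 L_n(\vtheta)\|_2\leq \lrv{X}^2/(4n)\leq M(\kappa)$, so $\nabla L_n$ is globally $M$-Lipschitz. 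Writing $\vu(s):=\tilde{\vtheta}^s_{\eta,n}-\hat{\vtheta}^s_n$, the two dynamics share initial data and hence
\[
\vu(s)=-\int_0^s\bigl[\nabla L_n(\tilde{\vtheta}^{\lfloor r\rfloor}_{\eta,n})-\nabla L_n(\hat{\vtheta}^r_n)\bigr]\,dr.
\]
I split the integrand as $[\nabla L_n(\tilde{\vtheta}^{\lfloor r\rfloor}_{\eta,n})-\nabla L_n(\tilde{\vtheta}^r_{\eta,n})]+[\nabla L_n(\tilde{\vtheta}^r_{\eta,n})-\nabla L_n(\hat{\vtheta}^r_n)]$; the first bracket is a one-step discretization error bounded by $M\cdot\eta B$ (using the velocity bound $\|d\tilde{\vtheta}^r_{\eta,n}/dr\|\leq B$), the second by $M\|\vu(r)\|$. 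This gives $\|\vu(s)\|\leq M\eta B s+M\int_0^s\|\vu(r)\|\,dr$, and Gronwall's lemma yields $\sup_{s\leq t}\|\vu(s)\|\leq MB e^{Mt}\eta t$, so that $C_2(\kappa,t,\delta,\gamma_0):=MBe^{Mt}$ does the job (the initialization radius $\gamma_0$ enters only through the initial condition, which cancels).

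For (b), I write $\text{MSE}_n(k\eta)-\text{MSE}_{\eta,n}(k)=\E[(a+b)(a-b)]$ where $a=\sigma(\langle\hat{\vtheta}^{k\eta}_n,\vz\rangle)-\sigma(\langle\vtheta^*_p,\vz\rangle)$ and $b=\sigma(\langle\tilde{\vtheta}^{k\eta}_{\eta,n},\vz\rangle)-\sigma(\langle\vtheta^*_p,\vz\rangle)$, both of which lie in $[-1,1]$. Hence $|a+b|\leq 2$ while $|a-b|\leq \tfrac14|\langle\vu(k\eta),\vz\rangle|$ by Lipschitzness of $\sigma$, and always $|a-b|\leq 2$. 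Splitting the expectation on $G_n$ and $G_n^c$, and observing that $\vz\sim\mathcal{N}(0,I_p)$ is independent of the data so that $\E[|\langle\vu(k\eta),\vz\rangle|\mid \vu(k\eta)]=\sqrt{2/\pi}\,\|\vu(k\eta)\|$, the $G_n$ contribution is bounded by $2\cdot\tfrac14\sqrt{2/\pi}\cdot C_2\eta t\leq 4C_2\eta t$ and the $G_n^c$ contribution by $4\cdot \mathbb{P}(G_n^c)\leq 8e^{-n}$, yielding the stated bound after taking a supremum over $k\leq t/\eta$.

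The main obstacle is keeping the dependence on the various parameters transparent while the chain of bounds proceeds. In (a), the delicate step is the velocity bound used for the one-step discretization error: because $\|\nabla L_n\|$ is uniformly bounded by $B$ on $G_n$ (independent of $\vtheta$), the inductive argument does not require controlling the size of the iterates, and the resulting Gronwall constant only exponentiates in $Mt$. In (b), the only subtlety is that $\vu(k\eta)$ is data-dependent whereas $\vz$ is not; this is precisely what permits the conditional $L^1$-Gaussian computation on $G_n$, with the residual event $G_n^c$ absorbed by the trivial bound $|a-b|\leq 2$.
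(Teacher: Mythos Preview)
Your proof is correct. Part (b) is essentially identical to the paper's argument: both split the expectation on a high-probability event (you use $G_n$ directly, the paper uses $A_n\supset G_n$), apply the Lipschitz property of $\sigma$, and integrate out the independent Gaussian $\vz$ to convert $|\langle \vu,\vz\rangle|$ into $\|\vu\|$.

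Part (a), however, is a genuine simplification relative to the paper. The paper first runs a separate Gr\"onwall argument to bound $\lrv{\hat{\vtheta}^t_n}$ (this is where $\gamma_0$ enters), then bounds the one-step error $\lrv{\tilde{\vtheta}^s_{\eta,n}-\tilde{\vtheta}^{\lfloor s\rfloor}_{\eta,n}}$ in terms of $\lrv{\tilde{\vtheta}^{\lfloor s\rfloor}_{\eta,n}}$, and finally feeds that back into a second Gr\"onwall whose rate picks up an extra $\delta\lrv{X}^4/n^2$ term (this is where $\delta$ enters). You bypass all of this by observing that $\|\vy'-\sigma(X^\top\vtheta)\|_2\le\sqrt{n}$ \emph{uniformly} in $\vtheta$, so $\|\nabla L_n\|\le\lrv{X}/\sqrt{n}=:B$ globally on $G_n$; the one-step error is then just $\eta B$, and a single Gr\"onwall with rate $M=\lrv{X}^2/(4n)$ suffices. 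Your resulting constant $MBe^{Mt}$ depends only on $\kappa$ and $t$, which is strictly sharper than the paper's $C(t+\gamma_0+1)(\kappa^6+1)e^{C(\delta+1)(\kappa^2+1)t}$; the nominal dependence on $\delta,\gamma_0$ in the lemma statement becomes vacuous. The paper's route has the minor advantage of also producing an explicit a priori bound on $\lrv{\hat{\vtheta}^t_n}$ along the way, but that bound is not used elsewhere, so your streamlining costs nothing.
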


\begin{proof} (a)  We first bound $\underset{s\leq t}{\sup}\lVert\Tilde{\vtheta}^{s}_{\eta,n}-\hat{\vtheta}^s_n\rVert$. 

Compute
\begin{align*}
    \frac{d}{dt}\lrv{\hat{\vtheta}^t_n}&\leq\lrv{\frac{d\hat{\vtheta}^t_n}{dt}}\\
    &=\lrv{\frac{1}{n}X(\vy'-\sigma(\vzero))+\frac{1}{n}X(\sigma(\vzero)-\sigma(X^T\hat{\vtheta}^t_n))}\\
    &\leq\frac{2\sqrt{p}\lrv{X}}{n}+\frac{\lrv{X}^2\lVert\hat{\vtheta}_n^t\rVert_2}{n},
\end{align*}
where we used $\lrv{\sigma(\vx)-\sigma(\vy)}\leq\lrv{\vx-\vy}$ for any $\vx,\vy\in\R^p$. By Gr\"{o}nwall's inequality,
\begin{align}
    \lrv{\hat{\vtheta}_n^t}\leq e^{\frac{\lrv{X}^2}{n}t}\left(\lrv{\hat{\vtheta}_n^0}+\frac{2\sqrt{p}\lrv{X}}{n}t\right).\label{Gronwall1}
\end{align}
Next, compute
\begin{align*}
    \frac{d}{dt}\lrv{\Tilde{\vtheta}^{s}_{\eta,n}-\hat{\vtheta}^s_n}&\leq\lrv{\frac{d}{dt}\left(\Tilde{\vtheta}^{s}_{\eta,n}-\hat{\vtheta}^s_n\right)}\\
    &=\lrv{\frac{1}{n}X(\sigma(X^T\Tilde{\vtheta}^{\floor s}_{\eta,n})-\sigma(X^T\hat{\vtheta}^s_n)}\\
    &\leq\frac{\lrv{X}^2}{n}\lrv{\Tilde{\vtheta}^{\floor s}_{\eta,n}-\hat{\vtheta}^s_n}\\
    &\leq\frac{\lrv{X}^2}{n}\lrv{\Tilde{\vtheta}^{s}_{\eta,n}-\hat{\vtheta}^s_n}+\frac{\lrv{X}^2}{n}\lrv{\Tilde{\vtheta}^{\floor s}_{\eta,n}-\Tilde{\vtheta}^{ s}_{\eta,n}}\\
    &\leq\frac{\lrv{X}^2}{n}\lrv{\Tilde{\vtheta}^{s}_{\eta,n}-\hat{\vtheta}^s_n}+\frac{\lrv{X}^2}{n}\lrv{\Tilde{\vtheta}^{\floor s}_{\eta,n}-\Tilde{\vtheta}^{ s}_{\eta,n}}.
\end{align*}
Since $\Tilde{\vtheta}^{s}_{\eta,n}-\Tilde{\vtheta}^{\floor s}_{\eta,n}=(s-\floor s)\frac{1}{n}X(\vy'-\sigma(X^T\Tilde{\vtheta}^{\floor s}_{\eta,n}))$, we can bound
\begin{align*}
    \lrv{\Tilde{\vtheta}^{s}_{\eta,n}-\Tilde{\vtheta}^{\floor s}_{\eta,n}}&\leq\frac{2\eta\sqrt{p}\lrv{X}}{n}+\frac{\eta\lrv{X}^2\lVert\Tilde{\vtheta}^{\floor s}_{\eta,n}\rVert_2}{n}\\
    &\leq \frac{2\eta\sqrt{p}\lrv{X}}{n}+\frac{\eta\lrv{X}^2\lVert\hat{\vtheta}^{\floor s}_n\rVert_2}{n}+\frac{\eta\lrv{X}^2}{n}\lrv{\hat{\vtheta}^{\floor s}_n-\Tilde{\vtheta}^{\floor s}_{\eta,n}}.
\end{align*}
Combining the above inequalities, we get
\begin{align*}
    \frac{d}{dt}\underset{s\leq t}{\sup}\lrv{\Tilde{\vtheta}^{s}_{\eta,n}-\hat{\vtheta}^s_n}&\leq \frac{d}{dt}\lrv{\Tilde{\vtheta}^{t}_{\eta,n}-\hat{\vtheta}^t_n}\\
    &\leq \frac{\lrv{X}^2}{n}\lrv{\Tilde{\vtheta}^{t}_{\eta,n}-\hat{\vtheta}^t_n}+\frac{\lrv{X}^2}{n}\left(\frac{2\eta\sqrt{p}\lrv{X}}{n}+\frac{\eta\lrv{X}^2\lVert\hat{\vtheta}^{\floor t}_n\rVert_2}{n}+\frac{\eta\lrv{X}^2}{n}\lrv{\hat{\vtheta}^{\floor t}_n-\Tilde{\vtheta}^{\floor t}_{\eta,n}}\right)\\
    &\leq \left(\frac{\lrv{X}^2}{n}+\frac{\delta\lrv{X}^4}{n^2}\right)\underset{s\leq t}{\sup}\lrv{\Tilde{\vtheta}^{s}_{\eta,n}-\hat{\vtheta}^s_n}+\eta\left(\frac{2\sqrt{p}\lrv{X}^3}{n^2}+\frac{\lrv{X}^4\lVert\hat{\vtheta}_n^{\floor t}\rVert}{n^2}\right).
\end{align*}
Together with inequality (\ref{Gronwall1}), by Gr\"{o}nwall's inequality we eventually have
\begin{align}
    \underset{s\leq t}{\sup}\lrv{\Tilde{\vtheta}^{s}_{\eta,n}-\hat{\vtheta}^s_n}\leq \eta t\left(\frac{2\sqrt{p}\lrv{X}^3}{n^2}+\frac{\gamma_0\lrv{X}^4e^{\frac{\lrv{X}^2}{n}t}}{n^2}+\frac{2t\sqrt{p}\lrv{X}^5e^{\frac{\lrv{X}^2}{n}t}}{n^3}\right)e^{(\frac{\lrv{X}^2}{n}+\frac{\delta\lrv{X}^4}{n^2})t}.\label{sup_bound}
\end{align}
Again, by a standard random matrix result [see e.g. \cite{roman2018high}, Theorem 4.4.5], it holds that $\lrv{X}\leq C(\sqrt{p}+\sqrt{n})$, with probability $1-2e^{-n}$. Plug this bound into inequality (\ref{sup_bound}), we get for any $n>N_0$,
\begin{align}
    \underset{s\leq t}{\sup}\lrv{\Tilde{\vtheta}^{s}_{\eta,n}-\hat{\vtheta}^s_n}\leq\eta tC_2(\kappa,t,\delta,\gamma_0),
\end{align}
with probability $1-2e^{-n}$, where $C_2(\kappa,t,\delta,\gamma_0)=C(t+\gamma_0+1)(\kappa^6+1)e^{C(\delta+1)(\kappa^2+1)t}$, for some $C>0$.

(b) Define the event $A_n=\left\{ \underset{s\leq t}{\sup}\lrv{\Tilde{\vtheta}^{s}_{\eta,n}-\hat{\vtheta}^s_n}\leq\eta tC_2(\kappa,t,\delta,\gamma_0)\right\}$. Since $|\sigma|,|\sigma'|<1$, we have for any $n>N_0$,
\begin{align*}
    \underset{k\leq t/\eta}{\sup}\left|\text{MSE}_n(k\eta)-\text{MSE}_{\eta,n}(k)\right|&\leq \underset{k\leq t/\eta}{\sup}4\E[|\sigma(\langle\hat{\vtheta}^{k\eta}_n,\vz\rangle)-\sigma(\langle\hat{\vtheta}^{k}_{\eta,n},\vz\rangle)|]\\
    &=\underset{k\leq t/\eta}{\sup}4\E[|\sigma(\langle\hat{\vtheta}^{k\eta}_n,\vz\rangle)-\sigma(\langle\hat{\vtheta}^{k}_{\eta,n},\vz\rangle)|\mathbbm{1}_{A_n}]+8e^{-n}\\
    &\leq\underset{k\leq t/\eta}{\sup}4\E\left[\lrv{\hat{\vtheta}^{k\eta}_n-\hat{\vtheta}^{k}_{\eta,n}}\mathbbm{1}_{A_n}\right]+8e^{-n}\\
    &\leq 4\E\left[\underset{k\leq t/\eta}{\sup}\lrv{\hat{\vtheta}^{k\eta}_n-\hat{\vtheta}^{k}_{\eta,n}}\mathbbm{1}_{A_n}\right]+8e^{-n}\\
    & = 4 \E\left[\underset{k\leq t/\eta}{\sup}\lrv{\hat{\vtheta}^{k\eta}_n-\Tilde{\vtheta}^{k\eta}_{\eta,n}}\mathbbm{1}_{A_n}\right]+8e^{-n}\\
    &\leq 4C_2(\kappa,t,\delta,\gamma_0)t\eta+8e^{-n}
\end{align*}

\end{proof}

We are now ready to prove Theorem \ref{thmA1}.
\begin{proof}[Proof of Theorem \ref{thmA1}] Pick a small $T_2(\kappa,\delta,\gamma_0)>0$ that satisfy $C_2(\kappa,T_2,\delta,\gamma_0)T_2\leq \frac{C_1(\gamma,\gamma_0)}{32}$. This is possible since $C_2(\kappa,t,\delta,\gamma_0)$ is bounded as $t\to 0$.
Let $N_1$ and $T_1$ be as in Lemma \ref{lemA3}, and let $T^*(\gamma,\gamma_0,\kappa,\delta)=\min\{T_1(\gamma,\gamma_0),T_2(\kappa,\delta,\gamma_0)\}$. For any $k<\frac{T^*}{\eta}-1$,
\begin{align*}
    \text{MSE}_{\eta,n}(k+1)-\text{MSE}_{\eta,n}(k)&=(\text{MSE}_{n}((k+1)\eta)-\text{MSE}_{n}(k\eta))+(\text{MSE}_{\eta,n}(k+1)-\text{MSE}_{n}((k+1)\eta))\\
    &+(\text{MSE}_{\eta,n}(k)-\text{MSE}_{n}((k)\eta))\\
    &\leq (\text{MSE}_{n}((k+1)\eta)-\text{MSE}_{n}(k\eta))+2\underset{m\leq T^*/\eta}{\sup}\left|\text{MSE}_n(m\eta)-\text{MSE}_{\eta,n}(m)\right|
\end{align*}
By Lemma \ref{lemA3}.(b) and the mean value theorem, for any $ n>N_1$,
\begin{align*}
    \text{MSE}_{n}((k+1)\eta)-\text{MSE}_{n}(k\eta)\leq-C_1(\gamma,\gamma_0)\eta.
\end{align*}
By Lemma \ref{lemA4}.(b), it holds that for any $ n>N_0$,
\begin{align*}
    \underset{m\leq T^*/\eta}{\sup}\left|\text{MSE}_n(m\eta)-\text{MSE}_{\eta,n}(m)\right|&\leq 4C_2(\kappa,T^*,\delta,\gamma_0)T^*\eta+8e^{-n}\\
    &\leq \frac{C_1(\gamma,\gamma_0)\eta}{8}+8e^{-n}.
\end{align*}
Let $N_2(\eta,\gamma,\gamma_0)$ satisfy $8e^{-N_2}<\frac{C_1(\gamma,\gamma_0)\eta}{8}$, and let $N^*(\eta,\gamma,\gamma_0)=\max\{N_1(\gamma,\gamma_0),N_2(\eta,\gamma,\gamma_0),\}$. We therefore have for any $n>N^*$ and $k<\frac{T^*}{\eta}-1$,
\begin{align*}
    \text{MSE}_{\eta,n}(k+1)-\text{MSE}_{\eta,n}(k)&\leq -C_1(\gamma,\gamma_0)\eta+\frac{C_1(\gamma,\gamma_0)\eta}{4}+16e^{-N^*}\\
    &=-\frac{C_1(\gamma,\gamma_0)\eta}{2}<0.
\end{align*}

\end{proof}
\newpage
\section{Early Learning and Memorization in a Linear Model}
\label{app:linear_model}
We provide a numerical example that illustrates the theoretical results of Section~\ref{sec:early_learning_model}, and demonstrates the similarities between the behavior of linear and deep-learning models in high dimensions. 

We train a logistic regression model in an overparametrized regime ($\frac{p}{n}=1$). To generate the features, we draw $\{\vx_i\}_{i=1}^{500},\hspace{0.2em}\vx_i\in\mathbb{R}^{500}$ i.i.d. Gaussian random variables, $\vx_i\sim\mathcal{N}\left(0 ,I_{500}\right)$. The ground-truth unobserved probability labels $p_i$ are generated according to \eqref{true_prob}. i.e. $p_i =\sigmoid \left(\langle{\vtheta}^\ast, \vx_i\rangle\right).$ The true parameter $\vtheta^* = (1,0,\dots, 0)$ is fixed to equal the first standard basis vector, and $\sigma(x)=(1+e^{x})^{-1}$ is the sigmoid function. 

The 0-1 labels $\left\{y_i\right\}_{i=1}^{500}$ are drawn from Bernoulli distribution with probability $p_i$,  $y_i\sim\text{Bern}(p_i)$. 

We use stochastic gradient descent (to fit a logistic regression model with parameter $\vtheta \in \mathbb{R}^{500}$ on the simulated data $\{(\vx_i, y_i)\}_{i=1}^{500}$, using a cross-entropy loss function. We compare the model trained on two datasets: $(a)$ finite $(\vx,y)$ pairs, and $(b)$ a large amount of data set, generated by repeatedly resampling new outcomes $y_i$ from the ground-truth probabilities $p_i$ at every iteration.

Figure~\ref{fig:scatter_plot_analytic} shows that the linear model trained with cross entropy begins by improving the estimate of the true probabilities (at the early-learning stage), but eventually memorizes the 0-1 labels (the overfitting stage). Figure~\ref{fig:leraning_curve_analytic} illustrates that the $\text{MSE}_p$ of cross entropy training increases when the model starts memorizing the 0-1 labels, as predicted by the results of Appendix~\ref{app:analytical_model}.
\begin{figure}
    \centering
    \includegraphics[width=0.9\textwidth]{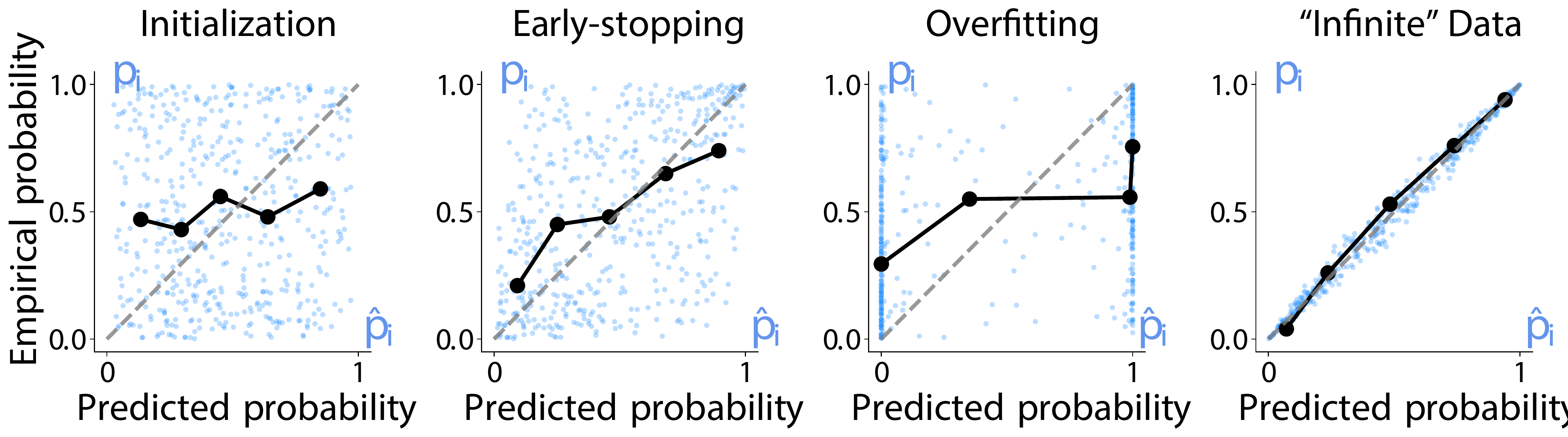}
    \caption{\textbf{Reliability curves for a linear model (Appendix~\ref{app:linear_model})}. The horizontal and vertical coordinate of each blue point represent the predicted ($\hat{p}_i$) and true ($p_i$) probabilities of a test example respectively. We also show a reliability diagram of binned mean predicted and empirical probabilities in black (see Section~\ref{sec:metrics}). The dashed diagonal line indicates perfect calibration. The model is initialized randomly (first column) and initially improves the estimation, with the reliability curve trending towards the diagonal line (second column). However, because the dataset is finite, it eventually overfits, and the predicted probabilities collapse to 0 or 1 (third column). When labels are resampled to generate large amount of data, the estimates converge to the ground-truth probability labels (right column).}
    \label{fig:scatter_plot_analytic}
\end{figure}
\begin{figure}
    \centering
    \includegraphics[width=0.6\textwidth]{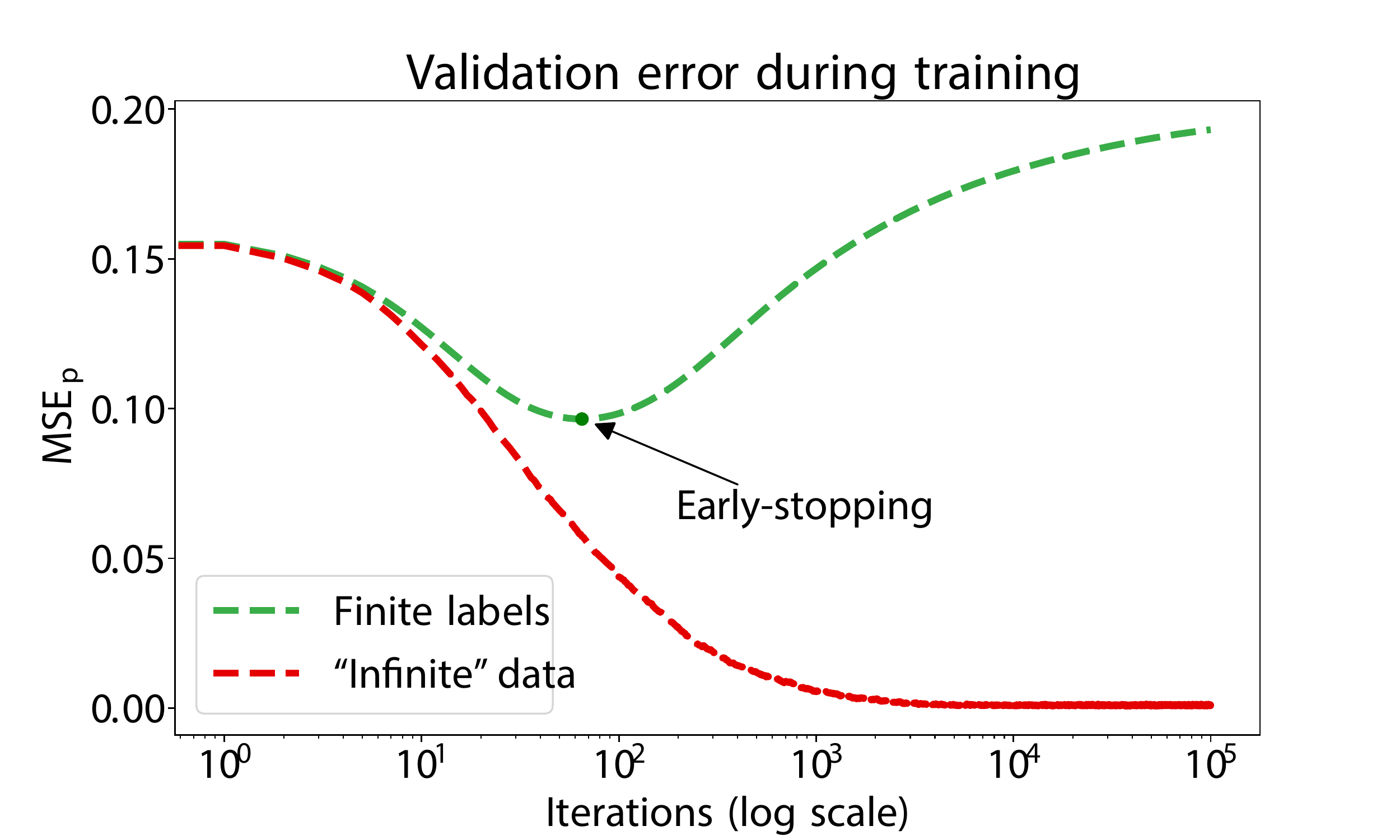}
    \caption{\textbf{Validation loss for training a linear model (Appendix~\ref{app:linear_model})}. Training with finite data (green line) decreases $\text{MSE}_p$ initially but eventually overfits to the 0-1 labels. Training with resampled labels (red line) avoids overfitting and results in accurate estimation of the true probabilities.}
    \label{fig:leraning_curve_analytic}
\end{figure}

%\newpage
\section{Additional Results}
\label{app:results}
We present here supplementary results to the ones presented in Section~\ref{sec:results}.
\subsection{Face-based Risk Prediction}
\label{app:face_results}

Full evaluation with confidence intervals derived using 1000 bootstraps for the five simulated scenarios are examined: \textit{Linear} (Table.\ref{tab:uniform_ci_face}); \textit{Sigmoid} (Table.\ref{tab:sigmoid_ci_face}); \textit{Centered} (Table.\ref{tab:middle_ci_face}); \textit{Skewed} (Table.\ref{tab:scaled_ci_face}); \textit{Discrete} (Table.\ref{tab:stepwise_ci_face}). Note that all numbers are upscaled by $10^{-2}$ in the tables.

\begin{table}[h]
    \centering
    \begin{tabular}{l@{\hspace{0.25cm}}c@{\hspace{0.25cm}}c@{\hspace{0.25cm}}c@{\hspace{0.25cm}}c@{\hspace{0.25cm}}c@{\hspace{0.20cm}}c}
\toprule
\textit{Linear} &                       ECE &                       MCE &                        KS &                     Brier &                 MSE$_p$ &                     KL$_p$ \\
\midrule
CE + label resampling        &   4.14$\pm$0.81 &  12.07$\pm$3.29 &   2.24$\pm$0.88 &  18.97$\pm$0.33 &  1.14$\pm$0.04 &   2.82$\pm$0.11 \\
\midrule
CE early-stop     &  12.32$\pm$0.83 &  21.79$\pm$1.97 &  12.16$\pm$0.83 &  21.82$\pm$0.51 &  4.21$\pm$0.15 &  10.94$\pm$0.36 \\
Temperature        &    5.7$\pm$0.74 &  13.82$\pm$2.71 &   2.36$\pm$0.74 &  20.47$\pm$0.37 &  2.73$\pm$0.11 &   6.75$\pm$0.25 \\
Platt Scaling     &   4.29$\pm$0.77 &  10.94$\pm$2.55 &   1.3$\pm$0.45 &  20.18$\pm$0.36 &  2.48$\pm$0.09 &   6.07$\pm$0.22 \\
Dirichlet Cal.  &   7.38$\pm$1.12 &  22.58$\pm$7.24 &   3.78$\pm$0.46 &  21.32$\pm$0.33 &  3.56$\pm$0.13 &   9.08$\pm$0.29 \\
Focal Loss &   5.34$\pm$0.68 &  13.31$\pm$2.67 &   3.56$\pm$0.85 &  21.99$\pm$0.28 &  4.13$\pm$0.11 &  10.52$\pm$0.28 \\
Mix-n-match &   5.46$\pm$0.84 &   12.9$\pm$2.51 &   1.92$\pm$0.44 &  20.43$\pm$0.35 &   2.7$\pm$0.11 &   6.72$\pm$0.24 \\
Entropy Reg.    &    5.7$\pm$0.74 &  13.52$\pm$1.67 &    4.94$\pm$0.9 &   20.42$\pm$0.3 &  2.58$\pm$0.09 &   6.65$\pm$0.21 \\
MMCE Reg.  &   4.89$\pm$0.74 &  12.57$\pm$2.27 &   1.92$\pm$0.46 &  20.04$\pm$0.38 &  2.24$\pm$0.08 &    5.68$\pm$0.2 \\
Deep Ensemble  &  4.26$\pm$0.72 &  11.33$\pm$2.38 &   1.95$\pm$0.61 &  19.88$\pm$0.32 &   1.9$\pm$0.07 &   4.55$\pm$0.18 \\
\midrule
CaPE (bin)         &   4.58$\pm$0.75 &  11.85$\pm$2.49 &   1.71$\pm$0.51 &  19.68$\pm$0.36 &  1.78$\pm$0.07 &   4.35$\pm$0.16 \\
CaPE (kernel)       &   4.62$\pm$0.62 &  12.25$\pm$2.31 &   1.65$\pm$0.38 &  19.71$\pm$0.34 & 1.74$\pm$0.07 &    4.3$\pm$0.17 \\
\bottomrule
\end{tabular}
    \caption{Performance on Face-based Risk Prediction. \textit{Linear} scenario. All numbers are downscaled by $10^{-2}$.}
    \label{tab:uniform_ci_face}
\end{table}

\begin{table}[h]
    \centering
    \begin{tabular}{lc@{\hspace{0.25cm}}c@{\hspace{0.25cm}}c@{\hspace{0.25cm}}c@{\hspace{0.25cm}}c@{\hspace{0.25cm}}c}
\toprule

\textit{Sigmoid} &                     ECE &                       MCE &                        KS &                     Brier &                 MSE$_p$ &                     KL$_p$ \\
\midrule
CE + label resampling         &   6.4$\pm$0.71 &  20.63$\pm$3.44 &  2.74$\pm$0.45 &  16.28$\pm$0.44 &   5.34$\pm$0.2 &  14.82$\pm$0.51 \\
\midrule
CE early-stop        &  6.19$\pm$0.75 &   17.0$\pm$3.68 &   5.86$\pm$0.8 &  16.68$\pm$0.42 &  6.16$\pm$0.17 &  17.16$\pm$0.48 \\
Temperature        &  5.57$\pm$0.71 &  15.32$\pm$3.09 &  5.02$\pm$0.83 &  16.58$\pm$0.34 &  6.13$\pm$0.17 &  17.09$\pm$0.43 \\
Platt Scaling      &  3.45$\pm$0.68 &  10.32$\pm$2.79 &   1.3$\pm$0.43 &  16.33$\pm$0.34 &  5.78$\pm$0.19 &  16.15$\pm$0.47 \\
Dirichlet Cal. &  14.5$\pm$1.15 &  25.68$\pm$3.02 &  4.67$\pm$0.32 &  19.21$\pm$0.43 &  8.64$\pm$0.26 &  25.18$\pm$0.58 \\
Focal Loss      &   4.65$\pm$0.7 &  11.84$\pm$2.78 &  2.66$\pm$0.77 &  16.96$\pm$0.34 &  6.86$\pm$0.21 &   19.46$\pm$0.5 \\
Mix-n-match &  5.65$\pm$0.76 &  15.32$\pm$3.41 &  5.09$\pm$0.94 &   16.6$\pm$0.36 &  6.12$\pm$0.17 &  17.08$\pm$0.46 \\
Entropy Reg.    &  9.51$\pm$0.79 &  18.77$\pm$2.38 &  7.26$\pm$0.78 &  17.17$\pm$0.31 &  7.02$\pm$0.17 &  21.16$\pm$0.42 \\
MMCE Reg.    &  4.67$\pm$0.76 &  13.63$\pm$2.59 &   2.5$\pm$0.53 &   15.9$\pm$0.51 &  5.35$\pm$0.18 &  15.06$\pm$0.49 \\
Deep Ensemble  &   5.17$\pm$0.74 &  16.12$\pm$3.11 &  2.04$\pm$0.44 &  16.39$\pm$0.45 &  5.86$\pm$0.22 &   16.46$\pm$0.6 \\
\midrule
CaPE (bin)         &   3.78$\pm$0.6 &  11.96$\pm$2.59 &   2.22$\pm$0.7 &  15.84$\pm$0.43 &   5.17$\pm$0.2 &  14.27$\pm$0.49 \\
CaPE (kernel)          &   3.9$\pm$0.75 &  11.73$\pm$2.79 &  2.05$\pm$0.54 &  15.85$\pm$0.41 &   5.16$\pm$0.2 &  14.34$\pm$0.49 \\
\bottomrule
\end{tabular}
    \caption{Performance on Face-based Risk Prediction. \textit{Sigmoid} scenario. All numbers are downscaled by $10^{-2}$.}
    \label{tab:sigmoid_ci_face}
\end{table}

\begin{table}[]
    \centering
    \begin{tabular}{lc@{\hspace{0.3cm}}c@{\hspace{0.3cm}}c@{\hspace{0.3cm}}c@{\hspace{0.3cm}}c@{\hspace{0.3cm}}c}
\toprule
\textit{Centered}  &                ECE &                       MCE &                        KS &                     Brier &                 MSE$_p$ &                     KL$_p$  \\
\midrule
CE + label resampling        &  4.29$\pm$0.74 &  12.38$\pm$2.92 &   2.68$\pm$0.8 &  24.22$\pm$0.13 &   0.2$\pm$0.01 &  0.41$\pm$0.01 \\
\midrule
CE early-stop          &  5.76$\pm$0.84 &  15.32$\pm$3.07 &  4.19$\pm$1.02 &  24.68$\pm$0.08 &  0.48$\pm$0.01 &  0.98$\pm$0.03 \\
Temperature        &  6.09$\pm$0.82 &  15.83$\pm$2.91 &  4.74$\pm$0.96 &  24.74$\pm$0.06 &  0.48$\pm$0.01 &  0.98$\pm$0.03 \\
Platt Scaling      &  4.57$\pm$0.76 &   11.85$\pm$2.5 &  2.79$\pm$0.85 &  24.62$\pm$0.08 &  0.41$\pm$0.01 &  0.83$\pm$0.03 \\
Dirichlet Cal.  &  4.84$\pm$1.15 &  13.13$\pm$7.61 &  2.16$\pm$0.86 &    24.7$\pm$0.1 &  0.46$\pm$0.01 &  0.94$\pm$0.03 \\
Mix-n-match &  6.05$\pm$0.83 &  15.71$\pm$2.92 &  4.68$\pm$0.98 &  24.74$\pm$0.06 &  0.48$\pm$0.01 &  0.98$\pm$0.02 \\
Focal Loss     &  5.09$\pm$0.83 &   13.4$\pm$2.87 &  3.44$\pm$1.02 &   24.8$\pm$0.05 &  0.48$\pm$0.01 &  0.97$\pm$0.03 \\
Entropy Reg.     &  5.02$\pm$0.86 &  12.69$\pm$3.42 &  3.27$\pm$0.96 &  24.74$\pm$0.06 &  0.45$\pm$0.01 &  0.92$\pm$0.03 \\
MMCE Reg.        &  5.56$\pm$0.86 &  13.59$\pm$2.65 &  2.71$\pm$0.93 &   24.7$\pm$0.08 &  0.44$\pm$0.01 &   0.9$\pm$0.03 \\
Deep Ensemble  &  4.84$\pm$0.78 &  12.39$\pm$2.52 &  2.64$\pm$0.71 &  24.69$\pm$0.07 &  0.44$\pm$0.01 &  0.89$\pm$0.03 \\
\midrule
CaPE (bin)         &  4.73$\pm$0.82 &  11.81$\pm$2.54 &   2.07$\pm$0.6 &  24.56$\pm$0.11 &  0.38$\pm$0.01 &  0.78$\pm$0.03 \\
CaPE (kernel)          &  5.41$\pm$0.87 &   12.71$\pm$2.5 &  2.39$\pm$0.78 &  24.59$\pm$0.11 &   0.4$\pm$0.01 &  0.81$\pm$0.03 \\

\bottomrule
\end{tabular}
    \caption{Performance on Face-based Risk Prediction. \textit{Centered} scenario. All numbers are downscaled by $10^{-2}$.}
    \label{tab:middle_ci_face}
\end{table}

\begin{table}[]
    \centering
    \begin{tabular}{lc@{\hspace{0.25cm}}c@{\hspace{0.25cm}}c@{\hspace{0.25cm}}c@{\hspace{0.25cm}}c@{\hspace{0.25cm}}c}
\toprule

\textit{Skewed} &                  ECE &                       MCE &                        KS &                     Brier &                 MSE$_p$ &                     KL$_p$  \\
\midrule
CE + label resampling        &   2.7$\pm$0.46 &   7.64$\pm$2.14 &  1.05$\pm$0.39 &   11.0$\pm$0.51 &  0.22$\pm$0.01 &   0.92$\pm$0.03 \\
\midrule
CE early-stop          &  3.07$\pm$0.57 &   7.88$\pm$1.88 &  1.28$\pm$0.41 &   11.18$\pm$0.5 &   0.4$\pm$0.01 &   1.79$\pm$0.06 \\
Temperature        &  3.14$\pm$0.49 &   7.92$\pm$1.84 &  1.12$\pm$0.33 &  11.22$\pm$0.47 &   0.4$\pm$0.02 &   1.76$\pm$0.06 \\
Platt Scaling     &  2.99$\pm$0.53 &   7.73$\pm$1.59 &  1.07$\pm$0.37 &   11.1$\pm$0.54 &  0.39$\pm$0.01 &   1.72$\pm$0.06 \\
Dirichlet Cal.  &  3.04$\pm$0.73 &   7.81$\pm$2.43 &   0.97$\pm$0.3 &  11.22$\pm$0.42 &  0.47$\pm$0.02 &   2.31$\pm$0.07 \\
Focal Loss       &  8.29$\pm$0.67 &  14.93$\pm$1.43 &  6.16$\pm$0.67 &  12.01$\pm$0.41 &  1.28$\pm$0.03 &  1.63$\pm$0.66 \\
Mix-n-match &  2.99$\pm$0.53 &   7.78$\pm$1.78 &  1.08$\pm$0.32 &  11.18$\pm$0.49 &   0.4$\pm$0.01 &   1.75$\pm$0.05 \\
Entropy Reg.    &  7.67$\pm$0.57 &   14.43$\pm$1.5 &   5.2$\pm$0.71 &  11.94$\pm$0.45 &  1.18$\pm$0.03 &  10.74$\pm$0.65 \\
MMCE Reg.        &  3.68$\pm$0.59 &  10.94$\pm$2.76 &  1.47$\pm$0.31 &  11.14$\pm$0.44 &  0.54$\pm$0.02 &   2.44$\pm$0.08 \\
Deep Ensemble  &   2.87$\pm$0.5 &   7.21$\pm$1.63 &  1.36$\pm$0.44 &   11.28$\pm$0.5 &  0.55$\pm$0.02 &   2.58$\pm$0.07 \\
\midrule
CaPE (bin)         &   3.29$\pm$0.5 &   8.18$\pm$1.51 &  1.17$\pm$0.34 &  11.07$\pm$0.47 &   0.4$\pm$0.02 &   1.73$\pm$0.06 \\
CaPE (kernel)          &   3.16$\pm$0.5 &   8.14$\pm$1.58 &  1.09$\pm$0.33 &  11.17$\pm$0.53 &  0.39$\pm$0.01 &   1.69$\pm$0.06 \\
\bottomrule
\end{tabular}
    \caption{Performance on Face-based Risk Prediction. \textit{Skewed} scenario. All numbers are downscaled by $10^{-2}$.}
    \label{tab:scaled_ci_face}
\end{table}

\begin{table}[]
    \centering
    \begin{tabular}{lc@{\hspace{0.3cm}}c@{\hspace{0.3cm}}c@{\hspace{0.3cm}}c@{\hspace{0.3cm}}c@{\hspace{0.3cm}}c}
\toprule
\textit{Discrete} &                    ECE &                       MCE &                        KS &                     Brier &                 MSE$_p$ &                     KL$_p$ \\
\midrule
CE + label resampling        &  4.23$\pm$0.74 &   11.16$\pm$2.5 &  1.45$\pm$0.49 &  20.38$\pm$0.35 &  1.52$\pm$0.05 &  3.63$\pm$0.12 \\
\midrule
CE early-stop          &   6.7$\pm$0.86 &  18.62$\pm$3.52 &  2.61$\pm$0.53 &  21.91$\pm$0.36 &  2.24$\pm$0.08 &  5.27$\pm$0.17 \\
Temperature       &  6.12$\pm$0.87 &  16.82$\pm$3.56 &  3.37$\pm$0.86 &  21.76$\pm$0.35 &  2.21$\pm$0.08 &  5.15$\pm$0.18 \\
Platt Scaling    &   4.7$\pm$0.72 &  11.69$\pm$2.44 &  1.67$\pm$0.51 &  21.44$\pm$0.32 &  2.06$\pm$0.08 &  4.83$\pm$0.17 \\
Dirichlet Cal.  &  7.13$\pm$0.86 &  22.67$\pm$5.08 &  3.18$\pm$0.68 &   22.1$\pm$0.34 &   2.74$\pm$0.1 &  6.53$\pm$0.22 \\
Focal Loss      &   5.7$\pm$0.75 &  13.68$\pm$2.32 &  4.62$\pm$0.91 &  21.77$\pm$0.28 &  2.92$\pm$0.09 &  6.77$\pm$0.21 \\
Mix-n-match &  6.27$\pm$0.76 &  16.83$\pm$2.95 &  3.47$\pm$0.93 &  21.77$\pm$0.33 &  2.21$\pm$0.08 &  5.14$\pm$0.18 \\
Entropy Reg.    &  6.69$\pm$0.87 &  15.38$\pm$2.43 &  6.03$\pm$1.13 &  21.79$\pm$0.31 &  2.84$\pm$0.08 &  6.62$\pm$0.19 \\
MMCE Reg.        &   3.96$\pm$0.7 &    10.4$\pm$2.4 &  1.51$\pm$0.47 &  21.12$\pm$0.35 &  2.09$\pm$0.08 &  4.92$\pm$0.18 \\
Deep Ensemble  &  4.76$\pm$0.74 &  11.49$\pm$2.23 &  2.04$\pm$0.61 &  21.17$\pm$0.31 &  1.97$\pm$0.08 &  4.61$\pm$0.17 \\
\midrule
CaPE (bin)         &  5.41$\pm$0.74 &  14.45$\pm$3.15 &  2.24$\pm$0.59 &  21.33$\pm$0.36 &  1.81$\pm$0.08 &  4.28$\pm$0.18 \\
CaPE (kernel)          &   4.96$\pm$0.8 &  12.97$\pm$2.63 &  2.18$\pm$0.58 &  21.21$\pm$0.42 &  1.84$\pm$0.08 &  4.35$\pm$0.17 \\
\bottomrule
\end{tabular}
    \caption{Performance on Face-based Risk Prediction. \textit{Discrete} scenario. All numbers are downscaled by $10^{-2}$.}
    \label{tab:stepwise_ci_face}
\end{table}

\newpage
\newpage

\subsection{Supplementary Metrics on Real-world Dataset}
\label{app:real_world_results}
We present here additional metrics on the real world data: Cancer Survival (Table~\ref{tab:cancer_app}); Climate Forecasting (Table~\ref{tab:weather_app}); Collision Prediction (Table~\ref{tab:traffic_app}). 
% \textbf{Cancer Survival} \\
%Table~\ref{tab:cancer_app}
\begin{table}[h]
    \centering
    \begin{tabular}{lrrrrrc}
\toprule
Methods ~~~~~~$(\times 10^{-2})$ &    AUC &    ECE &    MCE &   NLL &  Brier &  KS \\
\midrule
CE Early-stop               &  58.88 &  12.25 &  25.35 &  67.92 &  23.96 &      6.44 \\
Temperature      &  58.88 &  12.07 &  24.65 &  67.11 &  23.73 &      6.92 \\
Platt Scaling &  58.91 &  10.28 &  27.69 &  66.11 &  23.33 &      4.91 \\
Dirichlet Cal.        &  49.89 &  13.83 &  35.52 &  67.57 &  24.08 &      6.00 \\
Mix-n-match      &  58.88 &  12.16 &  \textbf{24.52} &  66.89 &  23.67 &      7.18 \\
Focal loss         &  55.02 &  12.15 &  26.34 &  65.92 &  23.31 &      6.38 \\
Entropy Reg.     &  56.29 &  11.73 &  30.81 &  66.49 &  23.62 &      6.83 \\
MMCE Reg. & 48.45 & 11.84 & 37.36 & 66.83 & 23.73 & 3.64\\
Deep Ensemble          &  52.26 &  9.99 &  28.30 &  66.22 &  23.47 &     5.02 \\
\midrule
CaPE (bin)       & \textbf{ 61.44 }&  12.31 &  25.27 &  65.75 &  23.20 &      \textbf{2.59} \\
CaPE (kernel)        &  61.22 &  \textbf{9.48} &  32.40 &  \textbf{65.70} &  \textbf{23.18} &      3.70 \\
\bottomrule
\end{tabular}
    \caption{Baselines with full metrics for cancer survival. All numbers are downscaled by $10^{-2}$.}
    \label{tab:cancer_app}
\end{table}
% \textbf{Weather Forecasting}\\
%Table~\ref{tab:weather_app} 
\begin{table}[h!]
    \centering
    \begin{tabular}{lrrrrrc}
\toprule
Methods ~~~~~~$(\times 10^{-2})$ &    AUC &    ECE &    MCE &   NLL &  Brier &  KS \\
\midrule
CE Early-stop&77.64&10.91&25.50&59.97&20.57&11.03\\
Temperature&77.64&8.66&23.56&58.77&20.21&7.41\\
Platt Scaling&77.65&6.97&16.47&57.38&19.53&3.26\\
Dirichlet Cal.&77.51&14.29&30.09&62.83&21.89&5.21\\
Mix-n-match&77.64&8.65&23.58&58.77&20.21&7.39\\
Focal Loss&76.18&8.32&21.25&59.01&20.27&4.45\\
Entropy Reg&79.01&10.53&20.72&57.83&19.77&5.00\\
MMCE Reg&76.69&8.46&19.73&59.25&20.12&7.31\\
Deep Ensemble&\textbf{79.86}&7.41&18.24&55.28&18.82&7.57\\
\midrule
CaPE (bin)&78.99&5.16&15.09&\textbf{79.00}&\textbf{18.37}&\textbf{2.34}\\
CaPE (kernel) &79.00&\textbf{5.08}&\textbf{13.28}&54.32&18.39&\textbf{2.34}\\
\bottomrule
\end{tabular}
    \caption{Baselines with full metrics for weather prediction. All numbers are downscaled by $10^{-2}$.}
    \label{tab:weather_app}
\end{table}
% \textbf{Collision Prediction} \\
%Table~\ref{tab:traffic_app}
\begin{table}[h!]
    \centering
    \begin{tabular}{lrrrrrc}
\toprule
Methods ~~~~~~$(\times 10^{-2})$ &    AUC &    ECE &    MCE &   NLL &  Brier &  KS \\
\midrule
CE Early-stop&85.68&4.36&19.87&31.67&8.59& 1.54\\
Temperature& 85.68&4.56&16.79&30.36&8.52& 2.9\\
Platt Scaling& 85.76&	3.04& 12.39&	\textbf{29.42}&	8.23& \textbf{1.52}\\
Dirichlet Cal.& 83.36&	5.78&	18.13&	30.90&	8.77& 1.60\\
Mix-n-match& 85.68&	4.40&	17.41&	30.25&	8.52& 2.60\\
Focal Loss& 82.21&	9.07&	19.85&	34.41&	9.82&  8.72\\
Entropy Reg&83.15&	14.54&	21.27&	38.74&	11.10& 13.44\\
MMCE Reg.&85.18&	\textbf{2.94}&	\textbf{8.95}&	30.65&	8.48& 2.44\\
Deep Ensemble& 85.27&	3.15&	16.53&	30.20&	8.54& 2.01\\
\midrule
CaPE (bin)&85.70&  	3.16&	12.21&	30.61&	8.18& 2.13\\
CaPE (kernel)& \textbf{85.95}&	3.22&	13.32&	30.44&	\textbf{8.13}& 2.10\\
\bottomrule
\end{tabular}
    \caption{Baselines with full metrics for collision prediction. All numbers are downscaled by $10^{-2}$.}
    \label{tab:traffic_app}
\end{table}
%\begin{table}
%\begin{centering}
%\csvautobooktabular{w_data.csv}
%\caption{Metrics over the weather dataset}
%\label{tab:weather_app}
%\end{centering}
%\end{table}

%\begin{table}
%\begin{centering}
%\csvautobooktabular{t_data.csv}
%\caption{Metrics over traffic dataset}
%\label{tab:traffic_app}
%\end{centering}
%\end{table}

\newpage
\subsection{Additional Reliability Diagram}
\label{app:app_reliability}
Figure~\ref{fig:reliability_curves_all} shows additional reliability curves on the real world data, supplementing the ones illustrated in Figure~\ref{fig:real_world_reliablility_curve}. 
\begin{figure}[tp]
    \centering
    \includegraphics[width=0.99\textwidth]{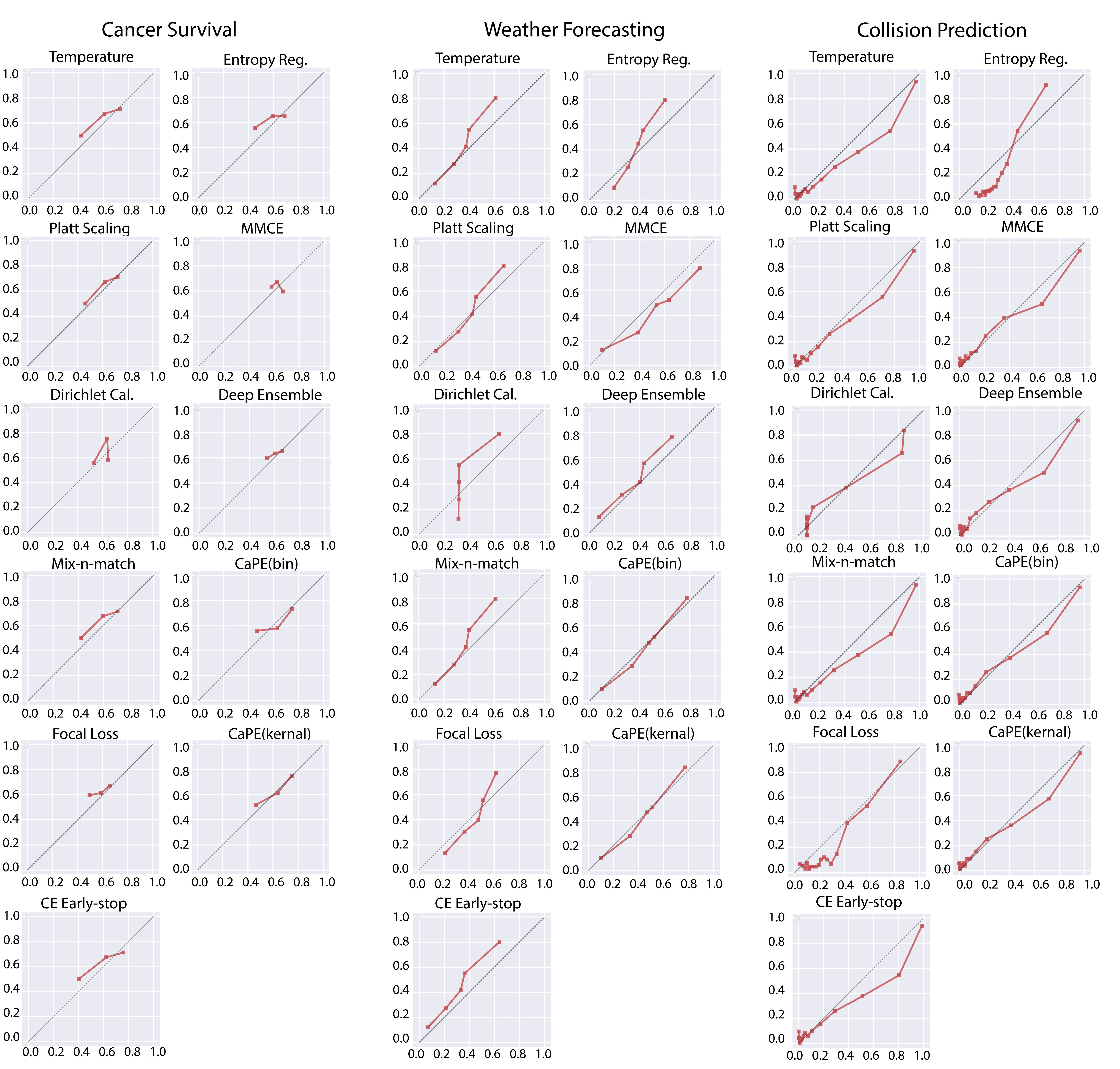}
    \caption{Reliability diagrams of all the baselines on the real-world datasets. We train all baseline methods on each of the datasets and plot the empirical probability(y-axis) against predicted probability(x-axis). The axis labels are removed to ease visualization (they are the same as in Figure~\ref{fig:real_world_reliablility_curve}). }
    \label{fig:reliability_curves_all}
\end{figure}
% \newpage
Figure~\ref{fig:synthetic_5_reliability_curves}  shows reliability curves for the different synthetic data scenarios. 
\begin{figure}[tp]
    \centering
    \begin{tabular}{c}
    \textit{\textbf{Linear}} \\
    \includegraphics[trim={0 0 0 0},clip, width=0.85\textwidth]{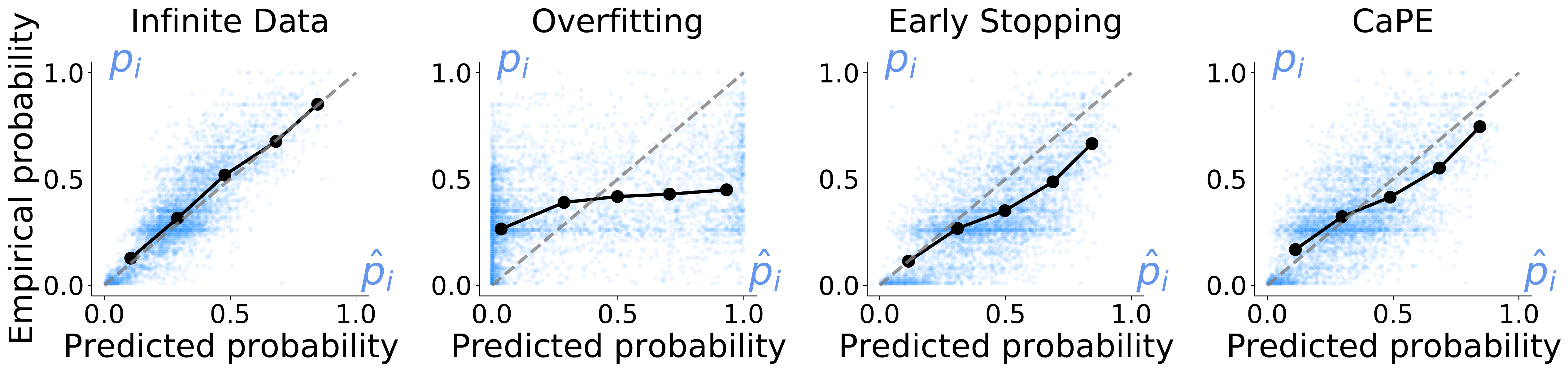} \\[2.5ex] 
    \textit{\textbf{Sigmoid}} \\
    \includegraphics[trim={0 0 0 0},clip, width=0.85\textwidth]{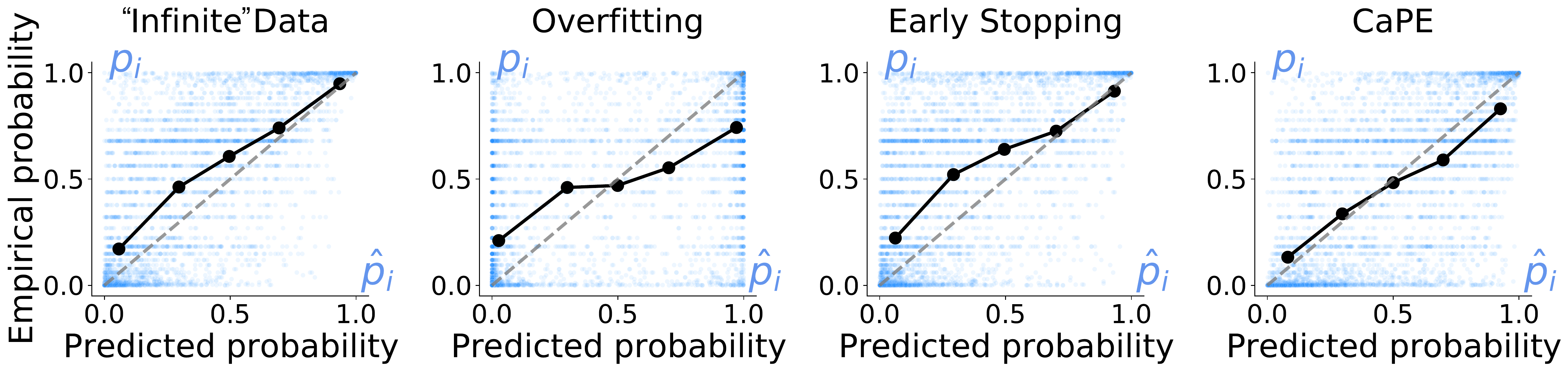} \\ [2.5ex]
    \textit{\textbf{Centered}} \\
    \includegraphics[trim={0 0 0 0},clip, width=0.85\textwidth]{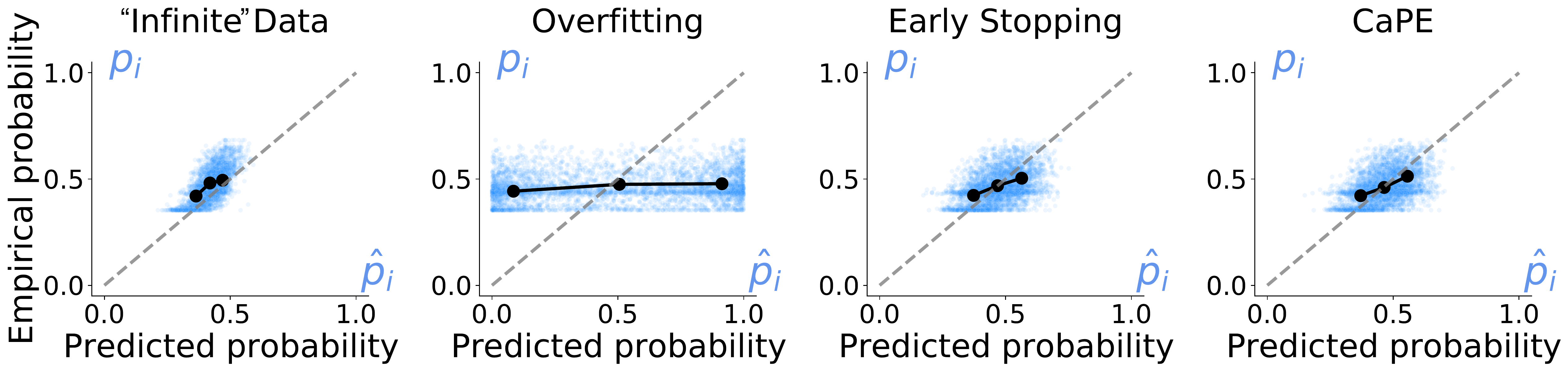} \\ [2.5ex]
    \textit{\textbf{Skewed}} \\
    \includegraphics[trim={0 0 0 0},clip, width=0.85\textwidth]{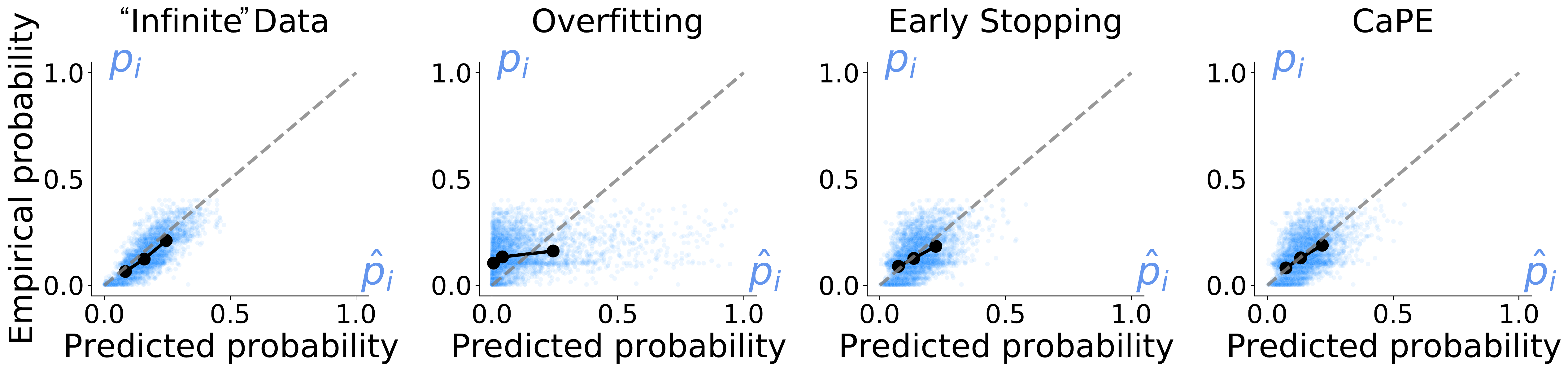} \\[2.5ex] 
    \textit{\textbf{Discrete}} \\
    \includegraphics[trim={0 0 0 0},clip, width=0.85\textwidth]{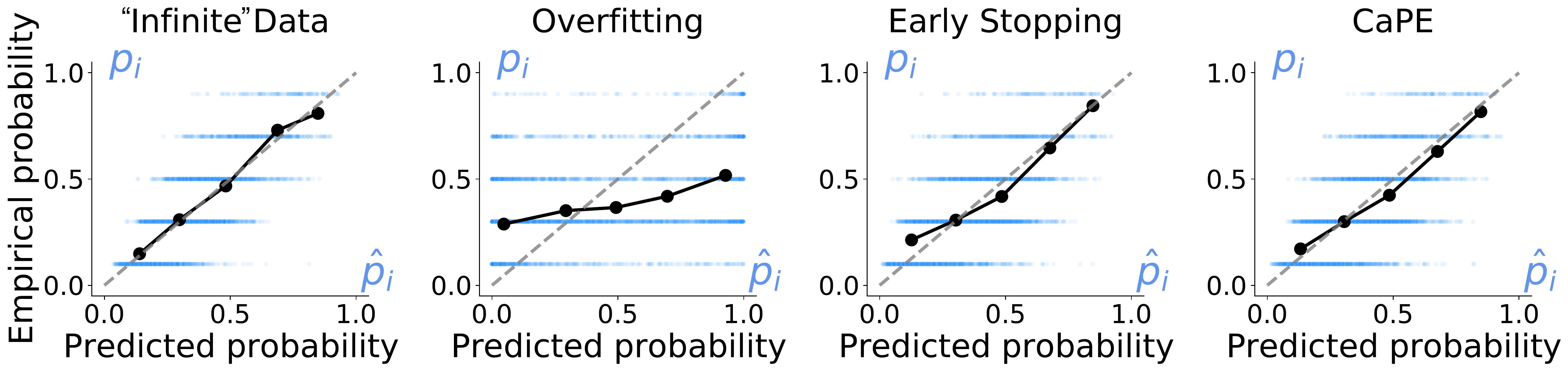} \\
    \end{tabular}
    \caption{Reliability diagrams for different synthetic data scenarios. We can see that CaPE outperforms early stopping, prevents overfitting, and achieves a performance on par with training on large amount of resampled data.  }
    \label{fig:synthetic_5_reliability_curves}
\end{figure}

\section{Kolmogorov-Smirnov Error}
\label{app:KS_error}
We derive the KS-error, mentioned in Section~\ref{sec:metrics}.

For a calibrated estimator
    $$\mathbb{P}\left(y=1|f(\vx)\in I(q)\right)=q, \hspace{0.5em}\forall 0\le q\le 1,$$ 
    for some small interval $I(q)$ around $q$. 
    
    Hence
    $$\mathbb{P}\left(y=1,f(\vx)\in I(q)\right)=\mathbb{P}\left(f(\vx)\in I(q)\right)\hspace{0.1em}q, \hspace{0.5em}\forall 0\le q\le 1.$$ 
    Similarly to the Kolmogorov-Smirnov (KS) test for distribution functions, we can recast this property in integral form
    $$\phi_1(\sigma)=\int\limits_0^\sigma \mathbb{P}\left(y=1,f(\vx)\in I(q) \right)dq,\quad \phi_2(\sigma)=\int\limits_0^\sigma \mathbb{P}\left(f(\vx)\in I(q)\right)\hspace{0.1em} q dq.$$
    We can evaluate $\phi_1,\phi_2$ from a finite sample $(\vx_i,y_i), i=1\dots n,$
    $$\phi_1(\sigma)=\frac{1}{n}\sum\limits_{i=1}^n\mathbf{1}(y_i=1,f(\vx_i)\le \sigma), \quad\phi_2(\sigma)=\frac{1}{n}\sum\limits_{i=1}^n\mathbf{1}(f(\vx_i)\le \sigma)f(\vx_i).$$
    The KS error is defined as $$\text{KS}=\max\limits_{1\le \sigma\le 1}|\phi_1(\sigma)-\phi_2(\sigma)|.$$
    $\phi_1,\phi_2$ can be efficiently computed by sorting the data points with respect to their confidence scores $f(\vx_i)$. The KS error has the advantage of being independent of binning configurations, unlike ECE and MCE.
\section{Brier Score Decomposition}
\label{app:brier_decomp}
We present here a decomposition of the Brier score into two components, discussed in Section~\ref{sec:metrics}.

The Brier score can be interpreted as a sum of two terms, calibration and refinement. Assume the network can output one of $K$ distinct possible predictions, i.e., $\hat{p}\in\{\hat q_1,\dots,\hat q_K\}$. %$f(x)\in\{f_1,\dots,f_K\}$. 
    
    Denote $S_k$, the set of all inputs with output $\hat{q}_k$ and $\bar{q}_k$ the empirical probability over $S_k$, i.e.,
     $$S_k=\{\vx|f(\vx)=\hat q_k\},\hspace{1em}|S_k|=n_k,\hspace{1em} \bar{q}_k=\frac{1}{n_k}\sum\limits_{\vx_i \in S_k}y_i.$$ 
    Then we can write
    %$$\text{Brier}[f]=\frac{1}{N}\sum\limits_{i=k}n_k(f_k-\bar{y}_k)^2+\frac{1}{N}\sum\limits_{i=1}^kn_k\bar{y}_k(1-\bar{y}_k),$$
    $$\text{Brier}=\frac{1}{N}\sum\limits_{i=1}^N(\hat p_i-y_i)^2=\frac{1}{N}\sum\limits_{k=1}^K n_k(\hat q_k-\bar{q}_k)^2+\frac{1}{N}\sum\limits_{k=1}^K n_k\bar{q}_k(1-\bar{q}_k),$$
    %$n_k$ is the number of data points that have prediction $f_k$, and $\bar{p}_k$ is the empirical mean of $y$ for those data points. 
    The first term on the RHS, calibration, is similar to MSE$_p$, with the empirical probabilities $\bar q_k$ substituting for the true labels. The second term, refinement, is an estimate of the confidence in determining $\bar{q}_k$. It is related to the area under curve (AUC), which measures to the achievable accuracy of the network as a classifier. The term is smaller as the prediction classes $\bar{q}_k$ tend towards $0$ or $1$. Thus, this term penalizes empirically calibrated predictors, with low discriminative power, as in Figure~\ref{fig:pred_diag}. 

% \newpage
\section{Metric Comparison}
\label{app:metric_comparsion_appendix}
Figure~\ref{fig:metric_comparsion_appendix} shows the correlation between different calibration and accuracy metrics, and two gold-standard metric that use ground truth probabilities: $\text{MSE}_p$
and KL-divergence. The correlations are computed using all five scenarios in our Face-based Risk Prediction synthetic dataset. 
%Results are illustrated in Fig. \ref{fig:metric_comparsion_appendix}

\begin{figure}[tp]
    \centering

    \begin{tabular}{c}
    \textit{Linear} \\
    \includegraphics[trim={0 108 0 0},clip, width=0.95\textwidth]{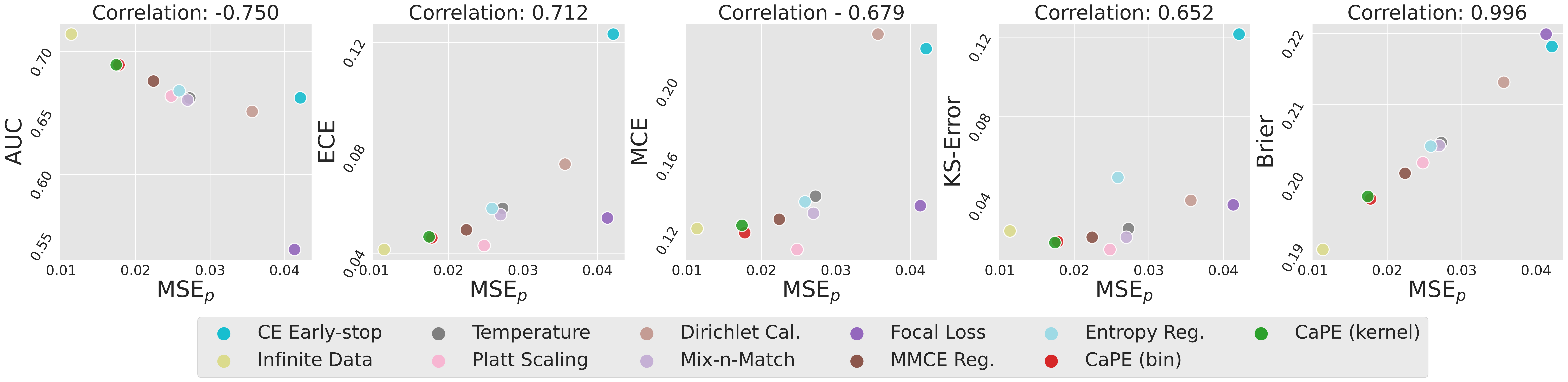} \\ 
    \textit{Sigmoid} \\
    \includegraphics[trim={0 108 0 0},clip, width=0.95\textwidth]{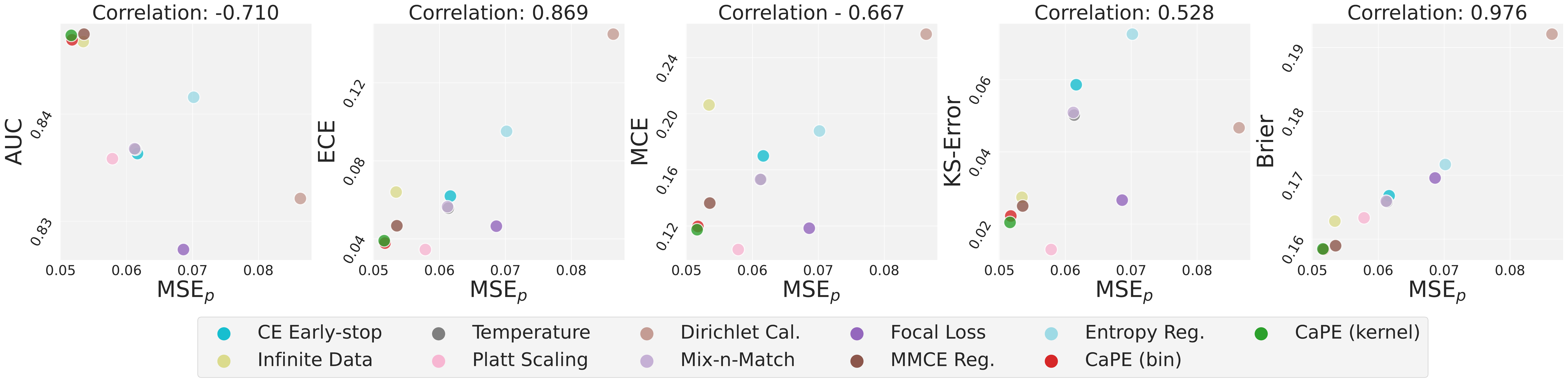} \\ 
    \textit{Centered} \\
    \includegraphics[trim={0 108 0 0},clip, width=0.95\textwidth]{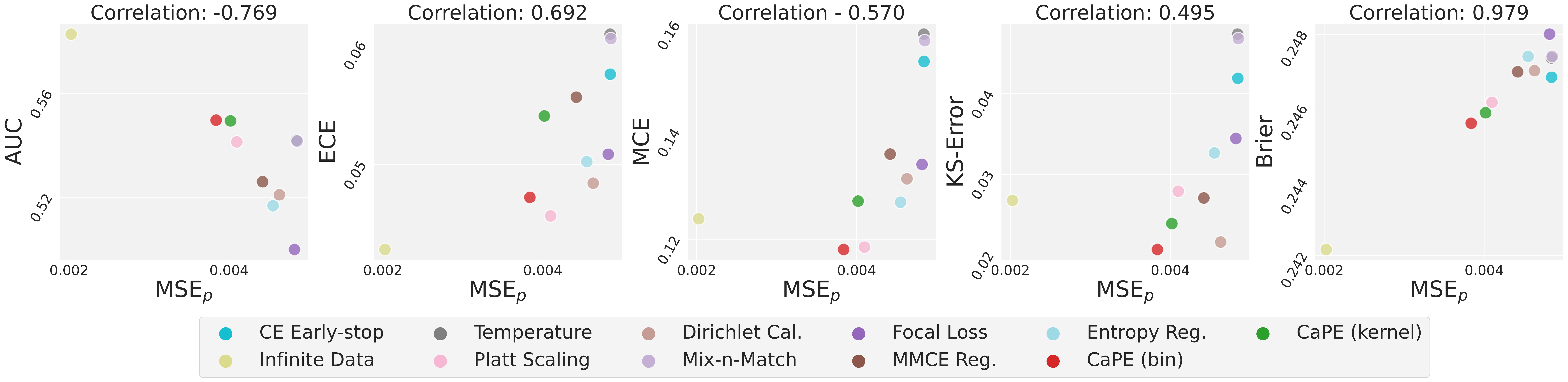} \\ 
    \textit{Skewed} \\
    \includegraphics[trim={0 108 0 0},clip, width=0.95\textwidth]{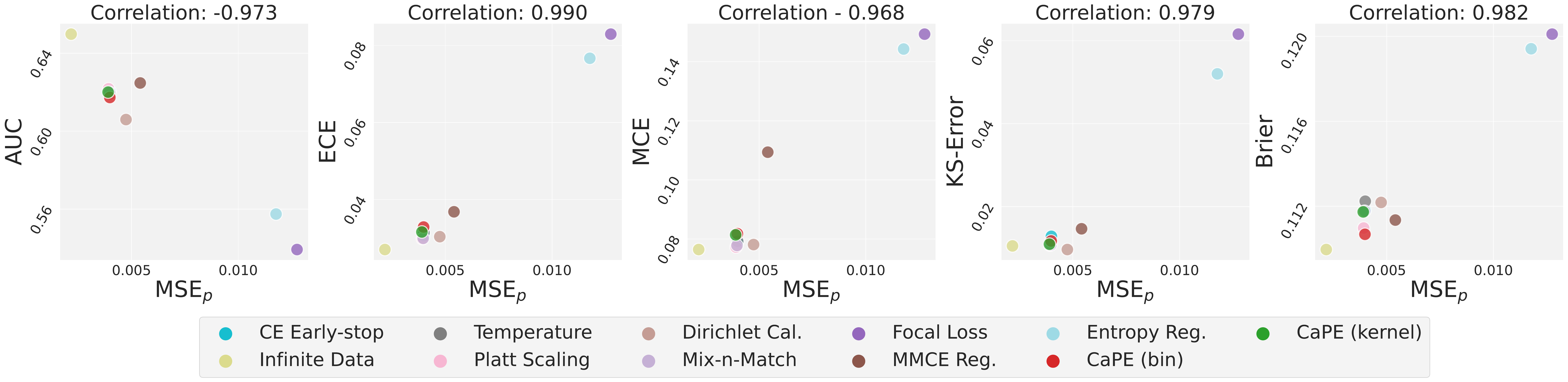} \\ 
    \textit{Discrete} \\
    \includegraphics[width=0.95\textwidth]{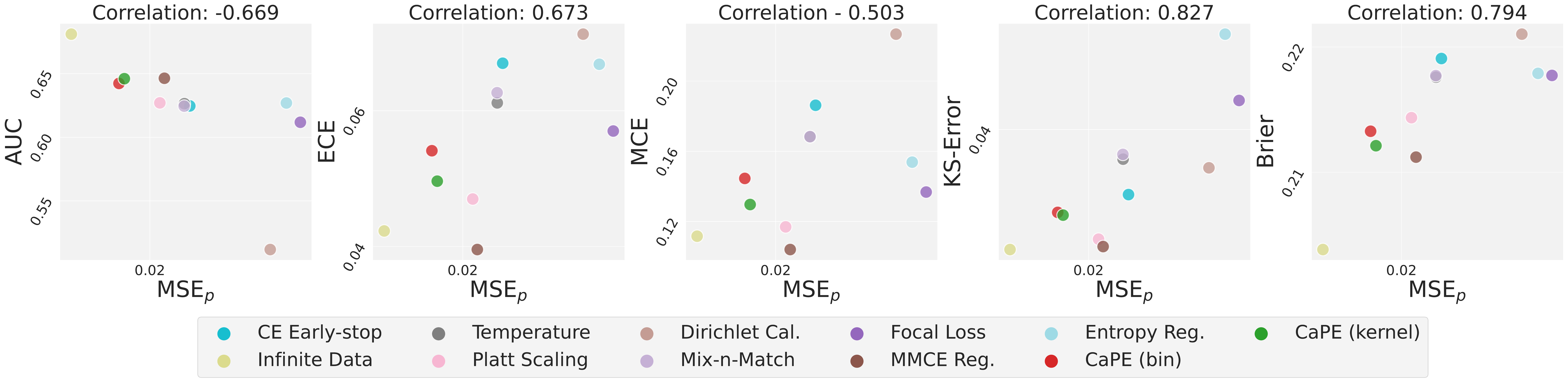} \\ 
    \end{tabular}
    \caption{Correlation between MSE$_p$ and other metrics on synthetic data. Brier score achieves the most consistent correlation with $\text{MSE}_p$.}
    \label{fig:metric_comparsion_appendix}
    
\end{figure}
% \newpage
\begin{figure}[tp]
    \centering
    \begin{tabular}{c}
    \textit{Linear} \\
    \includegraphics[trim={0 108 0 0},clip, width=0.95\textwidth]{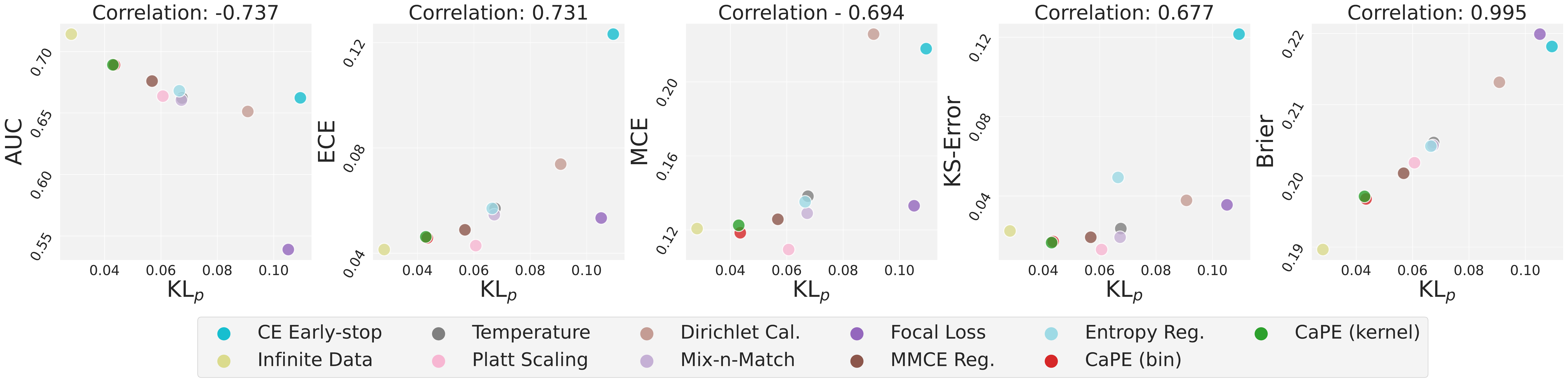} \\ 
    \textit{Sigmoid} \\
    \includegraphics[trim={0 108 0 0},clip, width=0.95\textwidth]{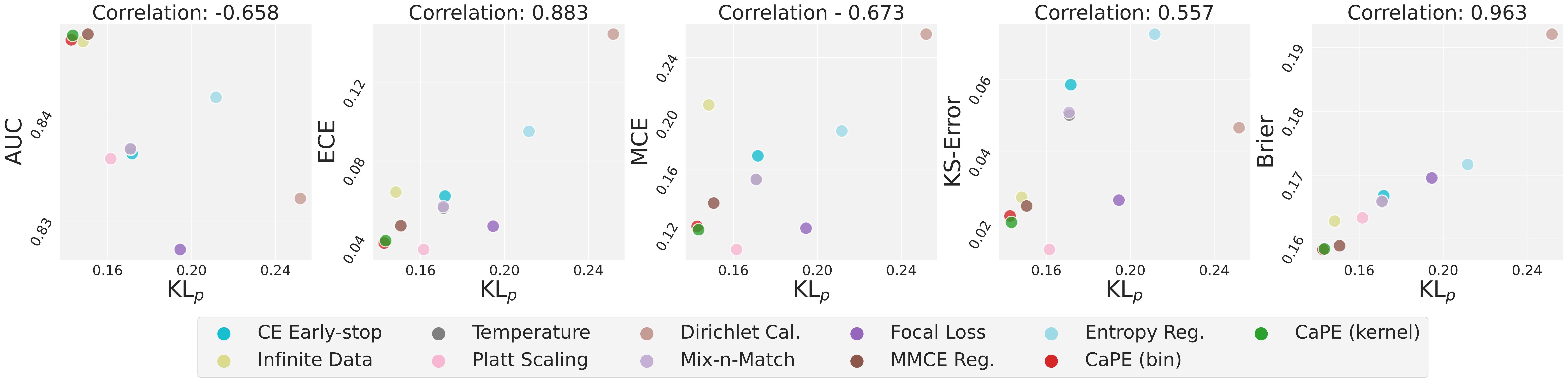} \\ 
    \textit{Centered} \\
    \includegraphics[trim={0 108 0 0},clip, width=0.95\textwidth]{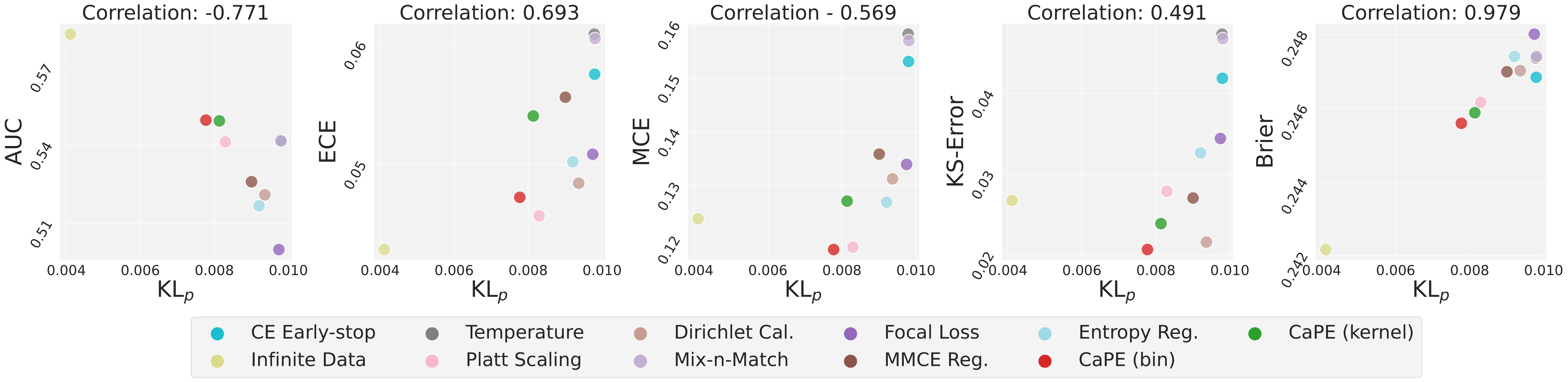} \\ 
    \textit{Skewed} \\
    \includegraphics[trim={0 108 0 0},clip, width=0.95\textwidth]{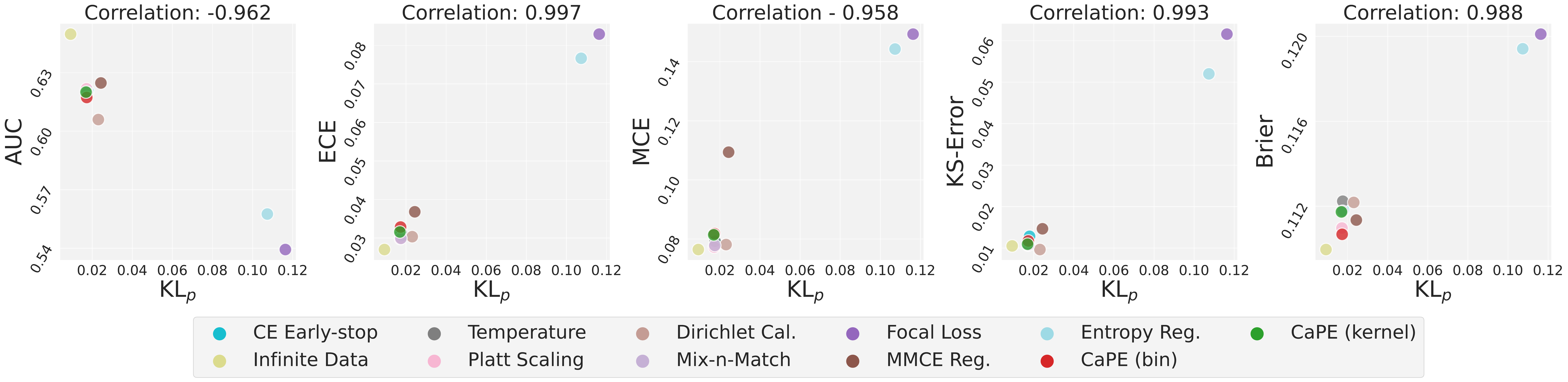} \\ 
    \textit{Discrete} \\
    \includegraphics[width=0.95\textwidth]{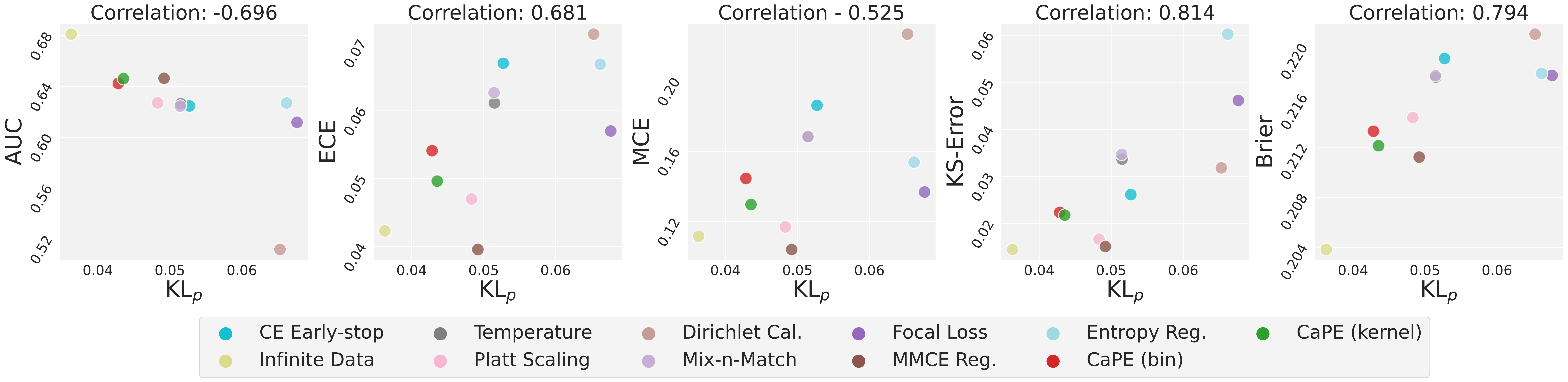} \\ 
    \end{tabular}
    
    \caption{Correlation between KL$_p$ and other metrics on synthetic data. Brier score achieves the most consistent correlation with $\text{KL}_p$.}
    \label{fig:metric_comparsion_appendix_1}
\end{figure}

% \newpage
\section{Estimation of Empirical Probability in CaPE}
\label{app:emp_estimation}
 We describe in further detail the two ways to estimate the conditional probability $\mathbbm{P}\left(y = 1 | f(\vx) \in I(q)\right)$, introduced in Section~\ref{sec:our_method}.
%From the definition, we have
% \begin{align}
%     \mathbbm{P}[y = 1 | f(\vx) \in I(q)] = \sum_{\hat{p} \in I(q)}\mathbbm{P}[y=1| f(\vx) = \hat{p})]\mathbbm{P}(\hat{p})
% \end{align}

We wish to estimate the conditional probability of an output $y$ given a network prediction $f(\vx)$,  $\mathbbm{P}\left(y = 1 | f(\vx) \in I(q)\right)$
%If we assume that $\hat{p}$ is uniformly distributed over $I(q)$, i.e. $\mathbbm{P}(\hat{p}) = \frac{1}{|I(q)|}$, we have 
We can approximate the probability by averaging over points $\hat p\in I(q)$,
\begin{align}
    \mathbbm{P}\left(y = 1 | f(\vx) \in I(q)\right) &\approx \frac{1}{|I(q)|}\sum_{\hat{p} \in I(q)}\mathbbm{P}\left(y=1| f(\vx) = {p}\right).
\end{align}
%We can also assume that $\hat{p} \sim \mathcal{N}(q,\sigma)$,
An empirical estimate of $\mathbbm{P}\left(y = 1 | f(\vx) \in I(q)\right)$ would be
\begin{align}
    \mathbb{P}\left(y=1| f(\vx) \in I(q)\right) \approx \frac{1}{|\text{Index}(I(q))|}\sum_{f(\vx_i) \in I(q)}y_i,
\end{align}
where $\text{Index}(I_q)=\{i|f(\vx_i)\in I(q)\}$. 

Alternatively, we can use kernel  estimation:
\begin{align}
    \mathbbm{P}\left(y = 1 | f(\vx) \in I(q)\right) &\approx \frac{1}{Z} \sum_{\hat{p} \in I(q)}\mathbbm{P}\left(y=1| f(\vx) = \hat{p}\right)\cdot \exp\left(-\frac{\left(p - q\right)^2}{\sigma^2} \right),
\end{align}
where $Z=\sum_{{p}\in I(q)}\exp\left(-\frac{\left({p} - q\right)^2}{\sigma^2} \right)$ is the normalization factor.
An empirical estimate of the conditional probability would then be
\begin{align}
    \mathbbm{P}\left(y = 1 | f(\vx) \in I(q)\right)
    & \approx \frac{1}{Z}\sum_{f(\vx_i) \in I(q)}y_i\exp\left(-\frac{\left(f(\vx_i) - q\right)^2}{\sigma^2} \right).
\end{align}
%Based on these two assumptions of $\mathbbm{P}(\hat{p})$, let $\hat{p}_i$ be the output probability from the neural network for example $i$, we consider the following two methods of estimating the conditional probability given the outputs of the network:
Based on these two approximation methods, we can design an algorithm to estimate $p_\text{emp}^i$.
%we consider the following two methods of estimating the conditional probability given the outputs of the network:

\textbf{Bin}  We divide our data into $B$ bins of equal size. $Q_1,\dots,Q_B$ are the data B-quantiles. We wish to estimate $\mathbb{P}\left(y=1 | f(\vx)\in [Q_{b-1},Q_b]\right),\hspace{0.2em}b=1,\dots,B,Q_0=0$. Denote $I_b:=[Q_{b-1},Q_b] \cap \{f(\vx_i)\}_{i=1}^N$, set of all predictions in $[Q_{b-1},Q_b]$, and $\text{Index}(I_b)=\{i|f(\vx_i)\in I_b\}$. 
We have,
$$\mathbb{P}\left(y=1 | f(\vx)\in [Q_{b-1},Q_b\right)]\approx p_\text{emp}^{(b)}=\frac{1}{|I_b|}\sum\limits_{i\in \text{Index}(I_b)}y_i$$
We assign $p_\text{emp}^{(b)}$ to all data points $i$ in the $b$-th quantile
$$p_\text{emp}^{i}=p_\text{emp}^{(b)}\hspace{1em}\forall i\in \text{Index}(I_b)$$

% Suppose the number of samples in each bin is $M$ and $B_j = \{Mj + 1, \cdots, M(j+1) \}$. 
%     \begin{equation}
%         {p}^i_{\text{emp}} = \frac{1}{M} \sum_{k \in \mathcal{B}(\hat{p}_i)} y_k,
%     \end{equation}
%     where $\mathcal{B}(p_i) = B_j$, for any $j$ such that $i \in B_j$.
    
\textbf{Kernel}   
In this case we use  kernel estimation:
\begin{equation}
        p_{\text{emp}}^i  = \frac{\sum_{k \in \text{NN}(i,r)} K\left(i,k\right) y_{k}}{\sum_{k\in \text{NN}(i,r)}K\left(i,k\right)}.
        \end{equation}
    NN$(i,r)$ defines $r$ data points whose predictions are nearest to $\hat{p_i}=f(\vx_i)$.  
    $K(i,j)$ is the Gaussian kernel$$
    K(i,j)=\exp\left(- \frac{\left(\hat{p}_{i} - \hat{p}_{j}\right)^2}{\sigma^2}\right),$$ with hyperparameter $\sigma$.

\section{Calibration Baselines}
This section provides a review of the baseline methods, discussed in Section~\ref{sec:experiments}.
\label{app:baselines}
\paragraph{Post-processing}
Postprocessing consists of finding a function $f :[0,1] \rightarrow [0, 1]$, that transforms the model outputs $\hat{p}_i\rightarrow f(\hat{p}_i)$ to improve their calibration. 
%evaluated on the validation set that maps outputs of neural networks to probability space such that $f_j(\hat{p}_i) = P(y_i = j \mid \hat{p}_i)$ where $\hat{p}_i$ is the output of the network. 
\begin{itemize}
    \item In Platt scaling \citep{Platt1999} the model predictions are used as inputs to a logistic regression model optimized using a validation set,
    \begin{equation}
    f_1(\hat{p}_i) = \sigmoid \left( W^T \hat{p}_i + b \right)
    \end{equation}
    where $W \in \mathbb{R}^2, b \in \mathbb{R}$ and $\sigma$ is the sigmoid function. 
    % \item Isotonic regression fits a non-decreasing function that minimizes the MSE between probabilities and ground truth labels. 
    % \begin{equation}
    % f_1 = \argmin_g \sum_i(y_i - g(\hat{y}_i))^2, \text{subject to } g(\hat{y}_i) < g(\hat{y}_j) \text{ if } \hat{y}_i < \hat{y}_j
    % \end{equation}
    \item Temperature scaling \citep{guo2017calibration} is a single-parameter variant of Platt Scaling where we change a temperature parameter in the logistic function.
    % ,
    % \begin{equation} 
    %     f(\hat{p}_i) = \text{Softmax} \left(\hat{p}_i / T \right)
    % \end{equation}
    % where $T \in \mathbb{R}$. % minimizes the negative log-likelihood of validation set.
    % \item Histogram binning \citep{histogrambinnning} discretizes the continuous probability space by binning samples and taking the mean of probability estimations from each bin. 
    % % \url{https://citeseerx.ist.psu.edu/viewdoc/download?doi=10.1.1.13.7457&rep=rep1&type=pdf}
    % \item Scale binning calibration is an extension of histogram binning. It applies histogram binning based on the scores provided by some other calibrators instead of hard labels.
    % % \url{https://arxiv.org/abs/1909.10155}
    \item Beta/Dirichlet calibration (Dir-ODIR) \citep{betacalibration, dirchlet} assumes that the probabilities can be parametrized by a Beta/Dirichlet distribution i.e. 
    \begin{equation}
        f_j \sim \text{Beta}(\alpha^{(j)}, \beta^{(j)})
    \end{equation}
    Assuming the prior to be $p(y = j) = \pi_j, \pi_j \in [0,1]$, we have $P(y | f_j) \propto \pi_j f_j$, and then $\alpha^{(j)}, \beta^{(j)}$ are estimated by maximizing the posterior.
    % \item Spline \citep{spline} interpolates on the accumulated probability estimation and accuracy of validation set. 
    % % \url{https://github.com/kartikgupta-at-anu/spline-calibration/blob/master/recalibrate.py}
    % \item Mix-n-Match \citep{Zhang2020MixnMatchEA} improves calibration by ensembling parametric and non-parametric calibrators. Denote the temperature scaling function with $g(\hat{y}_i, T)$. Then Mix-n-Match ensembles different temperatures
    % \begin{equation}
    %     f_j(\hat{y}_i) = w_1 g_j(\hat{y}_i, T) + w_2 g_j(\hat{y}_i, 0) + w_3 g_j(\hat{y}_i, \infty)
    % \end{equation}
    % After ensembling the parametric temperature scaling, Mix-n-Match applies non-parametric method Isotonic regression.
    % \item Deep ensemble \citep{deepensemble} Suppose we train $M$ neural networks with different initialization, the probability estimation is the average of single model estimations.
    % \begin{equation}
    %     p(y \mid x) = \frac{1}{M} \sum_j^M p_{\theta_j}(y \mid x)
    % \end{equation}
\end{itemize}

\paragraph{Ensembling} These calibration methods simultaneously train several neural networks, varying parameters in the training process. The final output is a function of all the different outputs.

\begin{itemize}
    \item Mix-n-Match \citep{Zhang2020MixnMatchEA} improves calibration by ensembling parametric and non-parametric calibrators. Denote the temperature scaling function with $g(\hat{y}_i, T)$. Then Mix-n-Match ensembles different temperatures
    \begin{equation}
        f_j(\hat{p}_i) = w_1 g_j(\hat{p}_i, T) + w_2 g_j(\hat{p}_i, 0) + w_3 g_j(\hat{p}_i, \infty).
    \end{equation}
    After ensembling the parametric temperature scaling, Mix-n-Match applies non-parametric isotonic regression.
    \item Deep ensemble \citep{deepensemble} trains $M$ copies of the neural network with different initializations. The final estimate is the average of all single model outputs
    \begin{equation}
        p(y_i \mid x_i) = \frac{1}{M} \sum_j^M p_{\theta_j}(y_i \mid x_i).
    \end{equation}
\end{itemize}

\paragraph{Modified training} These calibration methods train the neural networks from end to end, modifying the training process to improve calibration.
\begin{itemize}
    \item Confidence penalty \citep{entropy} penalizes low entropy output distributions (confidence penalty), %Label smoothing improve state-of-the-art models across benchmarks.
    \begin{equation}
        \mathcal{L}(\theta) = -\sum_{i} \log p_\theta(y_i|x_i) - \beta H(p_\theta(y_i|x_i))
    \end{equation}
    
    \item Focal loss \citep{Mukhoti2020CalibratingDN} maximizes entropy while minimizing the KL divergence between the predicted and the target distributions. It also regularizes the weights of the model to avoid overfitting,
    \begin{equation}
        \mathcal{L}(\theta) = -\sum_i(1-p_\theta(y_i|x_i))^\beta \log p_\theta(y_i|x_i), \quad \beta \in \mathbb{R}.
    \end{equation}
    \item Kernel MMCE \citep{MMCE} is a reproducing kernel Hilbert space (RKHS) kernel based measure of calibration that is efficiently trainable, alongside the negative likelihood loss. Given data samples $\mathcal{D} = \left\{(c_i, r_i)\right\}_{i=0}^m$, where $c_i = \chi_{\{\hat{y}_i = y_i\}}$ and $r_i = \mathbb{P}(c_i = 1| \hat{y}_i)$, MMCE is computed on samples $\mathcal{D}$ as follows,
    \begin{equation}
        \text{MMCE}^2(\mathcal{D}) = \sum_{i,j} \frac{(c_i-r_i)(c_j-r_j)k(r_i, r_j)}{m^2}
    \end{equation}
    where $k(r_i, r_j)$ is a kernel function. MMCE is optimized together with the cross entropy loss as a regularization term weighted by a scaling parameter $\lambda \in \mathbb{R}$
    \begin{equation}
        \mathcal{L}(\theta) = -\sum_i \log p_\theta(y_i|x_i) + \beta \left(\text{MMCE}^2(D)\right)^{\frac{1}{2}}.
    \end{equation}
    %Intuitively, by minimizing MMCE the distances among the confidence level of different samples are enlarged. The samples with higher confidence are regularized more. 
    
    % \item Mixup training \citep{thulasidasan2020mixup} is a data augmentation method that generate additional data with the combination of pairs of samples $\tilde{\mathcal{D}} = \left\{(\tilde{x}_i, \tilde{y}_i)\right\}_{i=0}^m$ where 
    % \begin{align}
    %     \tilde{x}_i & = \lambda x_j + (1-\lambda) x_k \\
    %     \tilde{y}_i & = \lambda y_j + (1-\lambda) y_k, (\lambda \in \mathbb{R}, j \neq k)
    % \end{align}
    % Given the augmented samples, we train the model with cross entropy. The label smoothing of Mixup training can improve the calibration of the network
% \end{itemize}

% \paragraph{Model ensemble}
% \begin{itemize}

    % \item SWAG \citep{swag} \url{https://github.com/wjmaddox/swa_gaussian}
\end{itemize}

\paragraph{Hyperparameter tuning for baseline methods}
Most postprocessing methods do not involve hyperparameters, and are optimized based on a validation set. The modified training methods all use a single hyperparameter to control the strength of regularization. We tune the hyperparameter, using a validation set. Figure~\ref{fig:hyperparameter} shows that MMCE is robust to the choice of hyperparameter. Focal loss and entropy regularization result in inferior performance to that of MMCE for all hyperparameter choices.

\begin{figure}[ht]
    \centering
    \includegraphics[width=0.8\textwidth]{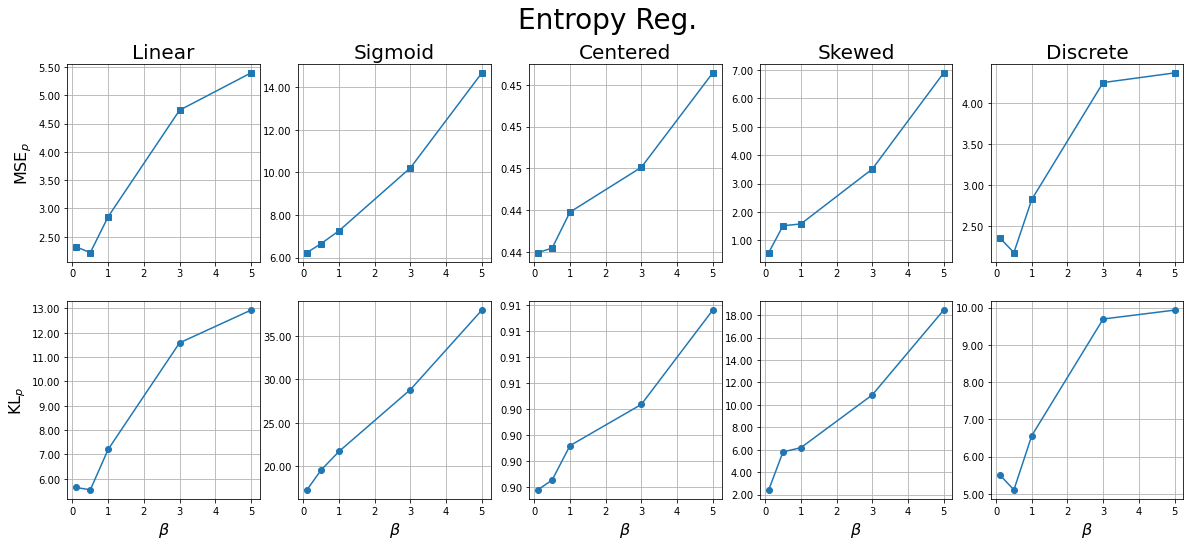}
    \includegraphics[width=0.8\textwidth]{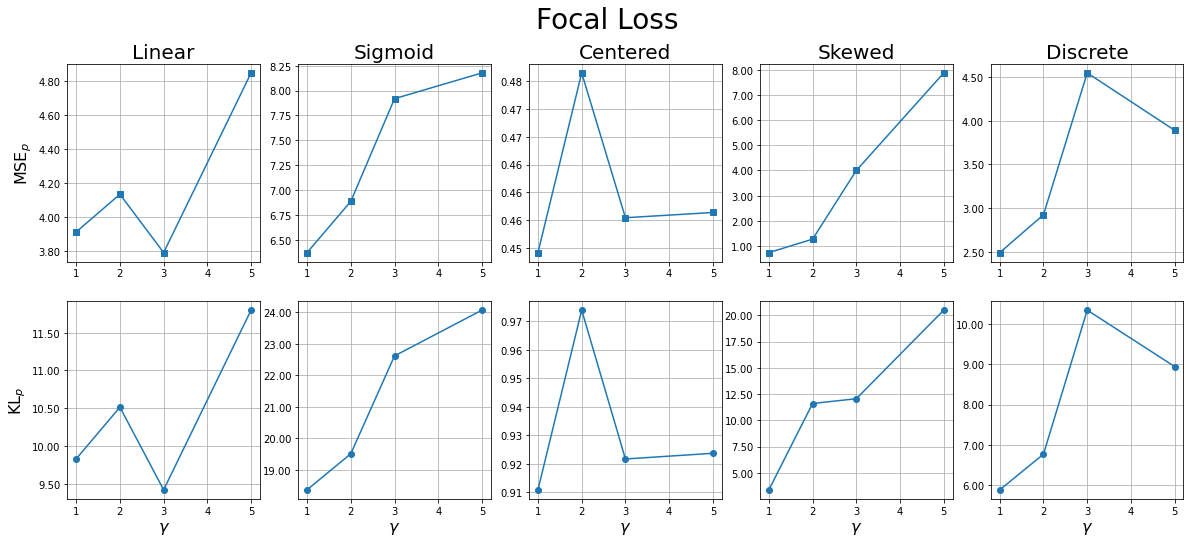}
    \includegraphics[width=0.8\textwidth]{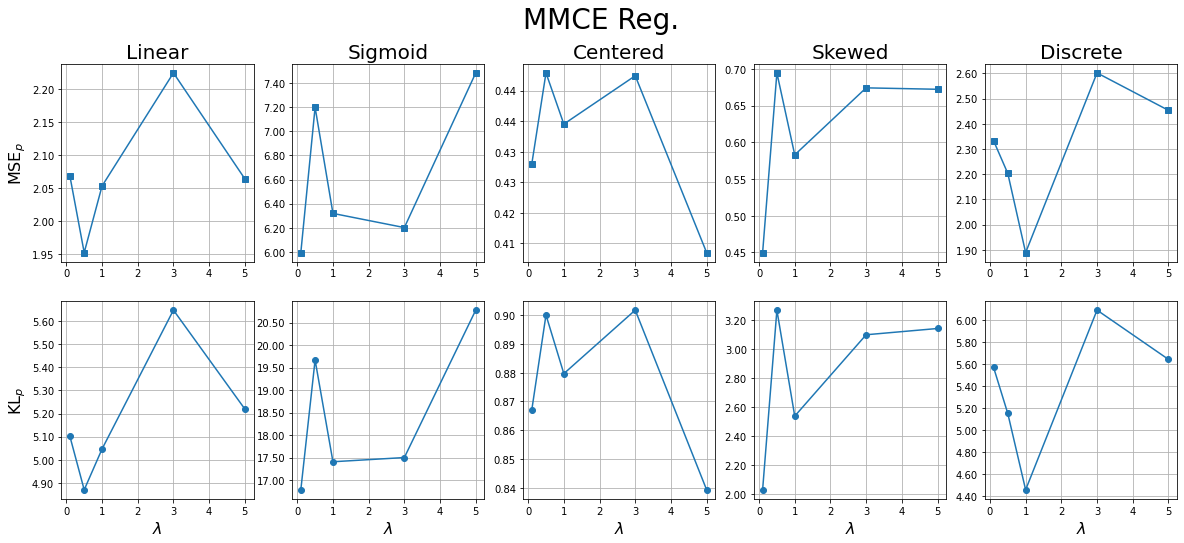}
    \caption{The hyperparameter tuning of baseline methods.}
    \label{fig:hyperparameter}
\end{figure}

\section{Synthetic data experiments}
\label{app:syntheic}
\textcolor{black}{We use a ResNet-18 backbone architecture for all our experiments with synthetic data.}

%We describe the generation of synthetic probability labels under different scenarios, discussed in Section~\ref{sec:experiments}.

The synthetic data is split into training, validation, and test sets with 16641, 4738, and 2329 samples, respectively. The training and validation sets contain only images $x_i$ and 0-1 labels $y_i$ for training and tuning the model. In order to evaluate the performance of the model for probability estimation, the held-out test set contains the ground truth probabilities $p_i$, in addition to $x_i$ and $y_i$. Note that we  do not use the ground-truth probability labels $p_i$ values during training or inference - we only use them to compare the performance of different models. 

\paragraph{Ground Truth Probability Generation}
The ground truth probability associated with example $i$ is simulated by $p_i = \psi(z_i)$ where $z_i$ is age of the person. 
\label{app:prob_generation}
\begin{figure}[h]
    \centering
    \includegraphics[width=0.3\textwidth]{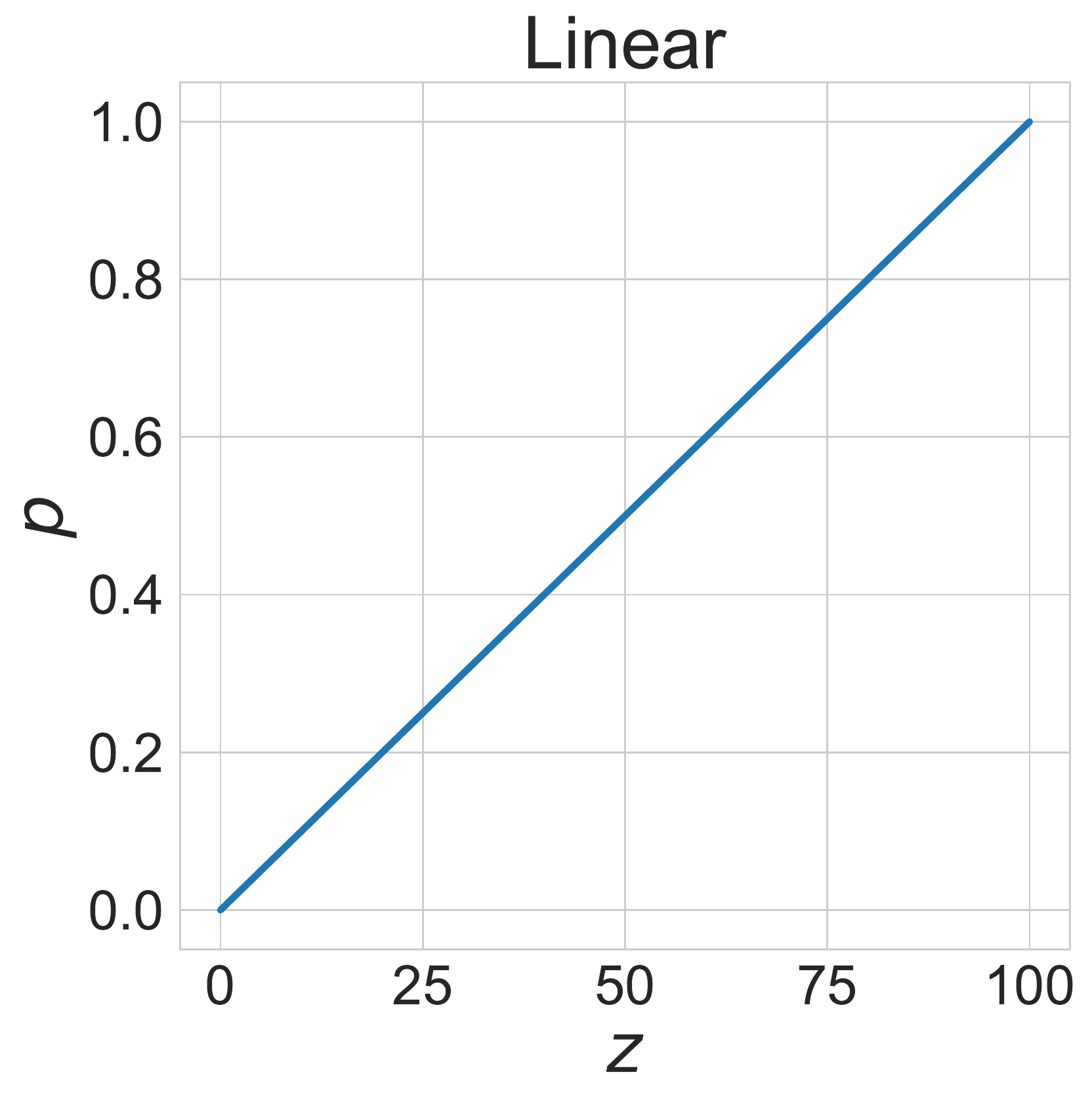}
    \includegraphics[width=0.3\textwidth]{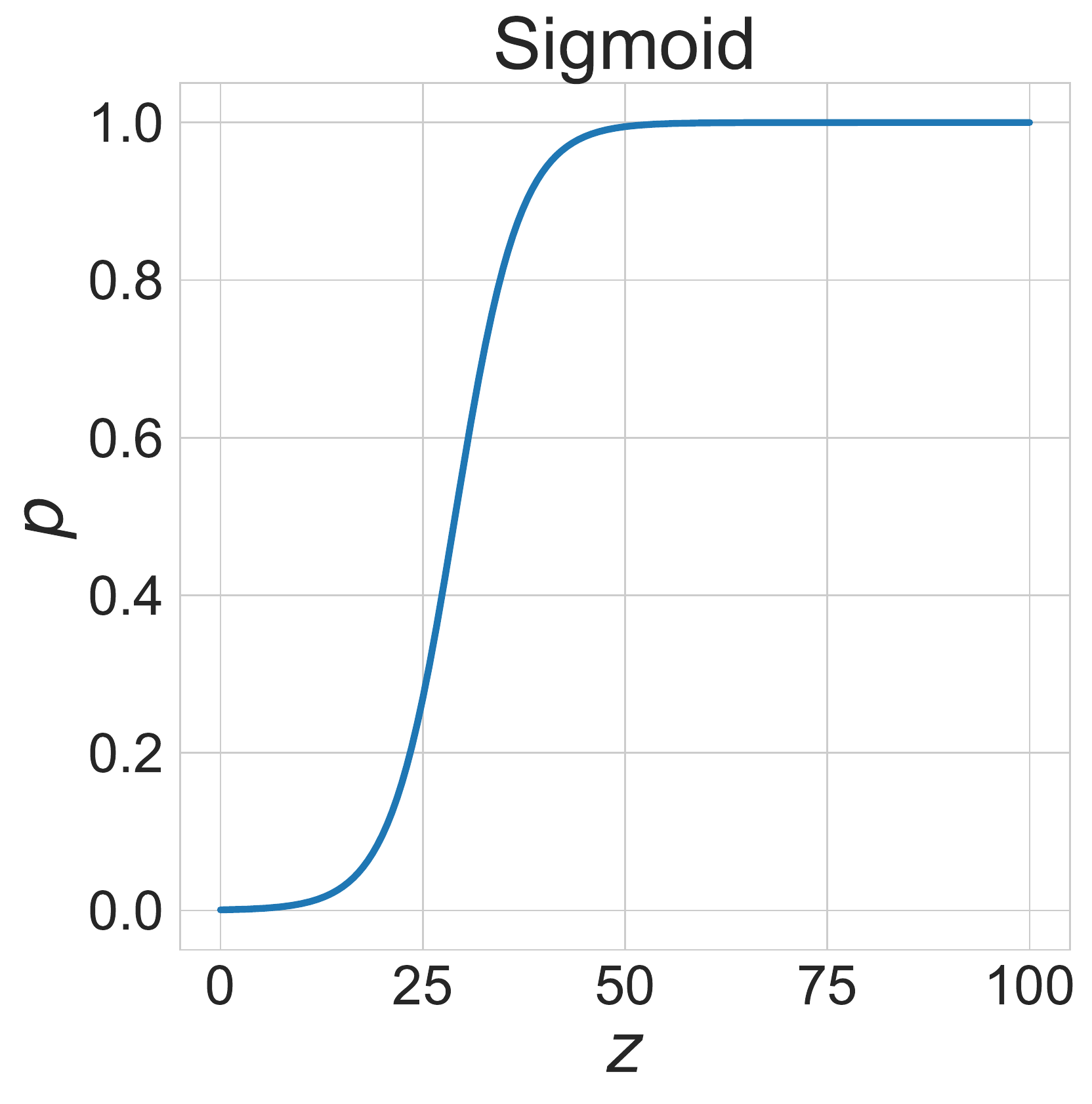}
    \includegraphics[width=0.3\textwidth]{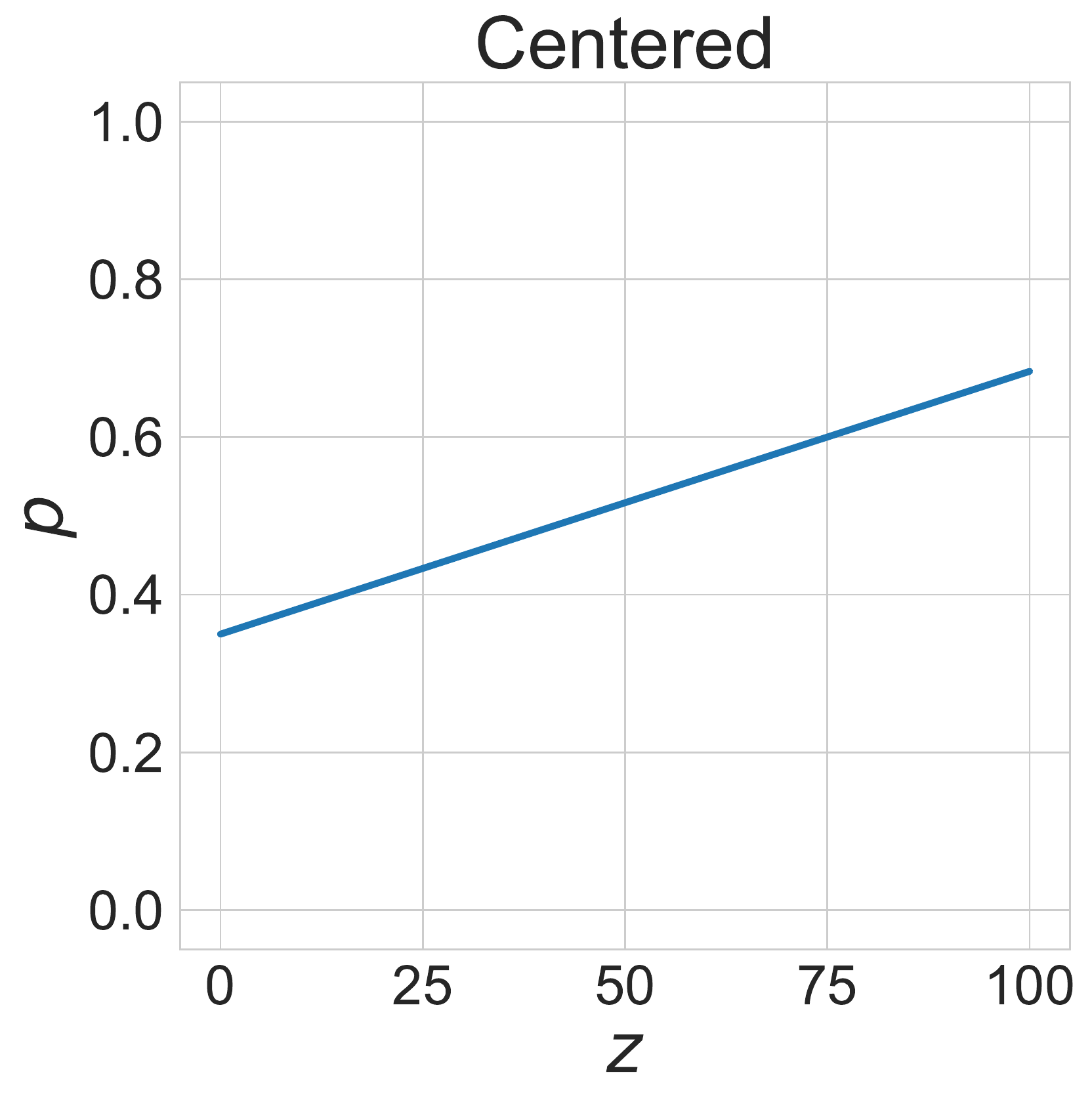} \\
    \includegraphics[width=0.3\textwidth]{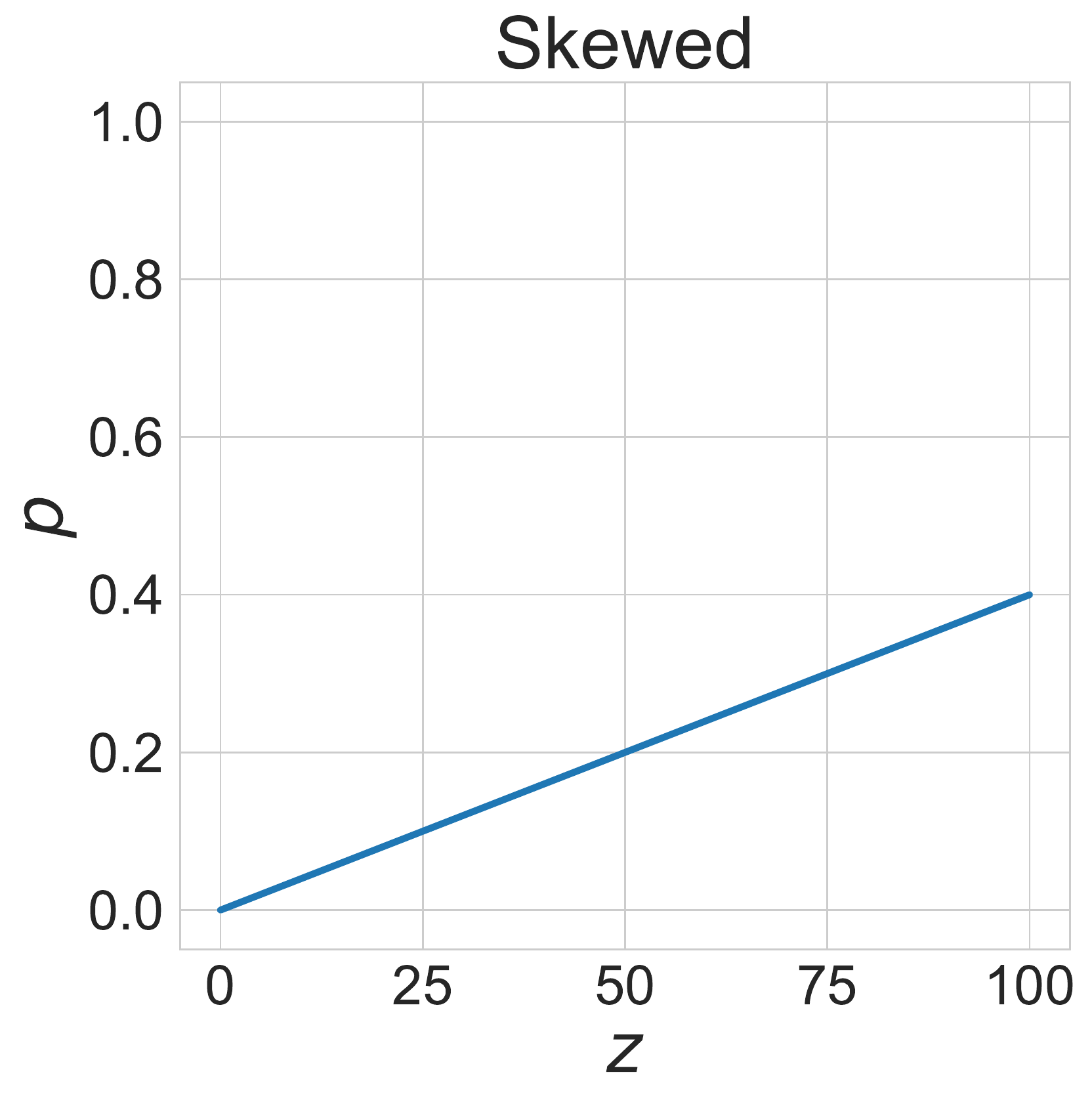}
    \includegraphics[width=0.3\textwidth]{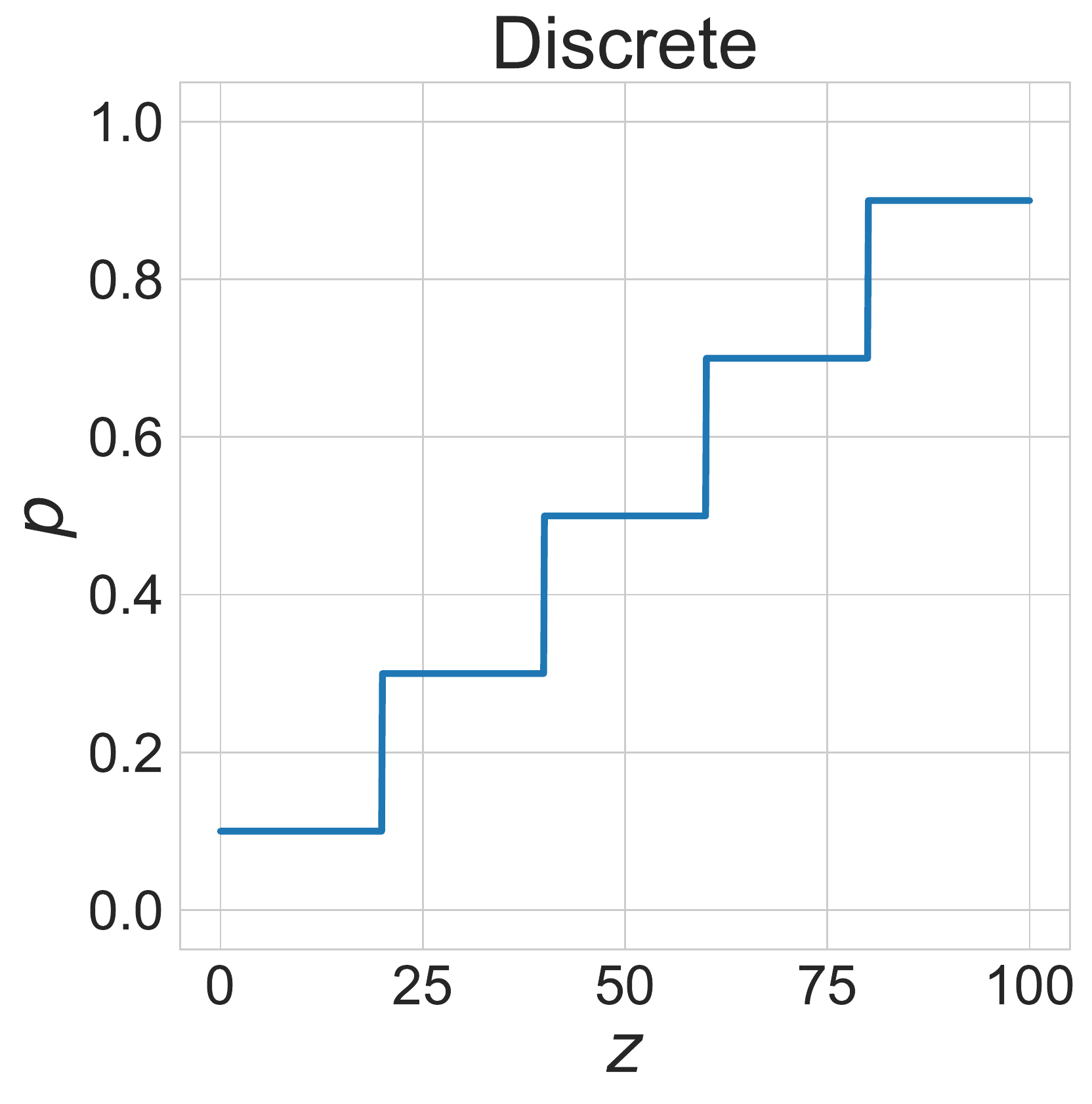}
    \caption{Illustration of the function $\psi(z)$ used to generate the different synthetic-data scenarios.}
    \label{fig:psi}
\end{figure}

\paragraph{Label distribution} After determining the probability $p_i$ using $\psi(z)$, the label $y_i$ is sampled from a Bernoulli distribution parametrized by $p_i$, so that it takes the value 1 with probability $p_i$.
The distributions of $y_i$ under five different scenarios are illustrated in Fig.\ref{fig:psi}.
\begin{figure}[h]
    \centering
     \includegraphics[width=0.32\textwidth]{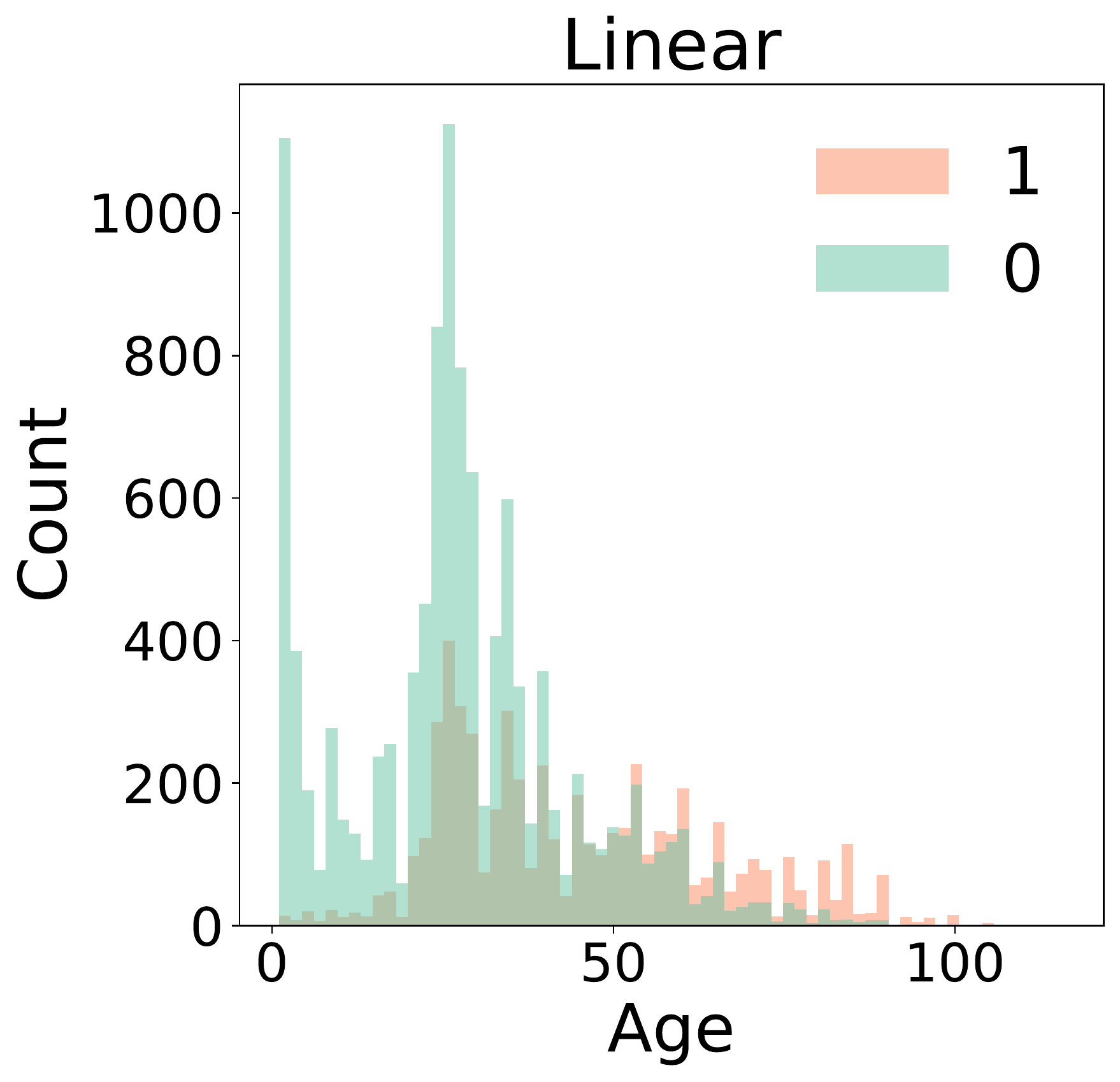}
    \includegraphics[width=0.32\textwidth]{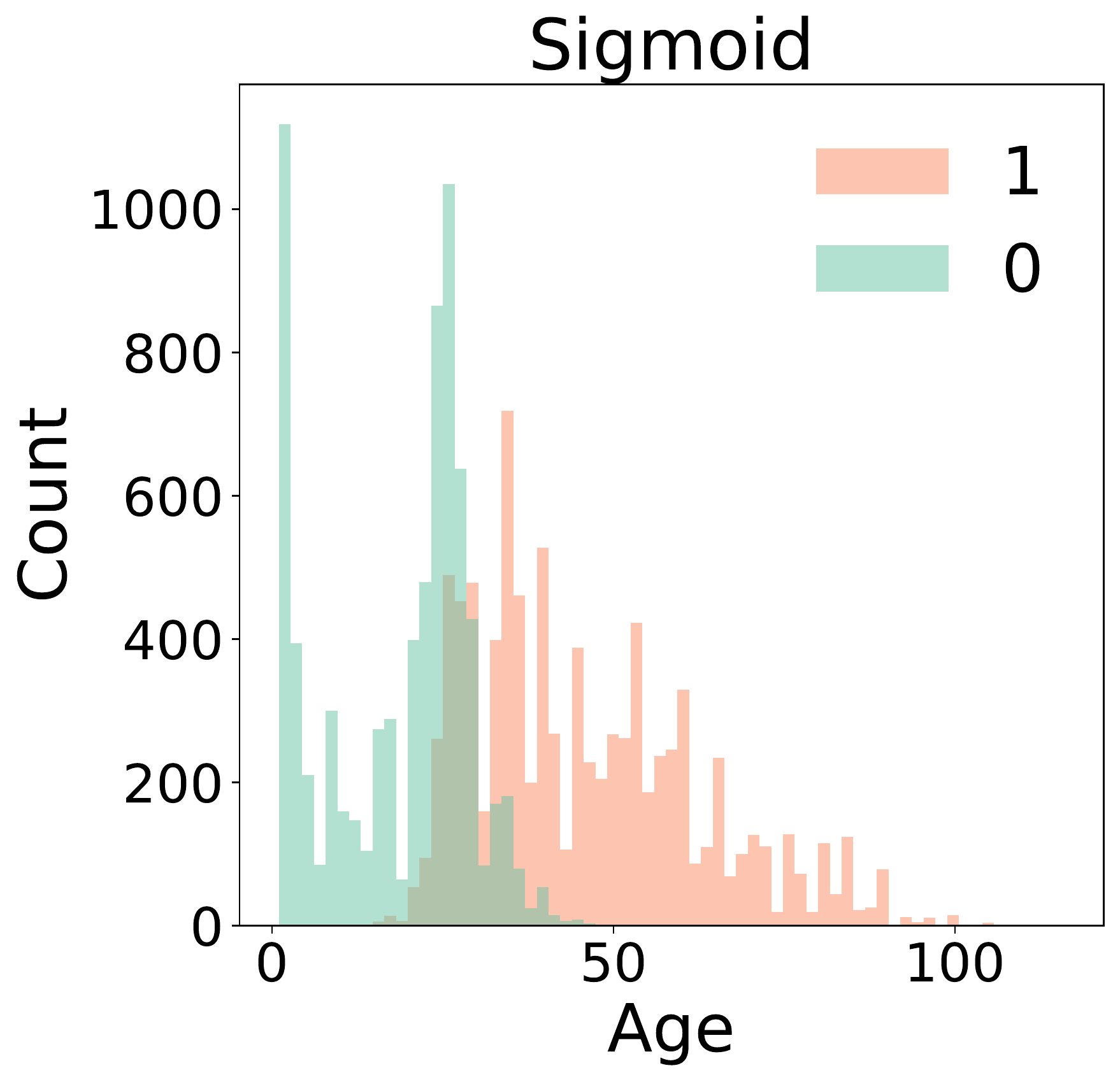}
    \includegraphics[width=0.32\textwidth]{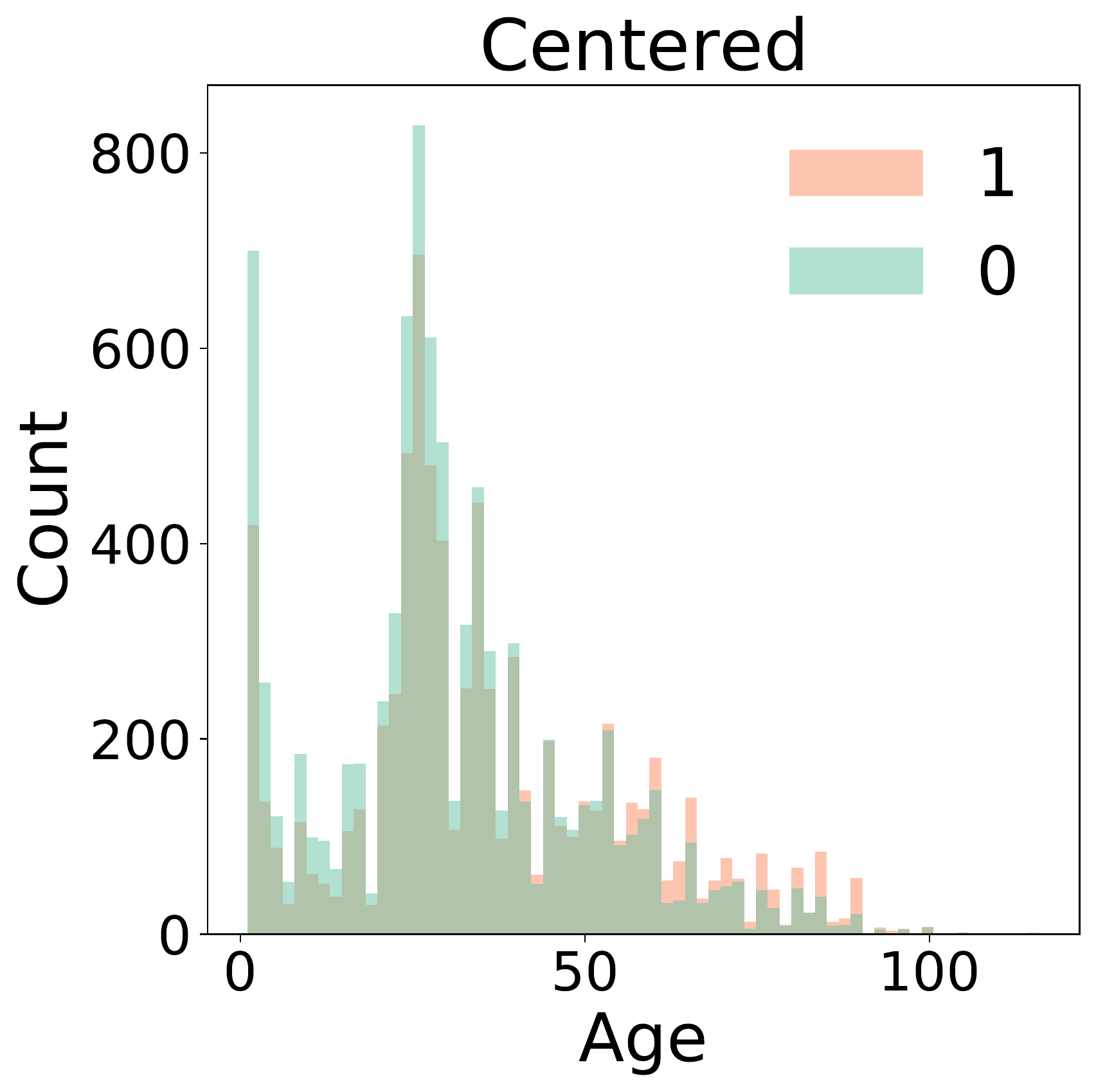} \\
    \includegraphics[width=0.32\textwidth]{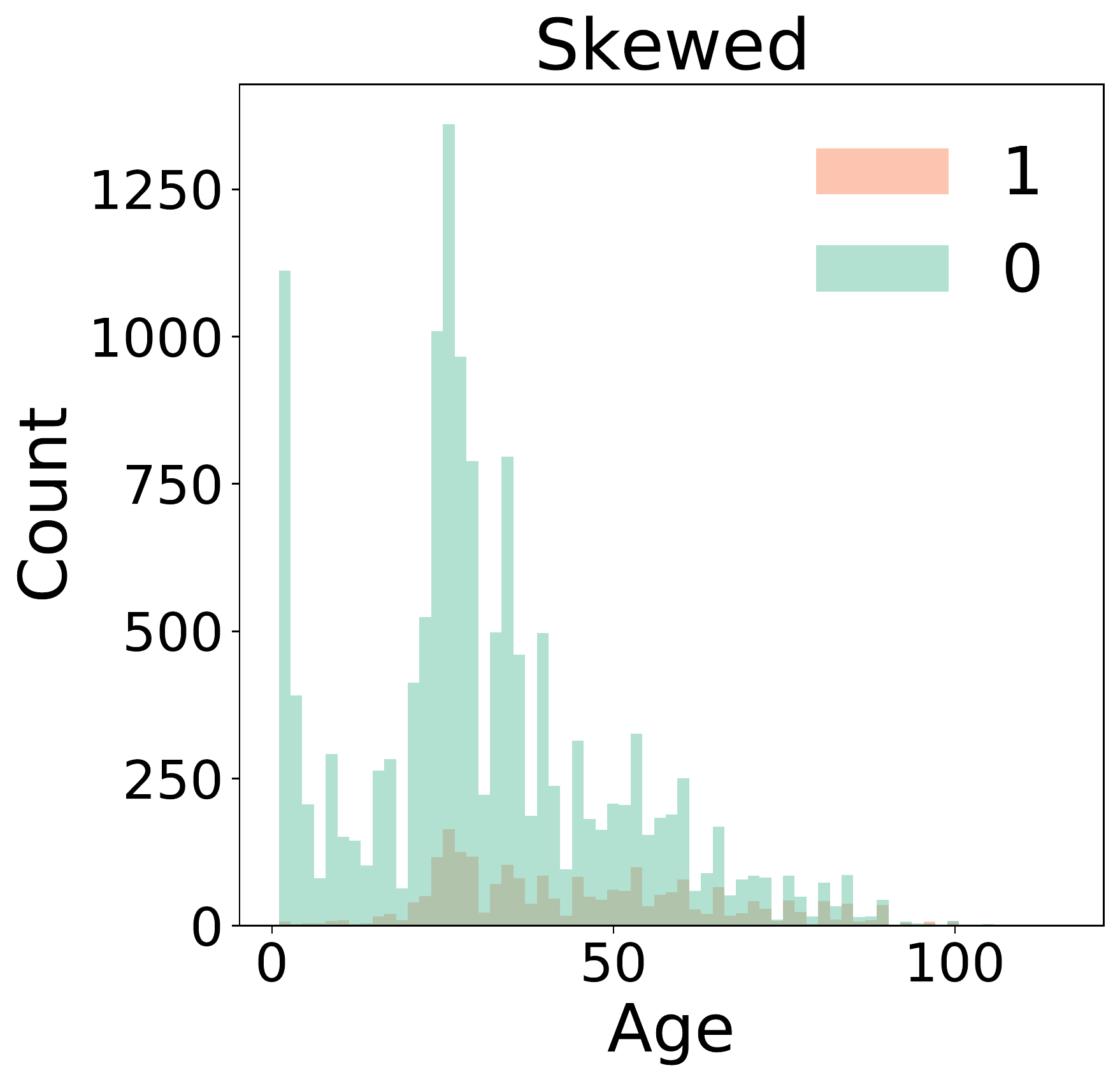}
    \includegraphics[width=0.32\textwidth]{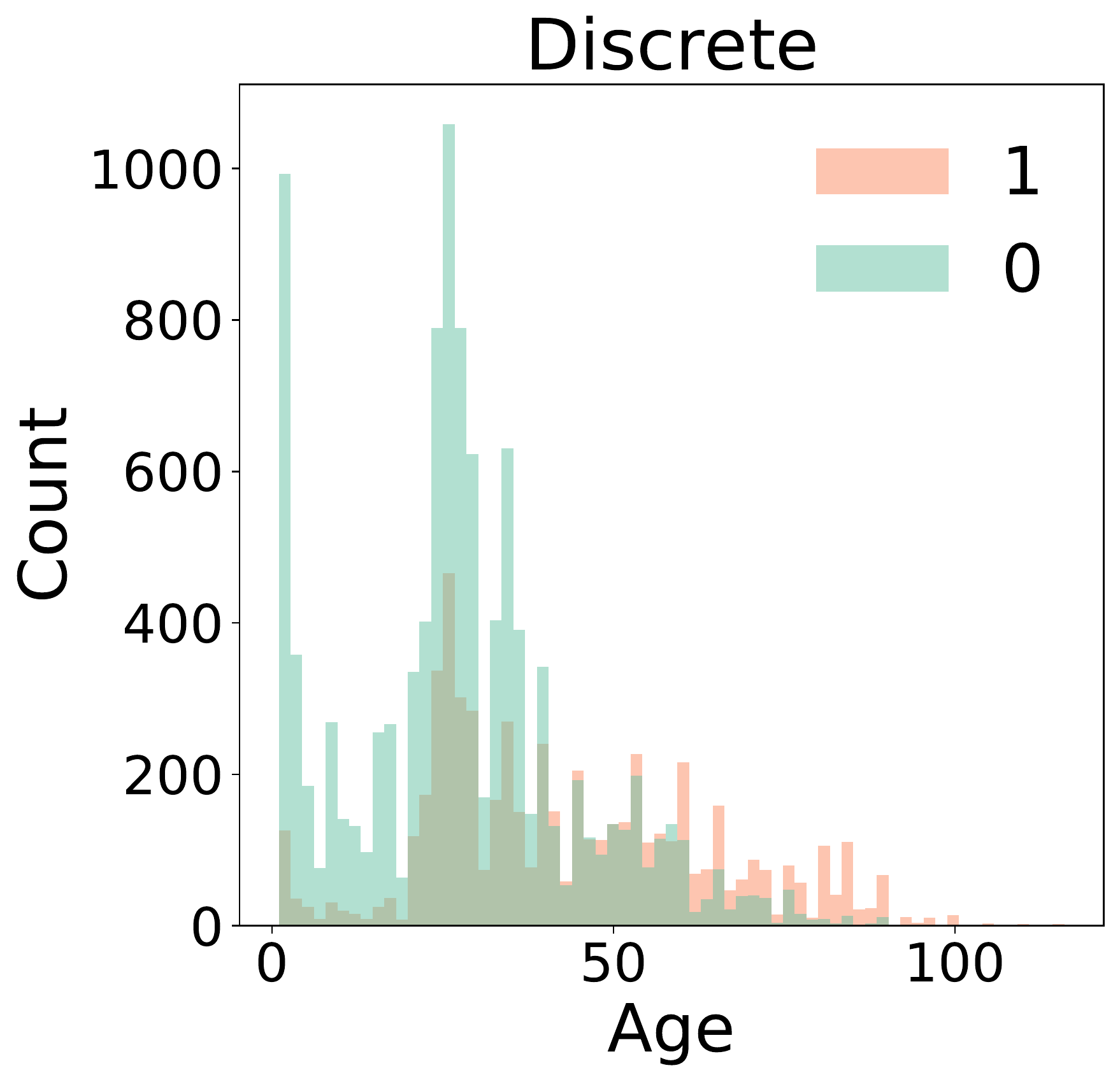}
    \caption{Histograms of the outcomes ($y_i$) for the different synthetic-data scenarios.}
    \label{fig:psi}
\end{figure}

\section{Additional Synthetic-Data Experiments on CIFAR-10} \label{app:cifar10}

%Although standard datasets like CIFAR-10 do not have aleatoric uncertainty, 
We simulated probabilistic labels for CIFAR-10 to perform additional experiments. 
Each of the ten classes of CIFAR-10 was assigned a different ground-truth probability. To this end we assigned an integer $c$ between 0 and 9 to each class and set the corresponding probability equal to $c / 10$. The training and validation sets were built by assigning each image $x_i$ to a binary label $y_i$ sampled from the corresponding class probability $p_i$. Note that we do not use the ground-truth probability labels during training or inference - we only use them to evaluate the performance on the test set.

% \textbf{Ground Truth Probability Generation}
% The ground truth probability associated with example $i$ is simulated by $p_i = c_i / 10$ where $c_i$ is the class number of $i$th image ($c_i \in \{0,1,\dots,9\}$).

Table~\ref{tab:cifar10results} shows the results on the test set. We again observe that CaPE outperforms the cross-entropy baseline based on early stopping.

\begin{table}[t]
\begin{center}
{
\begin{tabular}{l|l|l|l}
\toprule
Name    & CE early stop & CaPE (bin) & CaPE (kd) \\ \midrule
$\text{MSE}_p$ & 0.0297        & 0.0252     & \textbf{0.0247}    \\ 
$\text{KL}_p$  & 0.4051        & \textbf{0.2468}  & 0.2598    \\ \bottomrule
\end{tabular}
}
\caption{Results of on CIFAR 10 with simulated probabilistic labels. CaPE outperforms a CE early stopped model.}
\label{tab:cifar10results}
\end{center}
\end{table}

% \newpage
\section{Additional Details on Real-World Data and Experiments}
\label{app:real-world}
We present here supplementary information for the real-world datasets used in our experiments. 
\paragraph{Cancer Survival}

Histopathological features are useful in identification of tumor cells, cancer subtypes, and the stage and level of differentiation of the cancer. Hematoxylin and Eosin (H\&E)-stained slides are the most common type of histopathology data and the basis for decision making in the clinics. With these properties, H\&E are used for mortality prediction of cancer \citep{Wulczyn2020DeepLS}. In this experiment, we use the H\&E slides of non-small cell lung cancers from The Cancer Genome Atlas Program (TCGA)\footnote{\url{https://www.cancer.gov/tcga}} to predict the 5-year survival. The dataset has 1512 whole slide images from 1009 patients, and 352 of them died in 5-years. We split the samples by patients and source institutions into training, validation, and test set, which has 1203, 151, and 158 samples respectively.

The whole slide images contain numerous pixels, so we cropped the slides into tiles at 10x magnification with 1/4 overlapping, resized them to $299\times 299$ with bicubic interpolation, and filtered out the tiles with more than 85\% area covered by the background. \textcolor{black}{The representations of each tile are trained with self-supervised momentum contrastive learning (MoCo) \citep{moco}, and the slide-level prediction is obtained from a multiple-instance learning network \citep{mil_attention} trained with the binary label of survival in 5 years.}

\paragraph{Weather Forecasting}
%Climate forecasting models predict thermodynamical properties (temperature, pressure, etc.) and weather conditions (precipitation, wind velocities) based on data measurements over dense spatio-temporal grids. The dynamics are driven by nonlinear models with inherent uncertainties, hence predictions are not deterministic \cite{richardson2007weather}. Recently, Machine learning based algorithms have seen an increasing use in climate modelling. For example, deep convolutional neural networks are used to predict forcing terms in downsampled ocean velocity flow solvers \cite{Guillaumin2021}. 

%Nowcasting is the problem of weather prediciton in the near future, up to a few hours. It is of great operational significance, especially with increasing number of extreme inclement weather conditions. Google's nowcasting model, based on a UNet architecture to predict precipitation maps based on recent measurements,  proves favorable when compared to commonly used nowcasting models \citep{agrawal2019machine}. Deepmind's nowcasting model, based on generative deep neural networks, exhibits superior performance in accuracy, reliability and lead times \citep{ravuri2021skillful}.

%Within the framework of this paper, we treat a nowcasting problem which estimates the following: \emph{Based on recent measurements in a certain area, what is the probability the mean precipitation after one hour would increase?}

We use the German Weather service dataset\footnote{\url{https://opendata.dwd.de/weather/radar/}}, which contains quality-controlled rainfall-depth composites from 17 operational Doppler radars. Three precipitation maps from the past 30 minutes serve as an input. The training labels are the 0/1 events indicating whether the mean precipitation increases (1) or not (0). 

%\begin{figure}
%    \centering
%    \includegraphics[width=\textwidth]{}
%    \caption{Example of weather input dataset. Three radar precipitation maps, sampled in 15 minutes intervals.}
%    \label{fig:weather_input}
%\end{figure}
%(To the appendix) 

The German Weather service (DWD - Deutshce Wetter Dienst) dataset \url{https://opendata.dwd.de/weather/radar/} contains quality-controlled rainfall-depth composites from 17 operational DWD Doppler radars. It has a spatial extent of 900x900 km, and covers the entirety of Germany. Data exists since 2006, with a spatial and temporal resolution of 1x1 km and 5 minutes, respectively. The dataset has been used to train RainNet, a pricipitation nowcasting model \citep{Ayzel2020RainNet}. 

\textcolor{black}{The network architecture is ResNet18, with 3 input channels and 2 output channels.}
The input to the network are 3 precipitation maps which cover a fixed area of 300km$\times$300 km in the center of the grid (300$\times$ 300 pixels), set 10 minutes apart. The training, validation and test datasets consist of 20000, 6000 and 3000 samples, respectively, all separated temporally, over the span of 15 years.

%(to the results section)
%Looking at the calibration curve, we see that the predicted probabilities span the $[0,1]$ interval, similar to the uniform case for the synthetic data, where our methods outperform the baselines. Both the ECE and Brier$_y$ are best for our methods, and the accuracies are close to the best (3rd,4th), suggesting that our method manages to predict meaningful probability labels.

\paragraph{Collision Prediction} Vehicle collision is one of the leading causes of death in the world. Reliable collision prediction systems which can warn drivers about potential collisions can save a significant number of lives. A standard way to design such a system is to train a convolutional model for identifying if a particular vehicle in the dash-cam video feed might collide with the car in next few seconds. More formally, at time $t=T$ the system tries to predict if any car in the video might collide with our given car in time $t \in [T, T+T_{\text{look-ahead}}]$. Each labelled training sample consists of features $X = (X^{T-\delta}, X^{T-2\delta}, \dots, X^{T-d\delta})$ and a binary label $Y\in \{0, 1\}$ denoting if an accident will occur in $t \in [T, T+T_{\text{look-ahead}}]$. Each $X^t$ is a tensor with 4 channels where the first 3 channels corresponds to an RGB image of the dashcam view at time $t=t$, and the fourth channel consists of a mask with a bounding box on a particular vehicle of interest. In this work, we use \textit{YouTubeCrash} dataset~\citep{kim2019crash} to train and test our model, which uses $\delta = 0.1s$,$T_{\text{look-head}}=18\delta=1.8s$, and $d=3$. \textcolor{black}{Following ~\cite{kim2019crash} we used a VGG-16 network architecture.}

The dataset contains $122$ accident scenes, and $2096$ non-accident scenes, which after feature extraction gives us $2096$ positive samples, and $11486$ negative samples (the dataset is severely imbalanced, and similar to the Skewed situation in Section~\ref{sec:synthetic_data}). We further split the dataset into train (6453 samples for label 0, and 1023 samples for label 1), validation (2348 samples for label 0, and 545 samples for label 1), and test (2685 samples for label 0, and 528 samples for label 1) sets. The samples in train, validation and test sets are generated from disjoint scenes/dashcam videos. 

% \newpage
\section{Analysis of Cancer Survival Results}
\label{app:case_study}

\begin{figure}[h]
    \centering
    \includegraphics[width=0.4\textwidth]{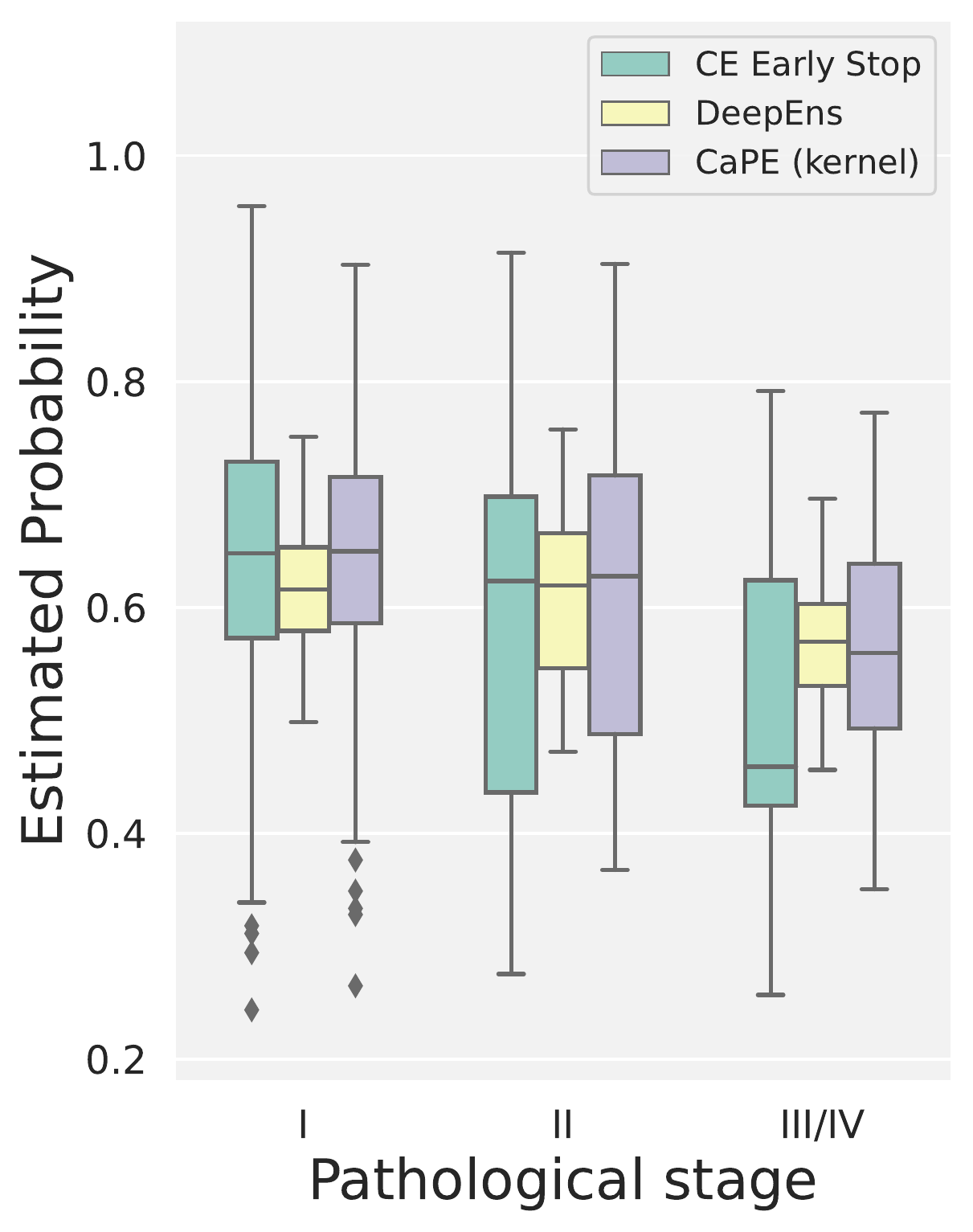}
    \caption{Estimated probability of survival grouped by pathological stages. \textcolor{black}{The plot shows median, samples between 25th to 75th percentile in the box, samples between 0th and 100th percentile on the line, and the outliers as dots.} Deep ensemble produces similar probability estimates for patients across all the stages; CE is more discriminative but has a very large variance; CaPE achieves a trade-off between the two baselines.}
    \label{fig:survival_boxplot}
\end{figure}
%The case study in Section~\ref{sec:results}. 
For cancer survival prediction, we visualize the
estimated probabilities on the test set in different pathological stages in Figure~\ref{fig:survival_boxplot}. In general, patients in earlier stages should have higher probabilities of survival. Deep ensemble produces similar probability estimates for all stages (i.e the model is less discriminative). Cross-entropy minimization (CE) is more discriminative, but has very wide confidence intervals. CaPE is more discriminative than deep ensemble, while having narrower confidence intervals than CE. % It can be attributed to the combination of discriminative loss and calibration loss during training.

\section{Calibrating from the Beginning}
\label{sec:beginning}
\textcolor{black}{CaPE exploits a calibration-based cost function to improve its probability estimates without overfitting. The empirical probabilities in this loss are computed from the model itself. Consequently, applying this strategy from the beginning of training can be counterproductive, because the model predictions are essentially random. This is demonstrated in the following table, which compares CaPE with a model trained using the calibration loss from the beginning (in the same way as CaPE, alternating with cross-entropy minimization).}
\begin{table}[h]
    \centering
    \footnotesize{
   \begin{tabular}{lc@{\hspace{0.3cm}}c|c@{\hspace{0.3cm}}c|c@{\hspace{0.3cm}}c|c@{\hspace{0.3cm}}c|c@{\hspace{0.3cm}}c}
\toprule
   Methods  &      \multicolumn{2}{c}{\textit{Linear}} &    \multicolumn{2}{c}{\textit{Sigmoid}}         &       \multicolumn{2}{c}{\textit{Centered}}        &         \multicolumn{2}{c}{\textit{Skewed}}  &\multicolumn{2}{c}{\textit{Discrete}}           \\
 \multicolumn{1}{r}{$\small{(\times 10^{-2})}$} &  $\small{\text{MSE}_p}$ &  $\small{\text{KL}_p}$ &  $\small{\text{MSE}_p}$ &  $\small{\text{KL}_p}$  &  $\small{\text{MSE}_p}$ &  $\small{\text{KL}_p}$ &  $\small{\text{MSE}_p}$ &  $\small{\text{KL}_p}$ &  $\small{\text{MSE}_p}$ &  $\small{\text{KL}_p}$ \\
\midrule
Bin (start) & 2.59 & 6.81&8.07 &22.10 &0.48 & 0.98
& 0.51 & 2.37 & 2.74 & 6.36 \\
Kernel (start) & 2.23 & 5.68 & 7.60 & 21.15 & 0.54 & 1.10 & 0.68 & 2.84 & 2.40 & 5.63 \\ 
\midrule
Bin (CaPE)  &        1.83 &        4.46 &        5.29 &       14.59 &        \textbf{0.38} &        \textbf{0.78} &        0.40 &        1.72 &        \textbf{1.83} &        \textbf{4.31} \\
Kernel (CaPE)        &        \textbf{1.81} &        \textbf{4.41} &        \textbf{5.22} &       \textbf{14.47} &        0.40 &        0.81  &        \textbf{0.39} &        \textbf{1.70} &        1.85 &        4.36 \\

\bottomrule

\end{tabular}
}
    \caption{\textcolor{black}{Comparison between CaPE and a model that uses the calibration loss from the beginning (in the same way as CaPE, alternating with cross-entropy minimization) on synthetic data. All numbers are downscaled by $10^{-2}$.}
    }
    \label{tab:scratch_baseline}
\end{table}

\section{Correspondence of Real-World Datasets and the Scenarios of the Simulated Dataset}
\textcolor{black}{
Figure~\ref{fig:comp_real_curves} illustrates the similarity between the empirical probability curves of different real-world datasets and the different scenarios of our synthetic dataset. For the cancer survival dataset, the empirical probabilities are clustered in the center (0.4-0.6) similar to the \textit{Centered} scenario. For the weather forecasting dataset, the probabilities are uniformly distributed across 0.1-0.8 similar to the \textit{Linear} scenario. For the collision prediction dataset, the majority of the data points are clustered in the lower probability region which makes it similar to the \textit{Skewed} scenario.}
\begin{figure}[t]
    \centering
    \includegraphics[width=0.8\textwidth]{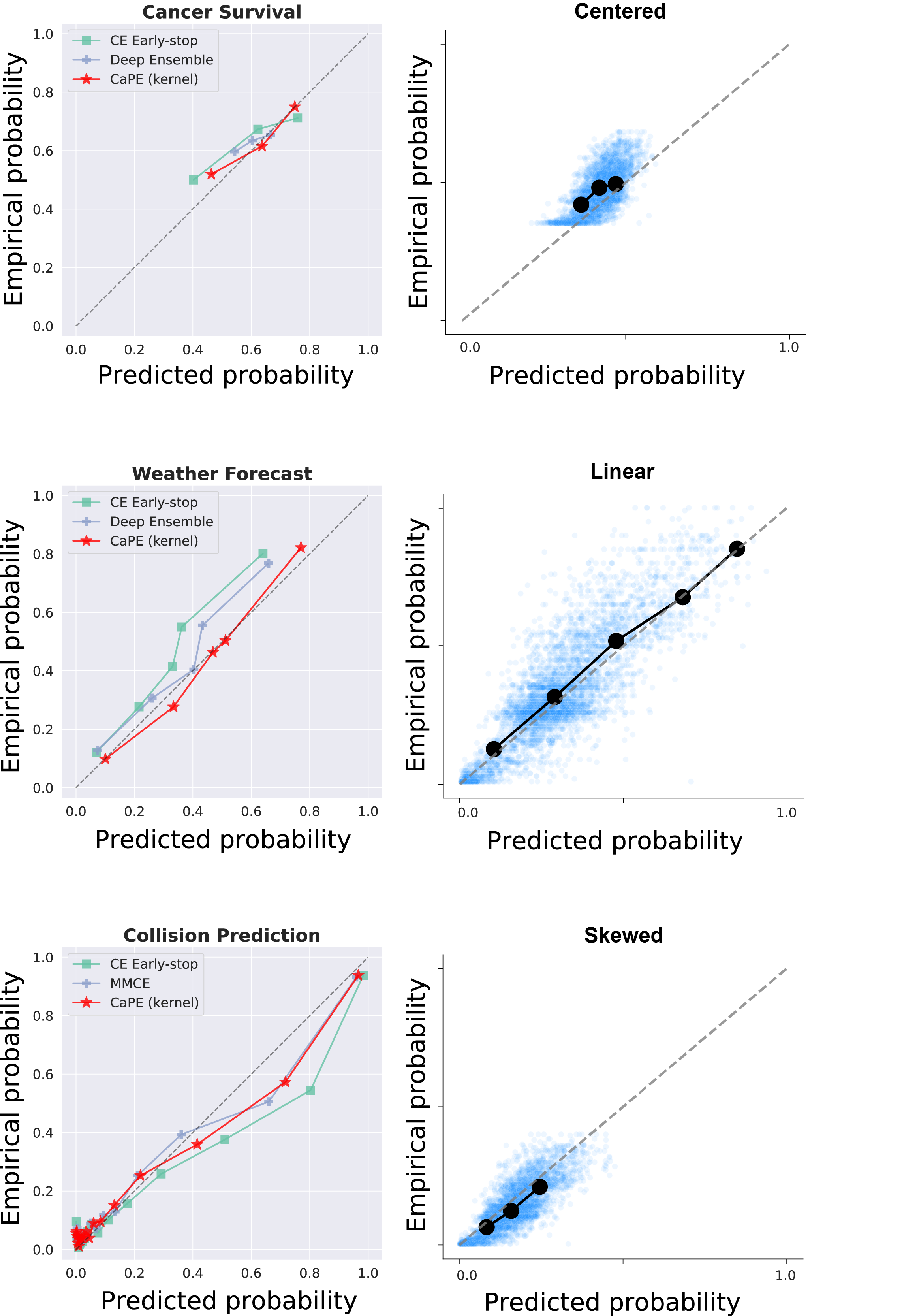}
    \caption{\textcolor{black}{\textbf{Comparison of reliability diagrams for real-world data with different scenarios of simulated data}.  For the cancer survival dataset, the empirical probabilities are clustered in around (0.4-0.6), similar to the \textit{Centered} scenario. For the weather forecasting dataset, the probabilities are uniformly distributed across 0.1-0.8, similar to the \textit{Linear} scenario. For the collision prediction dataset, the majority of the output probabilities are clustered in the lower probability region, similar to the \textit{Skewed} scenario.}}
    \label{fig:comp_real_curves}
\end{figure}

\end{document}